\theoremstyle{plain}
\newtheorem{theorem}{Theorem}[section]
\newtheorem{corollary}[theorem]{Corollary}
\theoremstyle{definition}
\newtheorem{definition}[theorem]{Definition}
\newtheorem{assumption}[theorem]{Assumption}
\theoremstyle{remark}
\title{Pelican Soup Framework: \\
A Theoretical Framework for Language Model Capabilities}
\author{Ting-Rui Chiang \\
  University of Southern California \\
  \texttt{tingruic@usc.edu} \\\And
  Dani Yogatama \\
  University of Southern California \\
  \texttt{yogatama@usc.edu} \\}
\begin{document}
\maketitle
\begin{abstract}
In this work, we aim to better understand how pretraining allows LLMs to (1) generalize to unseen instructions and (2) perform in-context learning, even when the verbalizers are irrelevant to the task.
To this end, we propose a simple theoretical framework, Pelican Soup, based on the logical consistency of the training data, the notion of ``reference-meaning association'', and a simple formalism for natural language processing tasks.
Our framework demonstrates how linguistic and psychology studies can inform our understanding of language models and is connected to several other existing theoretical results.
As an illustration of the usage of our framework, we derive a bound on in-context learning loss with our framework.
Finally, we support our framework with empirical experiments and provide possible future research directions.
\end{abstract}

\section{Introduction}

Large language models (LLMs) have demonstrated the capability to perform downstream natural language processing (NLP) tasks.
By following instructions, LLMs can perform tasks with zero-shot examples, demonstrating their reasoning capability.
With some input-output examples provided in the prompt, LLMs can also perform tasks without instructions, which is known as in-context learning (ICL) \citep{chowdhery2022palm}.
In particular, \citet{icl} show that LLMs can perform ICL for classification tasks even when the verbalizers (labels present in the demonstration) are semantically irrelevant to the task, e.g., foo/bar instead of negative/positive \cite{wei2023larger}.
However, it is unclear how pretraining leads to these capabilities.


To explain how LLMs acquire these capabilities, we propose a simple theoretical framework, the Pelican Soup framework in \S\ref{sec:framework}. 
Our framework is based on some very general assumptions, such as the logical consistency of training documents (Assumption~\ref{assumption:consistency}) and the variable meanings with which a phrase can be associated in different contexts (Assumption~\ref{assumption:pronoun}).
Our framework also includes a simple formalism for NLP tasks, which helps explain why LMs can follow instructions.


In \S\ref{sec:icl-bound}, we show how we can use this framework to analyze LLM's ICL capability, which mitigates the limitations of previous theoretical analyses. 
For example, in the first theoretical analyses of ICL, \citet{xie2022an} assumes that the general text for training LMs is from a hidden Markov model (HMM), which may be an oversimplification of natural language.
In comparison, our framework does not require this strong assumption.

Our framework offers unique insights compared to existing theories.
For example, although the generation process by \citet{zhang2023and} is more general than the HMM assumption by \citet{xie2022an}, it lacks groundings in real-world linguistic phenomena.  
Our framework mitigates this limitation, as it helps us better explain the physical meaning of the terms in the bound on the ICL loss and shows how the terms reflect real-world practices, such as the choice of verbalizers and the distribution of test examples.
Compared to the work by \citet{hahn2023theory}, our theory allows us to bound ICL loss without relying assumptions on the grammar that generate the data.

Furthermore, in \S\ref{sec:extension}, inspired by the cognitive science theories \citet{Fodor1975-FODTLO,LOT2,computational-origin-of-repr}, early development of artificial intelligence ~\citep{SISKIND199639,murphy2004big} and formal linguistics~\citet{carnap1968logische,bresnan1982mental,steedman1987combinatory,steedman1996surface,sag1999syntactic}, we provide an extension of our framework to explain why generalization is possible.
The extension also connects our framework to other theoretical results.
For example, our extension instantiates the \textit{complex skills} in the theory by \cite{arora2023theory}.
In \S\ref{sec:desc-length}, with an additional assumption, we can relate the ICL bound to the description length of the underlying input-output mapping function as done by \citet{hahn2023theory}.


Our work informs future LLM research directions.
Scientifically, we shed light on how properties of the training data contribute to the ICL and the instruction-following capabilities of LLMs.
The framework also shows how linguistic and psychology studies can inform our understanding of modern NLP.
Practically, we highlight the importance of acquiring knowledge about the interrelation between concepts through pretraining.
As shown in previous studies, the language modeling objective is inefficient for knowledge acquisition \cite{allen2023physics,chiang-etal-2024-retrieval}.
We suggest developing a better pretraining technique is crucial for future NLP development.  
Our proposed experimental setups can also facilitate future studies on LLMs' acquisition of the capabilities.

\section{The Pelican Soup Framework}
\label{sec:framework}

We aim our theoretical framework at explaining why LLMs can perform well on prompts for downstream tasks even though the prompts have a different distribution than the training corpus. 
Therefore, our framework includes assumptions qualifying the training corpus distribution in (\S{\ref{sec:dist-train}}) and a formalism for NLP tasks (\S{\ref{sec:dist-task}}).
Later, we will show how this framework allows us to bound the loss of ICL.

\section{Motivation}

The Pelican Soup game inspires our framework.
It is a game involving a puzzle master who has a story in their mind.
The game participants ask the puzzle master questions.
The goal of the participants is to recover the story based on the answer to the yes/no questions.
An observation is that once the participants recover the story, they can answer any questions about the story.
This is similar to ICL, where the model is required to ``figure out'' the underlying task based on the input-output pairs in the given demonstration.
Once the model identifies the underlying task, it will be able to predict the label of any input.
Therefore, the story has a similar role as a latent variable defining the task, and the yes/no questions are similar to the demonstrations for ICL. 
We include an example in \S\ref{sec:pelican-soup-example}.

Given the above observation, we can study ICL by considering why humans can solve Pelican Soup riddles.
We conjecture that this is because the person who makes the story and the ones who solve the riddle share the same (or similar) commonsense \citep{McCarthy1960ProgramsWC} about logical relationships among the events in this world \citep{schank1988scripts}.
Based on this shared commonsense, the game participants can eliminate hypothesis contradictory to the information that the asked yes/no questions reveal, which allows the participants to uncover the underlying story.
This idea of ``shared commonsense'' inspires us to introduce the notion of ``no ambiguity'' and ``consistency'' in our framework.




\subsection{Training Data Distribution}
\label{sec:dist-train}

Our theory framework is based on the interrelations between the semantics of sentences.
Thus, we first make a general assumption:
\begin{assumption}[No ambiguity]
    \label{assumption:sentence}
    For all pairs of sentences $x_1, x_2$ in a language, we assume humans can determine the relationship between $x_1$ and $x_2$ is contradiction, entailment or neutral.
\end{assumption}

Assumption~\ref{assumption:sentence} allows us to qualify sentences that can cooccur in a paragraph with non-zero probability.
That is, a paragraph generally does not contain self-contradictory information:
\begin{assumption}[Consistency]
    \label{assumption:consistency}
    Any paragraph with non-zero probability mass does not contain two sentences contradicts to each other. 
\end{assumption}

To show how modeling natural language leads to the ICL capability, we further introduce the notion of expression-meaning association as a latent variable. 
It reflects the fact that language allows us to associate meanings with the surface form expressions quite freely.
For example, when ``she'' or the human name ``Emily'' is present in a paragraph, it is associated with a certain person of certain characteristics, which exhibit its meaning.

Meanwhile, the usage of the expression depends on the meaning it is associated with and is consistent within its context.
For example, if ``she'' is associated with the sentence ``a person who has a house'', then by Assumption~\ref{assumption:consistency}, the sentence ``she has no property'' will have 0 probability mass.
Moreover, when we want to refer to ``the person who has a house'' instead of repeating the sentence again, we use ``she'' as an abbreviation.

For simplicity, we only consider single-word expressions and assume that such association is consistent throughout a document. 
\begin{assumption}[Expression-meaning association]
\label{assumption:pronoun}
    There is a set of words $\Gamma$ such that for every document in the training data, some $r \in \Gamma$ in the document is associated with a meaning represented as a set of sentences $Z_r$ with a prior distribution $\Pr(Z_r)$.
    Any $z \in Z_r$ present in the document can be replaced with $r$ without breaking the logical consistency of the document.
\end{assumption}


Adjectives such as ``good'' and ``bad'' are also expressions that can be associated with variable meanings, and their meanings also depend on the context.
However, the association may not be as variable as pronouns'. 
We reflect this with a prior distribution for the meaning with which an expression is associated later in our theoretical analysis.

Finally, we assume a document is a set of paragraphs where some expressions in $\Gamma$ are present:   
\begin{assumption}[Document]
\label{assumption:article}
    A document is a concatenation of paragraphs containing $r \in \Gamma$ separated with a delimiter $d$ (e.g., a blank line).
\end{assumption}

Based on the above assumptions, we can define a \textit{perfect LM}.
\begin{definition}[Perfect LM]
\label{def:perfect}
    We define an LM to be perfect if it only generates outputs satisfying the above assumptions. 
\end{definition}

In the following, we will show that a perfect LM can follow instruction and do ICL.

\subsection{A Formalism for NLP Tasks}
\label{sec:dist-task}

With Assumption~\ref{assumption:sentence}, we propose a simple formalism: For any objective or prescriptive NLP task \citep{rottger-etal-2022-two} that maps an input $x$ to a set of acceptable outputs $Y$, that task can be described with some task instructions $u$ such that $u \wedge x \wedge y$ does not cause a contradiction if and only if $y \in Y$.

For example, if the task is a generation task, such as solving a math word problem, the instruction $u$ may specify the format (e.g., ``Think step-by-step and output the answer after \#\#\#\#'') and the input $x$ is a math word problem to be solve.
LLMs' response is an acceptable answer if and only if the response does not cause a contradiction to $u$ and $x$, i.e., it follows the instruction and the reasoning process aligns with the problem $x$.
When it is a classification task, the instruction $u$ should contain descriptions about each label $y$.
For example, we can formulate the sentiment analysis task over movie reviews as $u = \langle v_+, v_- \rangle = \langle \text{``I like the movie''}, \text{``I dislike the movie''} \rangle$. 


Under this formalism, it is trivial that perfect LLMs can follow instructions and solve tasks. 
It is because when the prompt for an LLM is the concatenation of the instruction and the task input, $u; x$, Assumption~\ref{assumption:consistency} (the consistency assumption) ensures that a perfect LLM only generates $y$ such that $y$ is logically consistent to its prompt $u; x$. 
Based on our NLP task formulation, if $y$ is logically consistent to $u$ and $x$, then $y$ is in $Y$, meaning that $y$ is an acceptable answer.

Two intricate questions remain, how can LLMs perform ICL? and why is it possible for an LM to generalize to unseen instructions?
We discuss these two questions in \S\ref{sec:icl-bound} and \S\ref{sec:extension}.



\section{Bounding ICL Loss}
\label{sec:icl-bound}

To see how we can analyze ICL with our framework, we first setup a latent variable model with our framework using
Assumption~\ref{assumption:pronoun} (the expression-meaning association assumption).
In this latent variable model, the latent variables are the association between the expressions in the document and their meanings.
Assumption~\ref{assumption:consistency} (the consistency assumption) defines the support of the distribution of the latent variable conditioning on a context: Associations that break the logical consistency of the context have zero probability. 
Assumption~\ref{assumption:consistency} also implies that, give a prefix, continuations that imply invalid associations have zero probability mass.

We analyzing ICL with this latent variable model by focusing on the latent variable for the association between the verbalizers and the meanings.
Given a classification task, the underlying latent variable value $z^*$ that produces the demonstration is the association where verbalizers are associated with the class descriptions of the task (following the definition of tasks in \S\ref{sec:dist-task}).
For example, when ``foo'' is used to represent the positive class of an sentiment classification task, ``foo'' should be associated with the description of the positive class, i.e., ``I like the movie''.
If an LLM can infer this underlying association $z^*$ based on the examples in the demostration and output a continuation satisfying Assumption~\ref{assumption:consistency}, then it can predict the correct verbalizer for a test example.

Formally, adapting and combining the analyses by \citet{zhang2023and} and \citet{hahn2023theory}, we have the following theorem.
\begin{theorem}[Average ICL Likelihood]
Denote a sequence of input-output pairs as $S_t = x_1, r_1, d, x_2, r_2, \cdots, x_t, r_t, d$, where $r_i$ is the correct verbalizer with which the label of $x_i$ is associated for $i = 1, 2, \cdots, t$ and $d$ is the delimiter that separates the examples.
Let the description of a task that maps inputs to classes $\mathcal{Y}$ be $\{ v_y \}_{y \in \mathcal{Y}}$ and $z^*$ represents that the task descriptions $\{ v_y \}_{y \in \mathcal{Y}}$ are associated with the corresponding verbalizers $\{ r_y \}_{y \in \mathcal{Y}} \subset \Gamma$ used for ICL.
We have for any $T \in \mathbf{Z}^+$, the average log likelihood that the LM predicts the correct verbalizer $r_t$ for $t = 1, 2, \cdots, T$ is bounded as:
\begin{equation}
\begin{split}
    &\frac{1}{T}\sum_{t=0}^T \log \Pr(r_{t} | x_{t}, S_{t - 1}) \\
&\geq \begin{aligned}[t]
    & \frac{1}{T} \log \Pr(z^*) \\
    & + \frac{1}{T} \sum_{t=1}^{T} \log \Pr(r_{t}, d | x_{t}, z^*, S_{t - 1}) \\
    & + \frac{1}{T} \sum_{t=1}^{T} \log \frac{\Pr(x_{t} | z^*, S_{t - 1})}{\Pr(x_{t} | S_{t - 1})}
\end{aligned} 
\end{split}
\label{eq:main}
\end{equation}
\label{thm:main}
\end{theorem}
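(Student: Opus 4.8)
The plan is to treat $z^*$ as a latent variable, introduce it into the likelihood by marginalization, and then telescope the resulting per-example factors back into the joint likelihood of the entire demonstration. I would work throughout with the chain-rule decomposition of $\Pr(S_T)$ and read the three terms of the bound off at the very end, dividing by $T$ only in the final line.

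First I would discard the delimiter at no cost. Since $\Pr(r_t, d \mid x_t, S_{t-1}) = \Pr(r_t \mid x_t, S_{t-1})\,\Pr(d \mid r_t, x_t, S_{t-1}) \le \Pr(r_t \mid x_t, S_{t-1})$, we get $\log \Pr(r_t \mid x_t, S_{t-1}) \ge \log \Pr(r_t, d \mid x_t, S_{t-1})$ for each $t$, so it suffices to lower-bound $\sum_{t} \log \Pr(r_t, d \mid x_t, S_{t-1})$. I would then rewrite each term as a ratio, $\Pr(r_t, d \mid x_t, S_{t-1}) = \Pr(x_t, r_t, d \mid S_{t-1}) / \Pr(x_t \mid S_{t-1})$, which splits the sum into $\sum_t \log \Pr(x_t, r_t, d \mid S_{t-1}) - \sum_t \log \Pr(x_t \mid S_{t-1})$. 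The first of these sums telescopes: because $S_t = S_{t-1}, x_t, r_t, d$, the product $\prod_{t=1}^T \Pr(x_t, r_t, d \mid S_{t-1})$ collapses to $\Pr(S_T)$, so $\sum_t \log \Pr(x_t, r_t, d \mid S_{t-1}) = \log \Pr(S_T)$.

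Next comes the only genuinely substantive inequality. Writing $\Pr(S_T) = \sum_z \Pr(z)\,\Pr(S_T \mid z)$ and keeping only the single nonnegative summand at $z = z^*$, I obtain $\log \Pr(S_T) \ge \log \Pr(z^*) + \log \Pr(S_T \mid z^*)$. I would then decompose $\Pr(S_T \mid z^*)$ by the same telescoping chain rule, now conditioned on $z^*$, and factor each step as $\Pr(x_t, r_t, d \mid z^*, S_{t-1}) = \Pr(x_t \mid z^*, S_{t-1})\,\Pr(r_t, d \mid x_t, z^*, S_{t-1})$. Substituting back and pairing the $\log \Pr(x_t \mid z^*, S_{t-1})$ terms against the $-\log \Pr(x_t \mid S_{t-1})$ terms produced earlier yields exactly the ratio $\log\bigl(\Pr(x_t \mid z^*, S_{t-1}) / \Pr(x_t \mid S_{t-1})\bigr)$, leaving $\log \Pr(z^*)$ and the $\log \Pr(r_t, d \mid x_t, z^*, S_{t-1})$ terms standing; dividing by $T$ reproduces Equation~\eqref{eq:main}.

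The bookkeeping across the delimiter, the two telescopings, and the cancellation of the $x_t$-marginals is where I expect the main care to be needed rather than any deep obstacle: one must verify that the conditioning sets align so that the $z^*$-conditioned chain rule and the unconditioned one produce matching $\Pr(x_t \mid \cdot)$ factors. I would also remark that the inequality holds for \emph{any} value $z^*$ in the support, and that its usefulness comes from the consistency assumption (Assumption~\ref{assumption:consistency}): when $z^*$ is the association that binds each verbalizer $r_y$ to its class description $v_y$, a perfect LM places the $\Pr(r_t, d \mid x_t, z^*, S_{t-1})$ mass almost entirely on the correct verbalizer, so that middle term is close to zero and the bound becomes tight. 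Finally, I would note that the $t=0$ index on the left-hand side contributes no example term and is absorbed into the normalization, so the sum is effectively taken over the $T$ demonstration examples $t = 1, \dots, T$.
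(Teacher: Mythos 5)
Your proposal is correct and follows essentially the same route as the paper's proof: telescope the chain-rule factors into the joint $\Pr(S_T)$, lower-bound the marginalization over the latent association by the single summand at $z^*$, re-expand the $z^*$-conditioned joint, and absorb the delimiter via $\log\Pr(d \mid r_t, x_t, S_{t-1}) \le 0$. The only difference is cosmetic --- the paper first writes out the posterior $\Pr(z \mid S_t)$ and works with negated logs and upper bounds, whereas you divide out the $\Pr(x_t \mid S_{t-1})$ marginals up front --- but the algebra and the single substantive inequality are identical.
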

When the last two terms on the right-hand side are nonnegative, Eq.~\ref{eq:main} shows that the average log likelihood converges to $0$ in $\mathcal{O}(1/T)$. We discuss the terms on the right-hand side below.

The second term is $0$.
This is because Assumption~\ref{assumption:consistency} ensures that the probability $\Pr(r|x_t)$ is zero if $r$ is not an acceptable output for $x_t$ (i.e., when $r \ne r_t$ for classification tasks).

We then look at the last term. 
This term is $0$ when $x_t$ is conditionally independent of $z^*$ as assumed by \citet{zhang2023and}. 
However, this may be an oversimplification because, in natural language, the transition from $x_t$ to its next token depends on the content of $x_t$.
Fortunately, this assumption may actually be unnecessary for convergence because $x_t$ is an example from a downstream task related to $z^*$; it is likely that
\begin{equation*}
    \Pr(x_{t} | z^*, S_{t - 1}) \geq \Pr(x_{t} | S_{t - 1}),
\end{equation*}
which implies that this term is non-negative, and we can thus ignore this term. More rigorously, we can show the following corollary.
\begin{corollary}[Expected Average ICL Loss]
Assuming $z^*$ is the association suggesting that the sub-sequences separated by the delimiter $d$ are independent of each other, namely, $\Pr(x_1, r_1, d, x_2, r_2, d, \cdots, x_t, r_t, d | z^*) = \prod_{t=1}^T \Pr(x_t, r_t, d | z^*)$.
If the downstream task data distribution $\mathcal{D}_{X}$ follow $\Pr(x | z^*)$, then we can bound the average ICL likelihood over the distribution as:
\begin{equation}
\begin{split}
    &\mathop{\mathbb{E}}_{x_1, x_2, \cdots, x_T \sim \mathcal{D}_{X}^T } \left[ \frac{1}{T} \sum_{t=0}^T \log \Pr(r_t | x_{t}, S_{t-1})  \right] \\
    &\geq \frac{1}{T} \log \Pr(z^*).
    \label{eq:simplified}
\end{split}
\end{equation}
\label{cor:dst-exp}
\end{corollary}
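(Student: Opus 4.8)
The plan is to take the expectation of the bound in Theorem~\ref{thm:main} over $x_1, \dots, x_T \sim \mathcal{D}_X^T$ and show that the two correction terms either vanish or are non-negative, leaving only $\frac{1}{T}\log\Pr(z^*)$. First I would note that the $\frac{1}{T}\log\Pr(z^*)$ term does not depend on the $x_t$, so it passes through the expectation unchanged. The second term has already been argued to be identically $0$: conditioning on $z^*$ and $x_t$, Assumption~\ref{assumption:consistency} forces the only consistent continuation to be the correct verbalizer followed by the delimiter, so $\Pr(r_t, d \mid x_t, z^*, S_{t-1}) = 1$ and its logarithm vanishes pointwise, hence in expectation.

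The crux is the third term. Here I would invoke the independence hypothesis of the corollary, $\Pr(x_1, r_1, d, \dots, x_t, r_t, d \mid z^*) = \prod_{t} \Pr(x_t, r_t, d \mid z^*)$, which makes each block $(x_t, r_t, d)$ conditionally independent of the prefix $S_{t-1}$ given $z^*$; consequently $\Pr(x_t \mid z^*, S_{t-1}) = \Pr(x_t \mid z^*)$. Substituting this and using the assumption that the inputs are drawn from $\mathcal{D}_X = \Pr(\cdot \mid z^*)$, the inner expectation over $x_t$ (conditioned on the prefix) becomes
\begin{equation*}
\mathbb{E}_{x_t \sim \Pr(\cdot \mid z^*)}\!\left[\log\frac{\Pr(x_t \mid z^*)}{\Pr(x_t \mid S_{t-1})}\right] = \mathrm{KL}\!\left(\Pr(\cdot \mid z^*)\,\middle\|\,\Pr(\cdot \mid S_{t-1})\right) \geq 0 .
\end{equation*}
Applying the tower property to integrate out the randomness in $S_{t-1}$ (which depends on $x_1, \dots, x_{t-1}$) preserves non-negativity term by term, so the whole third term is non-negative in expectation. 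Combining the three contributions then yields Eq.~\ref{eq:simplified}.

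The hard part will be the bookkeeping around the KL step rather than any deep inequality. I would need to check that $\Pr(\cdot \mid z^*)$ and $\Pr(\cdot \mid S_{t-1})$ are genuinely two distributions over the same outcome space, so that the log-ratio is a legitimate KL integrand and the denominator dominates the numerator's support. This holds because $z^*$ has positive prior and is consistent with the prefix $S_{t-1}$ (being its generating latent), giving $\Pr(x_t \mid S_{t-1}) \geq \Pr(x_t \mid z^*)\,\Pr(z^* \mid S_{t-1}) > 0$ whenever $\Pr(x_t \mid z^*) > 0$, so the KL is finite and well-defined. The only conceptual subtlety is that the numerator is the \emph{true} data-generating conditional while the denominator is the LM's posterior-predictive marginal over all consistent latents; it is precisely the match between the sampling distribution $\mathcal{D}_X$ and $\Pr(\cdot \mid z^*)$ that converts this log-ratio into a bona fide KL divergence and lets the information inequality close the argument.
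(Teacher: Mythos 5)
Your proposal is correct and follows essentially the same route as the paper's own proof: the second term is dispatched as identically zero via the consistency/decoding-constraint argument, and the third term is handled by using the independence assumption to replace $\Pr(x_t \mid z^*, S_{t-1})$ with $\Pr(x_t \mid z^*)$, after which the expectation under $\mathcal{D}_X = \Pr(\cdot \mid z^*)$ becomes a KL divergence and Gibbs' inequality finishes the argument. Your added bookkeeping (the tower property over the prefix and the check that the denominator's support dominates the numerator's) is a welcome tightening of a step the paper states more loosely, but it is the same proof.
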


The right-hand side of Eq.~\ref{eq:simplified} characterizes the convergence rate and reflects the difficulty of doing ICL.
It reflects that when the association between label description and verbalizer is uncommon in the training data (e.g., associating ``positive'' with ``This movie is bad.''), doing ICL is more difficult. 

Note that we can extend the results to generation tasks.
For generation tasks, we usually use a separator (or a short text span) between $x_t$ and $r_t$.
We can see the separator as an expression that can be associated with different meanings, so the latent space for $z$ is the meaning the separator can be associated with and $z^*$ is the association between the separator and the task instruction (details in \S\ref{sec:bound-for-gen}).

\section{Generalization}
\label{sec:extension}

In \S\ref{sec:framework}, we assume a latent model, which, however, poses a dilemma.
Language can encode various meanings. 
If the sequence length is unconstrained, the corresponding semantic space can even be infinite.
However, if the latent space is infinite, the limited training data would not cover the entire latent space.
Without any assumption on the latent space (e.g., the relation between the states in the space), it is impossible to discuss the generalization to unseen latent states.
Thus, we provide an extension of our theoretical framework.

\begin{assumption}[Meaning representation]
    There exists (1) a finite set of \textit{atom concepts} $\Sigma$, (2) a knowledge base $\mathrm{KB}$ consisting of logical rules between the atom concepts in $\Sigma$, and (3) a function $f$ that can map any sentence in language to its meaning represented as a logical formula with operands in $\Sigma$ such that for any two sentences $s_1, s_2$, the logical relation  between $s_1$ and $s_2$ judged by humans is the same as $f(s_1)$ and $f(s_2)$ given the rules in the knowledge base $\mathrm{KB}$.
\end{assumption}

The three elements of this assumption correspond to theories in various fields.
The notion of atom concepts aligns with cognitive psychology studies that hypothesize the existence of a set of mental tokens \cite{Fodor1975-FODTLO,LOT2}
and a recent study \cite{computational-origin-of-repr} suggesting semantics can be encoded with the combination of only a few symbols. 
The notion of knowledge base follows the early formulation of AI~\citep{SISKIND199639,murphy2004big}.
As for the existence of a parsing function $f$, it follows the long history of linguistics studying the relationships between natural and formal languages \citep{carnap1968logische,bresnan1982mental,steedman1987combinatory,steedman1996surface,sag1999syntactic,frege1879begriffsschrift,peirce1883theory}.

This assumption suggests that 
if we have the parsing function $f$, solving NLP tasks only requires a finite-length program that can do logical reasoning by manipulating logical symbols according to logical induction rules. 
If a model can learn this program, then it can perform a task even if this task is not in the training data.
This assumption of a finite $\Sigma$ also instantiates the concept of ``language skills'' by \citet{arora2023theory}, and their theoretical results are thus applicable.

\begin{figure*}[h]
    \centering

\includegraphics[width=0.95\textwidth]{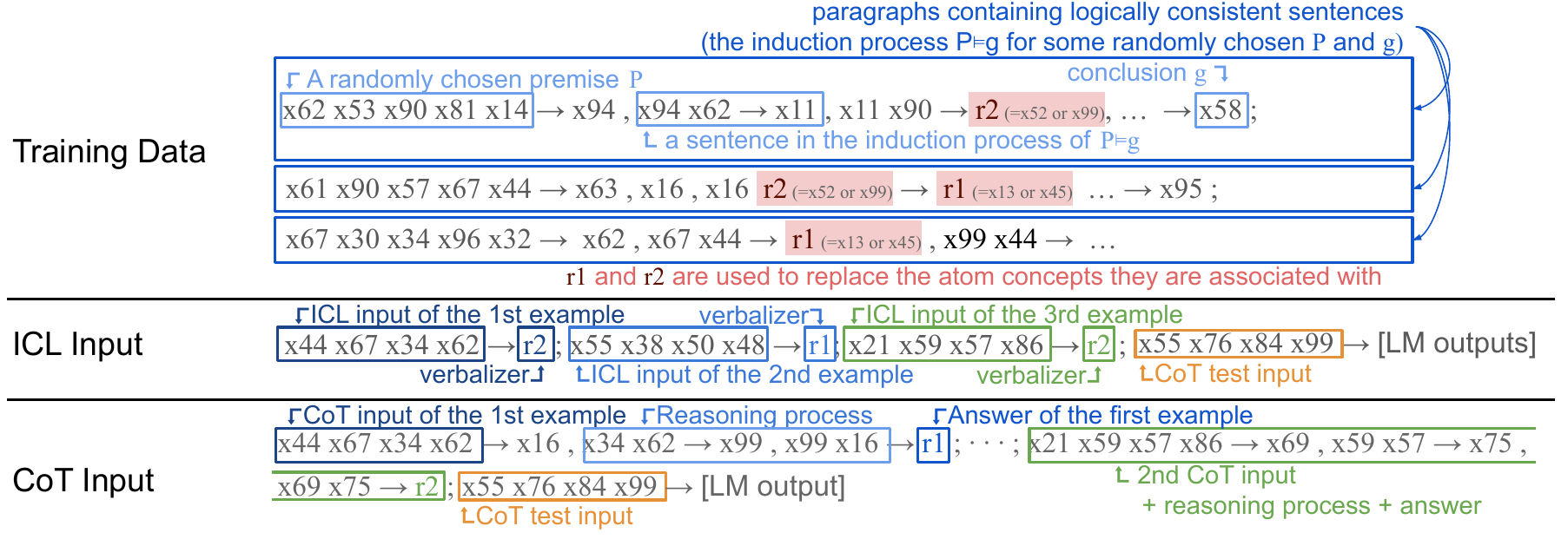}
    \caption{Calcutec examples for training, in-context learning (ICL), and chain-of-thought (CoT).}
    \label{tab:calcutec-examples}
\end{figure*}

\section{Relating to Description Length}
\label{sec:desc-length}

We can see $\Pr(r_t, d | x_t, z^*, S_{t-1})$ as the difficulty of the example by having an additional assumption:
\begin{assumption}
    In some documents in the training data, the paragraphs are constituted with steps in a logical induction process, with some steps randomly dropped.
    \label{assumption:step-by-step}
\end{assumption}
This kind of document may be prevalent in training data.
Essays arguing some claims are one example.
To be convincing, these essays should proceed like a proving process that induces their conclusions.
Documents describing a series of events can be another example, as events follow commonsense and develop progressively.

With this assumption and some regularity assumptions on the data distribution, we can have
\begin{equation}
    \Pr(r_t, d | x_t, z^*, S_{t-1}) \leq c \cdot \ell(x_t),
    \label{eq:desc-length}
\end{equation}
where $\ell(x_t)$ is the number of reasoning steps required to solve the task, and $c$ is a constant.
This $\ell(x_t)$ corresponds to the description length of the function that maps the inputs to their label in the loss bound by \citet{hahn2023theory} (more discussion in Appendix~\ref{sec:discuss-desc-length}).

\section{Inspecting Generalization Empirically}

Although we have shown that perfect LMs can follow instructions (\S\ref{sec:dist-task}) and perform ICL (\S\ref{sec:icl-bound}) when training data satisfy the assumptions in \S\ref{sec:framework}, it remains unverified whether imperfect real-world LMs exhibit the same capabilities.
For LMs to demonstrate these abilities, they must generalize well from the training set to the prompts used during inference.
However, analyzing how deep models generalize inherently requires additional assumptions about the model, which is beyond the scope of this paper.
Thus, to address this gap, we leverage our extended framework (\S\ref{sec:extension}) to characterize distribution shifts and design experiments that empirically examine the generalization of LM.


\subsection{Inspecting the ICL Capability}
\label{sec:calcutec}

Firstly, we present a synthetic setup, Calcutec, as a concrete instantiation of our theoretical framework.
With Calcutec, we show that Transformers can acquire ICL capability by modeling the linguistic characteristics specified in our framework.


\subsubsection{Calcutec}

\paragraph{Setup}
Following the extended framework in \S\ref{sec:extension}, we construct a pseudo-language:
\begin{itemize}
    \item Logic model: 
    We use a subset of propositional logic as our logic model.
We only consider Horn clauses~\citep{horn}, i.e., formulas in the form $A \wedge B \to C$.
    \item Atom concepts: We use 100 symbols as our set of atom concepts $\Sigma$.
    \item $\mathrm{KB}$: We generate a knowledge base by generating 5 formulas of the form $\sigma_1 \wedge \sigma_2 \to \sigma$ for each $\sigma \in \Sigma$, where $\sigma_1, \sigma_2$ are sampled from $\Sigma \backslash \{ \sigma \}$ uniformly at random.
    \item We have a set $\Gamma = \{r_i\}_{i=1}^4$ representing the expressions described in Assumption~\ref{assumption:pronoun}.
\end{itemize}

\paragraph{Training Dataset.} 
Following Assumption~\ref{assumption:article}, a document is a concatenation of paragraphs separated by delimiter ``;'' and ends with ``.''.
In our synthetic language model training dataset, each document contains 16 paragraphs.

Following Assumption~\ref{assumption:consistency}, each paragraph consists of sentences logically coherent to each other.
Because sentences in the real world are not ordered arbitrarily, we follow Assumption~\ref{assumption:step-by-step} and generate random paragraphs following the structure of logical proofs.
Each paragraph represents an induction process of $P \models g$ for some randomly selected $P \subset \Sigma$ and $g \in \Sigma$.
Each sentence in the paragraph is a sentence representing a reasoning step for $P \models g$.
We separate the sentences in the sequence by commas. 
To simulate the noise in the real world, we further apply perturbations that skip some steps with a skip rate $p_{skip}$.
\footnote{Models can acquire in-context learning ability even with $p_{skip} = 0$ (Figure~\ref{fig:icl-no-drop} in the appendix).}

Following Assumption~\ref{assumption:pronoun}, after we generate a document, we randomly associate some symbols $A, B \subset \Sigma$ with $r_a, r_b \in \Gamma$ respectively.
We reflect this association in the generated document by replacing symbols in $A$ and $B$ with expression $r_a, r_b \in \Gamma$ respectively (details in Appendix~\ref{sec:calcutec-train-gen}).


\paragraph{Downstream Tasks.}
Following the formalism in \S\ref{sec:dist-task}, we define a binary classification task by defining the descriptions $v_+$ and $v_-$ of the positive and negative classes, respectively.
We use the disjunctions of atom concepts (i.e. in the form of $a_1 \vee a_2 \vee \cdots$) as descriptions of classes.
We create five downstream tasks using different disjunctions.
Each input is a subset of variables in $\Sigma$ from which we ensure only one of the classes can be induced.

\paragraph{Demonstration.}
We represent an input-label pair as  $x_1 x_2 \cdots \to r$, where $x_1 x_2 \cdots$ is the input part and $r \in \{r_+, r_- \} \subset \Gamma$ is an expression in $\Gamma$ serving as the verbalizer.

\paragraph{Chain-of-thought.}
A chain-of-thought is in the same format as the training data but ends with an expression $r \in \{r_+, r_-\}$, e.g., $x_1 x_2 \cdots \to x_3 ; x_3 \cdots x_4 \to r_+$.
This chain-of-thought reflects the step-by-step induction process from the inputs to the label. We show an example in Figure~\ref{tab:calcutec-examples}.


\subsubsection{Distribution Shifts}
\label{sec:inspecting-dist-shifts}

We make experimental designs to simulate the real-world distribution shifts from training to inference:

\paragraph{Format Mismatch.} The reasoning steps are present in the training data but not in the prompts.

\paragraph{Verbalizer Mismatch.}
When we choose the expressions in $\Gamma$, we assign the probability mass 45\%, 45\%, 5\%, 5\% to $r_1, r_2, r_3, r_4$.
In this way, we can inspect the effect of using less frequent verbalizers.

\paragraph{Unseen Tasks.}
To investigate whether the model can generalize to a new combination of formulas unseen in the training data when we generate our training data, we ensure that the expressions in $\Gamma$ are only associated with the disjunctions of two atom concepts $s_1, s_2$ from a strict subset of all possible combinations $\Sigma \times \Sigma$.
We then test the trained model on tasks where $v_+$ and $v_-$ are the disjunctions of the unseen combinations.
We also test the models on tasks where $v_+$ and $v_-$ are the disjunctions of three atom concepts $\in \Sigma \times \Sigma \times \Sigma$.


\subsubsection{Experiment Details}

We use $p_{skip} = 0.25$ in our experiment, generating 60,000 documents with 16 paragraphs (as described above).
Among them, we use 10k for validation.
We train a 6-layer Transformer~\citep{transformer} model until the loss on the validation set converges using the standard autoregressive loss.
We include additional setups in \S\ref{sec:extra-setups}.

\subsubsection{Results and Discussion}

\begin{table}
    \centering
    \begin{tabular}{c | c  c | c c }
    \toprule
            & \multicolumn{2}{c|}{$r_1, r_2$} & \multicolumn{2}{c}{$r_3, r_4$} \\
    Task    & ICL  & CoT  & ICL  & CoT  \\
    \midrule
    Single & 57.1 & 91.7 & 55.6 & 92.0 \\
    Double & 53.5 & 76.3 & 51.1 & 77.1 \\
    Triple & 53.0 & 73.0 & 51.7 & 73.4 \\ 
    \bottomrule
    \end{tabular}
    \caption{The 4-shot accuracy of ICL versus chain-of-thought (CoT) using different verbalizers.}
    \label{tab:cot-y12}
\end{table}

Figure~\ref{fig:icl} shows that LMs trained with Calcutec can perform in-context learning.
This evidence supports our Pelican Soup framework and aligns with our theoretical analysis in \S\ref{sec:icl-bound}:
The ICL accuracy follows a trend $\exp(c /T)$ for some $c < 0$, as our theoretical result in Eq.~\ref{eq:main}.

We further inspect the ICL performance under the distribution shifts described in \S{\ref{sec:inspecting-dist-shifts}}.
For the infrequent verbalizer, we observe similar performance regardless of the frequency of the verbalizers ($r_1, r_2$ versus $r_3, r_4$).
For the unseen tasks, Figure~\ref{fig:icl} shows that the model has similar performance in tasks defined with seen and unseen combinations of atom concepts (dotted and solid lines), even generalizing to tasks defined with three latent concepts (green lines).
In sum, the results show that the model can generalize well under several distributional shifts.


\begin{figure}
    \centering
    \begin{subfigure}{0.235\textwidth}
        \centering
        \includegraphics[width=\textwidth]{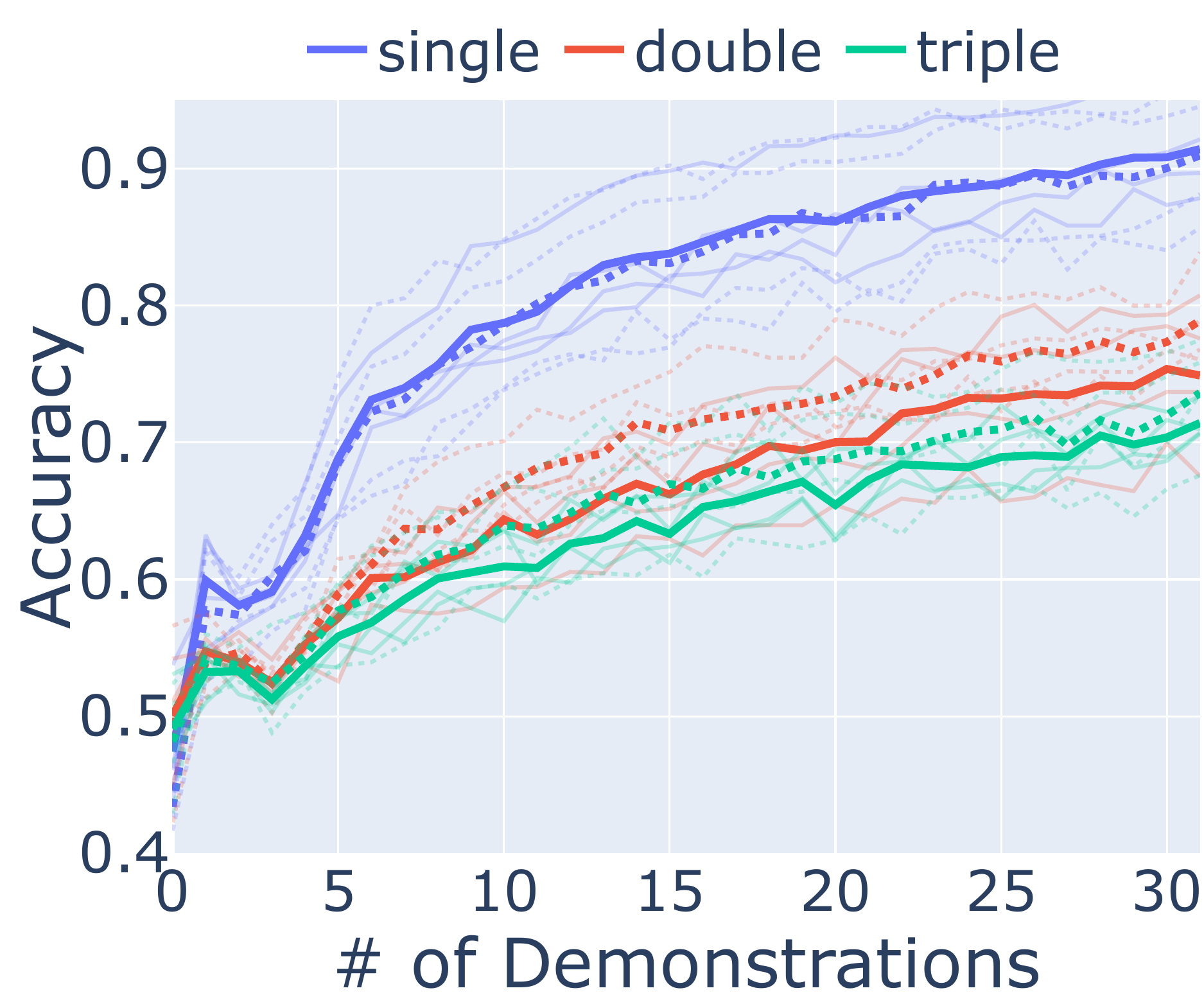}
        \caption{Verbalizers = $r_1, r_2$}
    \end{subfigure}
    \begin{subfigure}{0.235\textwidth}
        \centering
        \includegraphics[width=\textwidth]{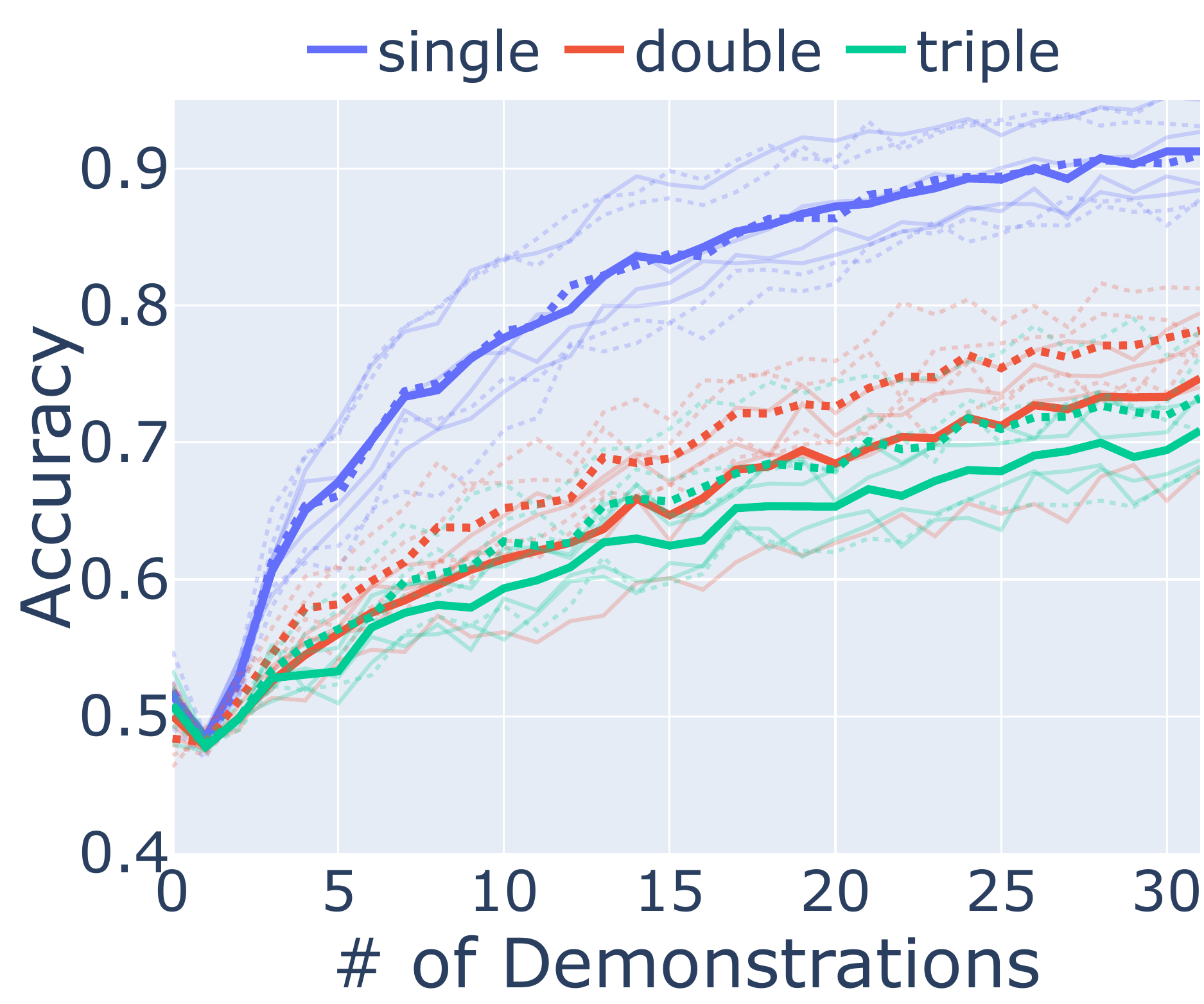}
        \caption{Verbalizers = $r_3, r_4$}
    \end{subfigure}
    
    \caption{
    In-context learning accuracy with Calcutec when using different verbalizers ($r_1, r_2$ or $r_3, r_4$).
    The dotted lines represent the performance of \textit{unseen combinations} described in \S{\ref{sec:inspecting-dist-shifts}}. The colors represent the number of atom concepts each class ($v_+$ or $v_-$) is associated with.
    The main lines represent the average accuracy of 5 tasks.
    Lines in the lighter color represent the individual tasks. 
    }
    \label{fig:icl}
\end{figure}

We also experiment with 4-shot learning using chain-of-thought.
Table~\ref{tab:cot-y12} shows that the model also benefits from chain-of-thought.
We conjecture that it is because chain-of-thought has a format more similar to the format for training.

\subsection{Inspect Instruction-following Capability}

In addition to ICL, we inspect another intriguing capability of LLMs, the capability of following unseen instructions.
Our extended framework in \S\ref{sec:extension} suggests that we can see following unseen instructions as doing a sentence completion task that involves atom concepts (or say ``skills'' in \citet{arora2023theory}) whose composition is different from the ones involved when completing sentences in training data.
We also suggest that LMs acquire the capability of following these unseen instructions by modeling the interrelations between the atom concepts involved.
To show the plausibility of this, we experiment with an arithmetic task.

\subsubsection{Arithmetic Task}
\label{sec:sum}

\begin{table}[]
    \centering
    \begin{tabular}{l|l}
    \toprule
    Single-step & \small{0 + 1 * 2 + 3 * 4 = 10 .} \\
    Multi-step & \small{0 + 1 * 2 = 2 . 3 * 4 = 12 . 2 + 12 = 14 .} \\
    Test - seen & \small{2 * 3 + 4 * 5 + 6 = } \\
    Test - unseen & \small{2 + 10 + 4 * 11 * 6 =} \\
    \bottomrule
    \end{tabular}
    \caption{Training (first two rows) and test (last two rows) examples used in the arithmetic task \S\ref{sec:sum}.}
    \label{tab:sum-example}
\end{table}

We utilize the algebraic structure of the integers under modulo addition/multiplication to construct a language where each expression is constantly associated with a meaning and the atom concepts include 0 to 15, the symbol ``+'', ``*'', ``='', ``.''.
Each sentence in this language is in the form of an equation with additions and multiplications.

We generate LM training data in a multi-step and a single-step setup.
In the multi-step setup, each example contains the steps required to compute the result, each of which involves two to five numbers (details in \S\ref{sec:summation-details}).
We expect that LMs can learn the interrelation between atom concepts by modeling these steps. 
In the single-step setup, the final result is computed in a single step.

To simulate the scenario where completing the prompts in the test set involves a different distribution of atom concepts from that of the training examples,
we ensure that the five numbers for each training example are either all from $\{0, \cdots,  9\}$ or all from $\{6, \cdots 15\}$.
We then evaluate the model with a \textit{seen} and an \textit{unseen} setups. 
In the \textit{seen} setup, we prompt the model with 5 numbers sampled in the same way as the training set. 
In the \textit{unseen} setup, we prompt the model to complete equations with 5 numbers where the first, third, and fifth numbers are from $\{0, \cdots, 5\}$ and the other two numbers are from $\{10, \cdots, 15\}$ (examples in Table~\ref{tab:sum-example}).

\subsubsection{Results and Discussion}

\begin{table}[]
\centering
\begin{tabular}{lcc}
\toprule
 \footnotesize{test $\backslash$ train}       & Multi-step & Single-step \\
\midrule
Seen   &   97.5 {\small(0.3)}  &  99.3 {\small(0.6)} \\
Unseen &   75.3 {\small(2.0)}  &  49.7 {\small(1.0)} \\
\bottomrule
\end{tabular}
\caption{The average accuracy of LMs trained for the arithmetic tasks with 5 random seeds.}
\label{tab:sum-results}
\end{table}

Table~\ref{tab:sum-results} shows that all LMs achieve a near-perfect accuracy in the \textit{seen} setup but the LMs trained with the multi-step training set achieve significantly better accuracy in the \textit{unseen} setup.  
This indicates that modeling the multiple steps in the training data may allow the model to learn the interrelation between the atom concepts and generalize to prompts that involve unseen compositions of atom concepts to some extent, as suggested by our extended framework in \S\ref{sec:extension}.
This is also aligned with the success of symbolic chain-of-thought distillation~\citep{li-etal-2023-symbolic,hsieh-etal-2023-distilling,shridhar-etal-2023-distilling}.

\begin{table}
    \centering
    \begin{tabular}{l | c  c  c  c}
        \toprule
        task & SST-2 & CR & MR & Subj \\
        \midrule
        direct  & 63.0 & 61.7 & 59.2 & 51.0 \\
        direct w/ foo/bar & 51.7 & 53.7 & 52.2 & 57.1 \\
        pronoun & 65.3 & 62.9 & 56.7 & 62.2 \\
        \bottomrule
    \end{tabular}
    \caption{The accuracy of using task-specific templates/verbalizers (direct)~\citep{min-etal-2022-noisy} v.s. using task-agnostic templates/pronouns for 16-shot in-context learning with GPT2-Large. }
    \label{tab:pronoun-verbalizer}
\end{table}

\section{Real-world Evidence}
\label{sec:pronoun-icl}

We inspect how likely that LMs learn ICL by modeling the expression-meaning association as our framework suggests.
We specifically inspect a small model GPT2-Large and the setting in which we use pronouns as verbalizers.
Because pronouns are reference words frequently associated with different meanings, we expect that even a small model can do ICL well with pronouns.
To do so, we experiment with the template \textit{``\texttt{[input]}'', \texttt{[verbalizer]} thought.} and use ``he'', ``she'' as verbalizers.
We follow the setup in \citet{min-etal-2022-noisy} and compare the accuracy of binary classification tasks, including SST-2~\cite{socher-etal-2013-recursive}, CR~\cite{cr}, MR~\cite{pang-lee-2005-seeing}, and Subj~\cite{subj}, using GPT2-Large.

Table~\ref{tab:pronoun-verbalizer} shows that this task-agnostic template with pronouns is competitive with those task-specific templates.
This aligns with our conjecture that the high frequency of pronouns in the training set allows smaller LMs to acquire the ICL capability.
It also shows that, unlike what \citet{wei2023larger} claims, not only larger models can do in-context learning with task-irrelevant verbalizers.
On the other hand,  using task-irrelevant verbalizers ``foo'' and ``bar'' has lower performance (which aligns with the observation of \citet{wei2023larger}). 
This may be because learning to perform ICL with those low-frequency words requires more training data, and thus a larger model size.

\section{Related Work}
Since \citet{icl} discovered large language models' in-context learning ability, some theoretical works have attempted to explain how language models acquire this ability.
Based on a hidden Markov model (HMM) assumption on the language generation process, \citet{xie2022an} suggested that in-context learning is an implicit Bayesian inference process.
\citet{hahn2023theory} defined the generation process with Compositional Attribute Grammar, which is weaker than the HMM assumption, explaining the in-context learning ability with the minimum description length.
They also studied the compositionality of natural language tasks with function compositions.
\citet{zhang2023and} assumed a more general latent variable model.
\citet{arora2023theory} analyze the emergence of skills based on the scaling law~\citep{hoffmann2022an}.
While their analysis assumes a set of atomic skills for NLP tasks, our framework is based on a set of atom concepts. 

There were also many empirical studies on the in-context learning ability.
Some works focused on the effect of the instruction \citep{webson-pavlick-2022-prompt,lampinen-etal-2022-language,pmlr-v203-jang23a}, while some focused on the examples in the demonstration \citep{liu-etal-2022-makes,lu-etal-2022-fantastically,sorensen-etal-2022-information,min-etal-2022-rethinking,yoo-etal-2022-ground,ye2022complementary,chang2022careful,ye2022complementary,wang2023large,kossen2023context}.
\citet{shin-etal-2022-effect} found that not all training corpora led to in-context learning ability.
\citet{prystawski2023think} used synthetic data to suggest that the pretraining dataset's locality structure contributes to the reasoning steps' effectiveness.
\citet{wang2023towards} studied the reasoning steps in chain-of-thought.
\citet{akyurek2024context} formulated ICL as learning a formal language from demonstrations and benchmarked model families.

Some previous work studied in-context learning as a meta-learning-like problem \citep{chen-etal-2022-meta}.
Some works focused on the relationships between in-context learning and optimization algorithms \citep{garg2022what,von2022transformers,rek2023what,fu2023transformers,guo2023transformers}.
Some works inspected the mechanism of ICL in transformer models \citep{hendel-etal-2023-context,bietti2023birth,todd2023function,shen2023pretrained,bai2023transformers}.
\citet{chan_data_2022} studied the properties of dataset distribution that could contribute to the in-context learning ability.
\citet{litransformers} provided generalization bounds based on the stability of Transformer models and the distance of downstream tasks.
We instead focus on how the pretraining data in natural language contributes to the ICL learning ability. 

\section{Conclusion and Future Work}

In this work, we propose a framework that explains how linguistic phenomena in the training corpus lead to LLMs' ICL and instruction-following capability. 
Compared with previous works \cite{xie2022an,zhang2023and}, our latent model better reflects the complexity of language.
By introducing the notion of knowledge base and logic system, our framework provides insights into how LLMs can generalize from pretraining to downstream tasks, instantiating a setup compatible with the assumptions made by \citet{arora2023theory}.
We also relate our bound to the function description length discussed by \citet{hahn2023theory}.

Our framework illuminates a few possible directions for improving LLMs:

\begin{enumerate}
    \item Our work highlights the importance of learning the interrelation between meanings.
As previous works have shown that the language modeling objective is inefficient for this purpose \citep{allen2023physics,chiang-etal-2024-retrieval}, we suggest that developing a more sophisticated learning algorithm is crucial.
\item Our theory illustrates how linguistic studies on parsing functions can inform LLM research, offering a new theoretical foundation for analyzing LLM generalization.
\item The experimental results of our arithmetic task show that Transformer models can generalize to unseen prompts by modeling the intermediate step-by-step reasoning process.
This may be related to the success of the symbolic chain-of-thought distillation~\citep{li-etal-2023-symbolic,hsieh-etal-2023-distilling,shridhar-etal-2023-distilling}.
Investigating and strengthening the mechanism can improve the efficiency of LM training. 
\end{enumerate}

\section{Limitations}

A limitation of our framework is that, as most theoretical studies do, we simplify the real-world scenario to draw insights.
One simplification we make is that, we do not take the noise in LLMs' training data into account.
As we may need to make more assumption on the noise to establish a generic theoretical result, we leave it for future study.
Another simplification is that, we assume that the language model can perfectly model the distribution of natural language (i.e., only generates outputs satisfying the assumptions in \S{\ref{sec:dist-train}}, especially Assumption~{\ref{assumption:consistency}}).
However, it is unlikely to be the case in practice.
On the one hand, the training data may not cover all the test cases.
On the other hand, LLMs may not perfectly generalize from the training set.
We need to make more assumptions on the training/test data distribution and/or have a deeper understanding on how deep learning models generalize to alleviate this assumption.
Therefore, we deem this out of the scope of this paper.

We also note some limitations of our experiments.
As commonly seen in theoretical work, our experiment setup in \S\ref{sec:calcutec} is a simplification of natural language.
Our intention is to isolate and illustrate specific theoretical mechanisms without confounding variables introduced by the full complexity of natural language. 
The experiment in \S\ref{sec:pronoun-icl} shows the correlation between better ICL performance and the use of pronouns as verbalizer.
Although this serves as evidence aligning with our theoretical framework, more studies are required to establish the causal relationship.

\bibliography{custom}

\appendix

\section{A Pelican Soup Example}
\label{sec:pelican-soup-example}

We include an example of a Pelican Soup game:

\textbf{Puzzle master:} A men walks into a restaurant and orders pelican soup. After taking a sip, he loses his mind. Why?

\textbf{Participants:} Is it because the soup is not cooked well?

\textbf{Puzzle master:} No.

\textbf{Participants:} Is it because the soup toxic?

\textbf{Puzzle master:} No.

\textbf{Participants:} Does the soup remind him something?

\textbf{Puzzle master:} Yes.

\textbf{Participants:} Did someone cook pelican soup for him?

\textbf{Puzzle master:} Yes.

\textbf{Participants:} Is that person still alive?

\textbf{Puzzle master:} No.

For the sake of aesthetics, we do not include the latent story here. If you are interested, please check it online.

\section{Proof of Theorem~\ref{thm:main}}
\label{sec:icl-proof}

Let $S_t = x_1, r_2, d, x_2, r_2, d \cdots, x_t, r_t, d$.

\begin{align*}
&\Pr(z | S_t) \\
=& \frac{\Pr(S_t | z)\Pr(z)}{\sum_z \Pr(S_t | z) \Pr(z)} \\
=& \frac{\Pr(z) \prod_{i=1}^t \Pr(x_i, r_i, d | z, S_{i-1})}{
\sum_{z'} \Pr(z') \prod_{i=1}^t \Pr(x_i, r_i, d | z', S_{i-1})}
\end{align*}

\begin{align*}
&P(x_{t + 1}, r_{t + 1}, d | S_{t}) \\
=& \sum_{z} \Pr(x_{t + 1}, r_{t + 1}, d | z, S_{t}) \Pr(z| S_t) \\
=& \frac{\sum_{z} \Pr(z) \prod_{i=1}^{t+1} \Pr(x_i, r_i, d | z, S_{i-1})}{\sum_{z'} \Pr(z) \prod_{i=1}^t\Pr(x_i, r_i, d | z', S_{i-1})}.
\end{align*}

Thus, it holds that

\begin{align*}
&-\sum_{t=0}^T \log \Pr(x_{t + 1}, r_{t + 1}, d | S_{t}) \\
=&  
\begin{aligned}[t]
&-\sum_{t=0}^T \Big( \\
&\log  \sum_{z} \Pr(z) \prod_{i=1}^{t+1} \Pr(x_i, r_i, d | z, S_{i-1}) \\
& -\log  \sum_{z} \Pr(z) \prod_{i=1}^{t} \Pr(x_i, r_i, d | z, S_{i-1})  \\
&\Big)
\end{aligned}
\\
=& -\log \sum_{z} \Pr(z | K) \prod_{i=1}^{T + 1} \Pr(x_i, r_t, d | z, S_{i-1}) \\
\leq& -\log \Pr(z^*) \prod_{i=1}^{T + 1} \Pr(x_i, r_t, d | z^*, S_{i-1}) \\
=& -\log \Pr(z^*) \\
&- \sum_{i=1}^{T + 1} \log \Pr(x_i, r_i, d | z^*, S_{i-1}) \\
=& -\log \Pr(z^*) \\
&- \sum_{i=1}^{T} \log \Pr(r_{i}, d | x_{i}, z^*, S_{i - 1}) \\
&- \sum_{i=1}^{T}  \log \Pr(x_{i} | z^*, S_{i - 1}).
\end{align*}

Thus,

\begin{align*}
&-\frac{1}{T}\sum_{t=0}^T \log \Pr(r_{t} | x_{t}, S_{t - 1}) \\
\leq& - \frac{1}{T} \bigg( 
\log \Pr(z^*) \\
&+\sum_{i=1}^{T} \log \Pr(r_{t}, d | x_{t}, z^*, S_{t - 1}) \\
&+ \sum_{i=1}^{T}  \log \frac{\Pr(x_{t} | z^*, S_{t - 1})}{\Pr(x_{t} | S_{t - 1})} ) \\
&+ \frac{1}{T}\sum_{i=1}^{T} \log \Pr(d | r_t, x_t, S_{t-1} \bigg). \\
\leq& - \frac{1}{T} \bigg(
\log \Pr(z^*) \\
&+\sum_{i=1}^{T} \log \Pr(r_{t}, d | x_{t}, z^*, S_{t - 1}) \\ 
&+ \sum_{i=1}^{T}  \log \frac{\Pr(x_{t} | z^*, S_{t - 1})}{\Pr(x_{t} | S_{t - 1})} \bigg)
\end{align*}

\section{Bounding ICL for Generation}
\label{sec:bound-for-gen}

We can extend the ICL bound shown in Theorem~\ref{thm:main} to generation tasks.
When using ICL for generation, in the demonstration part of the prompt, people usually include a separator between the input and the output.
For example, if the task is to translate a sentence to English, people may write the demonstration in the format  ``[input] should be converted to [output].''
This ``should be converted to'' is an expression that can be associated to the ``meaning'' of the task, namely ``should be mapped to a translation in English.''
The intuition of the theorem is that, a language model may be able to uncover this association through the demonstration examples via the same mechanism as how it perform ICL for classification tasks.

\begin{theorem}[Average ICL Loss for Generation]
Let $z^*$ represent the association between a separator $\xi$ and a task descriptions such that for any input $x$, $P(x, \xi, r | z^*) > 0$ only if $r$ is one of the correct outputs as specified by the task. 
Let $K$ be the constraints used for decoding, and $\dot{g}$ be the event where a document follows certain formats.
Let $R_t$ be the set of correct outputs for $x_t$, $S_t = \{ x_1, \xi, r_2, d, x_2, \xi, r_2, \cdots, x_t, \xi, r_t, d | r_1 \in R_1, r_2 \in R_2, \cdots, r_t \in R_t \}$.
We have for any integer $T > 0$, the average cross-entropy loss of ICL is bounded as:
\begin{equation}
\begin{split}
    &-\frac{1}{T}\sum_{t=0}^T \log \Pr(R_{t} | x_{t}, \xi, S_{t - 1}) \\
&\leq \begin{aligned}[t]
    & - \frac{1}{T} \log \Pr(z^*) \\
    & - \frac{1}{T} \sum_{i=1}^{T} \log \Pr(R_{t}, d | x_{t}, \xi , z^*, S_{t - 1}) \\
    & - \frac{1}{T} \sum_{i=1}^{T} \log \frac{\Pr(x_{t}, \xi | z^*, S_{t - 1})}{\Pr(x_{t}, \xi | S_{t - 1})}
\end{aligned} 
\end{split}
\label{eq:icl-gen}
\end{equation}
\label{thm:icl-gen}
\end{theorem}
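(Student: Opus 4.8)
The plan is to replay the argument already used for Theorem~\ref{thm:main} almost verbatim, making the two substitutions that distinguish the generation setting: the single verbalizer $r_i$ is replaced by the event $R_i$ that the $i$-th output is \emph{some} acceptable output, and a separator $\xi$ is inserted after each input $x_i$. Concretely, I would first write the posterior over the latent association via Bayes' rule,
\[
\Pr(z \mid S_t) = \frac{\Pr(z)\prod_{i=1}^{t}\Pr(x_i,\xi,R_i,d \mid z, S_{i-1})}{\sum_{z'}\Pr(z')\prod_{i=1}^{t}\Pr(x_i,\xi,R_i,d \mid z', S_{i-1})},
\]
and then express the one-step predictive $\Pr(x_{t+1},\xi,R_{t+1},d \mid S_t)$ by marginalizing over $z$, obtaining a ratio whose numerator and denominator are the order-$(t+1)$ and order-$t$ marginal likelihoods exactly as in the classification proof.

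Given this ratio form, summing $-\sum_{t=0}^{T}\log\Pr(x_{t+1},\xi,R_{t+1},d\mid S_t)$ telescopes, collapsing to $-\log\sum_{z}\Pr(z)\prod_{i=1}^{T+1}\Pr(x_i,\xi,R_i,d\mid z,S_{i-1})$, since the base factor at $t=0$ is $\sum_z\Pr(z)=1$. The key inequality is to retain only the $z=z^*$ summand: because every term is non-negative, the marginal is at least $\Pr(z^*)\prod_{i=1}^{T+1}\Pr(x_i,\xi,R_i,d\mid z^*,S_{i-1})$, so its negative logarithm is at most $-\log\Pr(z^*)-\sum_{i}\log\Pr(x_i,\xi,R_i,d\mid z^*,S_{i-1})$. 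A chain-rule split of each $z^*$ factor into $\Pr(x_i,\xi\mid z^*,S_{i-1})$ and $\Pr(R_i,d\mid x_i,\xi,z^*,S_{i-1})$ produces the second term and the numerator of the third term of Eq.~\ref{eq:icl-gen}. To recover the left-hand side, I would expand the predictive factor as $\Pr(x_t,\xi,R_t,d\mid S_{t-1})=\Pr(R_t\mid x_t,\xi,S_{t-1})\,\Pr(x_t,\xi\mid S_{t-1})\,\Pr(d\mid R_t,x_t,\xi,S_{t-1})$; the $\Pr(x_t,\xi\mid S_{t-1})$ factor pairs with $\Pr(x_t,\xi\mid z^*,S_{t-1})$ to form the log-ratio term, while $\log\Pr(d\mid R_t,x_t,\xi,S_{t-1})\le 0$ is discarded, preserving the inequality direction precisely as before.

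The main obstacle I anticipate is purely bookkeeping about set-valued conditioning rather than any new analytic step: since $R_i$ denotes the event $\{\text{output}\in R_i\}$ and $S_t$ is a union over all sequences with acceptable outputs, I must justify that the multiplicative decomposition $\Pr(S_t\mid z)=\prod_{i=1}^{t}\Pr(x_i,\xi,R_i,d\mid z,S_{i-1})$ still holds and that marginalizing over $r\in R_i$ commutes with the nested conditioning. This follows from the ordinary chain rule applied to the composite events $\{\text{input }x_i,\ \text{separator }\xi,\ \text{output in }R_i,\ \text{delimiter }d\}$ together with countable additivity, but it is the one place where the set structure must be handled with care; everything downstream is identical to Theorem~\ref{thm:main}. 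I would also note that any global decoding constraint $K$ or format event $\dot g$ can be carried along as an additional conditioning event in every probability without altering the derivation, since it enters numerator and denominator uniformly and therefore cancels in the telescoping ratio.
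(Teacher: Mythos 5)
Your proposal is correct and follows essentially the same route as the paper, whose entire proof of Theorem~\ref{thm:icl-gen} is the one-line observation that absorbing $\xi$ into the $x$'s reduces it to the derivation of Theorem~\ref{thm:main} in \S\ref{sec:icl-proof}. Your additional care about the set-valued events $R_i$ and the discarded $\log \Pr(d \mid R_t, x_t, \xi, S_{t-1}) \le 0$ term is sound and in fact more explicit than what the paper provides.
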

\begin{proof}
If we see $\xi$ as a part of the $x$'s in the proof for Theorem~\ref{thm:main}, then following the steps in \S\ref{sec:icl-proof}, we can have Eq.~\ref{eq:icl-gen}.
\end{proof}

\section{Proof of Corollary \ref{cor:dst-exp}}
The second term in the right-hand side of Eq.~\ref{eq:main} is zero when the decoding constrain $K$ is imposed. 
Therefore, it suffices to prove the last term is non-negative in expectation.

\begin{align*}
&\mathop{\mathbb{E}}_{x_1, x_2, \cdots, x_T \sim \mathcal{D}_{X}^T }  \sum_{i=1}^{T}  \log \frac{\Pr(x_{t} | z^*, S_{t - 1}, K)}{\Pr(x_{t} | S_{t - 1}, K)} \\
=& \mathop{\mathbb{E}}_{x_1, x_2, \cdots, x_T \sim \mathcal{D}_{X}^T }  \sum_{i=1}^{T}  \log \frac{\Pr(x_{t} | z^*, K)}{\Pr(x_{t} | S_{t - 1}, K)} \\
=& \sum_{x_1, x_2, \cdots, x_T }  \Pr(x_{t} | z^*, K) \sum_{i=1}^{T}  \log \frac{\Pr(x_{t} | z^*, K)}{\Pr(x_{t} | S_{t - 1}, K)} \\
=& \mathrm{KLD}(\Pr(x_{t} | z^*, K) || \Pr(x_{t} | S_{t - 1}, K))
\geq 0 \\
\end{align*}

\section{The Connection between $P(r_t | x_t, z^*)$ and Function Description Length by \citet{hahn2023theory}}
\label{sec:discuss-desc-length}

Firstly, we make some regularity assumptions: Given a step-by-step reasoning process $\pi = s_1, s_2, \cdots, s_n$ for the induction process of $P \models Q$, in the training data,
\begin{enumerate}
    \item each step may be dropped independently to each other with probability $p_{drop}$.
    \item $\Pr(s_i | P, s_1, s_2, \cdots, s_{i-1}) > p_{min}$ for all $i \in [n]$.
\end{enumerate}

We first show how we derive Eq.~\ref{eq:desc-length}: Based on Assumption~\ref{assumption:step-by-step},
\begin{equation*}
\begin{split}
    &\Pr(r_t | x_t, z^*) \\
    &= \sum_{\pi \in \Pi} \Pr(\pi, r_t | x_t, z^*) \Pr(\text{$\pi$ is dropped}),    
\end{split}
\end{equation*}
where $\Pi$ is a set of token sequences representing reasoning steps that induce $r_t$ from $x_t$.
Let $\pi^*$ be the shortest proof in $\Pi$, we have
\begin{align*}
    &\log \Pr(r_t | x_t, z^*) \\
    =& \log \sum_{\pi \in \Pi} \Pr(\pi, r_t | x_t, z^*) \Pr(\text{$\pi$ is dropped}) \\
    \geq& \log \Pr(\pi^*, r_t | x_t, z^*) \Pr(\text{$\pi^*$ is dropped}) \\
    \geq& p_{min} \log \ell(\pi^*) + p_{drop} \log \ell(\pi^*).
\end{align*}

Then we can discuss the connection between $\Pr(r_t | x_t, z^*, \ddot{g})$ and the function description length by \citet{hahn2023theory}.
We can see the dropped reasoning steps in $\pi^*$ as the hidden (tree) structure that maps $x_t$ to $r_t$ as the derivation tree $\tau_\phi$ in the bound of \citet{hahn2023theory}.
The length of the reasoning steps thus corresponds to the description length of the derivation tree $\mathrm{D}(\tau_\phi)$. 

A major difference between the bound by \citet{hahn2023theory} and our bound is that their bound has $\mathrm{D}(\tau_\phi)$ constant to $T$ while our bound has $\sum_t \log \Pr(r_t | x_t, z^*, \ddot{g})$, which potentially grows proportionally to $T$.
The cause of this difference is that, \citet{hahn2023theory} assumes a structure that repetitively applies a function mapping in a document, and the number of repetition is independent to the complexity of the function mapping.
In comparison, our framework does not make this assumption.

\section{Details of the Calcutec Experiment}

\subsection{Generation Process of the LM Training Data in Calcutec }
\label{sec:calcutec-train-gen}

We generate a paragraph based on Assumption~\ref{assumption:pronoun} in the following step:
\begin{enumerate}[leftmargin=0.25in]
    \item We pick a symbol $s$ from the symbols associated with $r_a$ uniformly at random.
    \item We randomly generate a proof for $\mathrm{KB}, P \models g$, where $P \subset \Sigma$ is the premise and $g \in \Sigma$ is the goal of the proof. We ensure that this proof contains the topic $s$.
    \item We convert the proof tree to a sequence of proving steps by traversing the proving tree in a topological order with ties broken randomly.
    Each node in the proof tree corresponds to a rule in $\mathrm{KB}$, so the resulting sequence of proving steps consists of horn clauses in the form $a_1 a_2 \to b$. We separate the clauses in the sequence with commas. 
    \item We rewrite the first step of the proving process to contain the premises of the proof.
    Specifically, we replace the antecedent in the first formula with the premise $P$.
    We find that this step is necessary to prevent the language model trained on it from hallucinating irrelevant variables randomly. 
    It is important for our experiment for chain-of-thought, but is not necessary for language models to learn the in-context learning ability.
\end{enumerate}

\begin{algorithm*}
    \begin{algorithmic}
    \State Sample $r_a, r_b$ from $\{r_1, r_2, r_3, r_4\}$ with probability 0.45, 0.45, 0.05, 0.05.
    \State Sample topic $S = \{s_1, s_2 \} \subset \Sigma$.
    \State Initialize a document $D$ with empty string.
    \For{$p = 1, 2, \dots, n_{par}$}
        \While{True}
            \State Sample $s \in S$.
            \State Sample a set $X \subset \Sigma$ such that $\bigwedge_{x \in X} x \models s$.
            \State Run the resolution algorithm to get the set $M = \{ m | X  \models m \}$.
            \State Find an extra premise $x'$ that can increase the depth of deepest proof tree for $X \models m$.
            \State Run the resolution algorithm to get the set $M' = \{ m | X \cup \{x'\} \models m \}$.
            \If {$|M'| > \frac{|\Sigma|}{2}$}
                \State Reject the sampled $X \cup \{ x' \}$. 
                \Comment{We don't want a premise that entails everything.}
                \State Restart the while loop.
            \EndIf
            \State Sample a $g \in M'$ such that the proof tree for $X' \models g$ contains $s$ and its depth $> d_{min}$. 
            \Comment{We use $d_{min} = 4$ in our experiments.}
            \State Do topological sort to flatten the proof tree and convert it into a string. 
            \State Append the string to $D$. 
        \EndWhile
    \EndFor

    \For{$s \in S$}
        \State $D \leftarrow$ $D$.replace($s$, $r_a$)
    \EndFor

    \State Let $S' = \{s'_1, s'_2 \} \in \Sigma$ be the top-2 frequent non $r_a$ symbols in $D$.
    \For{$s' \in S'$}
        \State $D \leftarrow$ $D$.replace($s'$, $r_b$)
    \EndFor
    \end{algorithmic}
    \caption{Pseudo code for the generation process of a Calcutec document used for training.}
    \label{alg:calcutec-train}
\end{algorithm*}

\subsection{Perturbations in Calcutec}
\label{sec:perturbations}

We apply two types of perturbations over the reasoning steps in Calcutec described in \S{\ref{sec:calcutec}}:
\begin{enumerate}[leftmargin=0.25in]
    \item Random merge: At probability $p_{merge}$, for every two consecutive clauses where the consequence of the first one is in the antecedents of the second one, say $a_1 a_2 \to b_1$ and $b_1 a_3 \to b_2$, we merge them into a single clause $a_1 a_2 a_3 \to b_2$.   
    \item Random drop: Given a clause $a_1 a_2 \cdots a_n \to b$.
    We drop each of the antecedents $a \in \{a_1, a_2, \cdots a_n\}$ at probability $p_{drop}$.
    We apply this dropping to every clause in the proof except the first one to ensure that we do not drop the premises. 
\end{enumerate}
We use $p_{merge} = p_{drop} = p_{skip}$.

Additionally, when flattening the proof trees with topological sort, we break the ties randomly. We also randomize the order of the symbols in the antecedents. 

\begin{figure*}[H]
    \centering
    \begin{subfigure}{0.24\linewidth}
        \includegraphics[width=\linewidth]{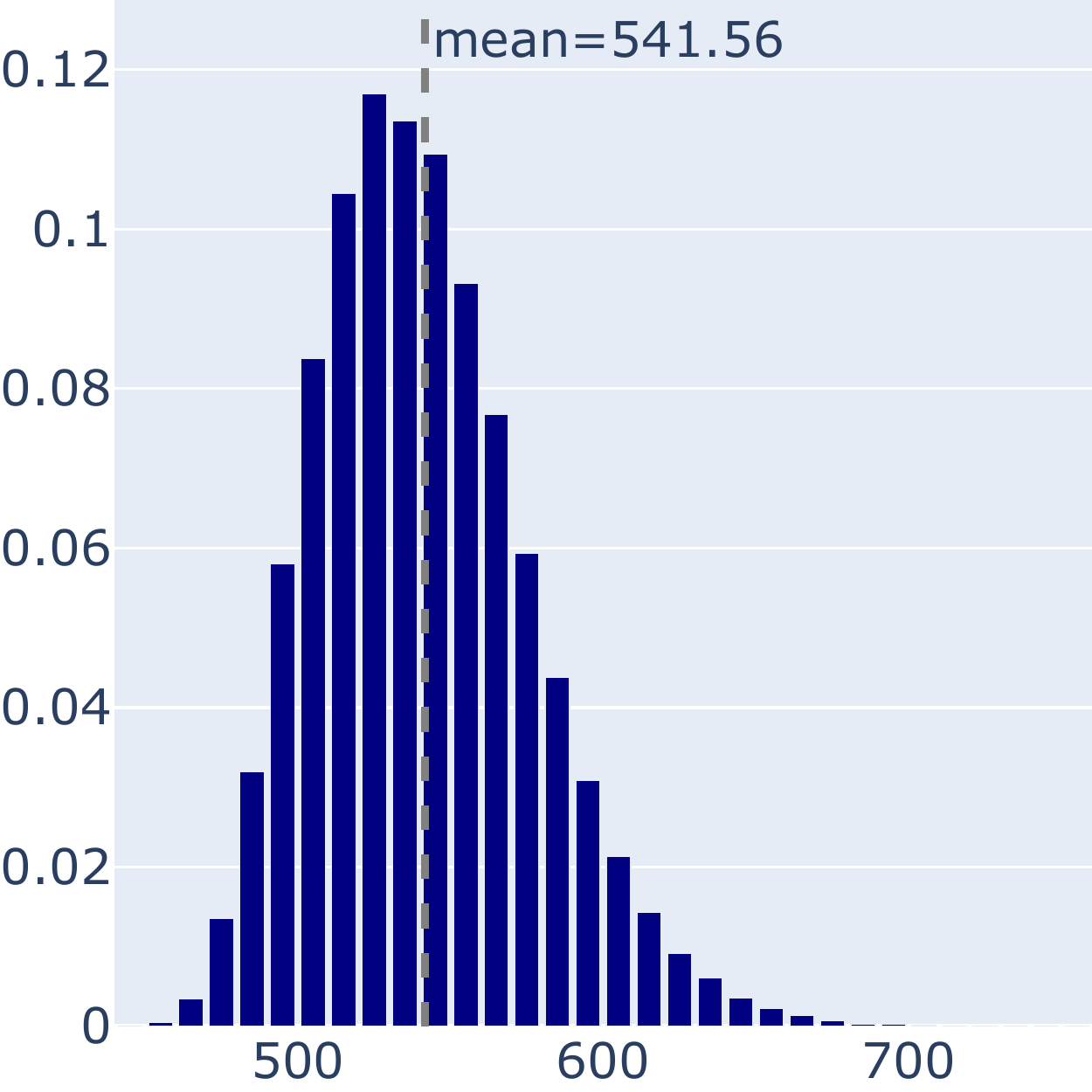}     
        \subcaption[]{document (tokens)}
    \end{subfigure}
    \begin{subfigure}{0.24\linewidth}
        \includegraphics[width=\linewidth]{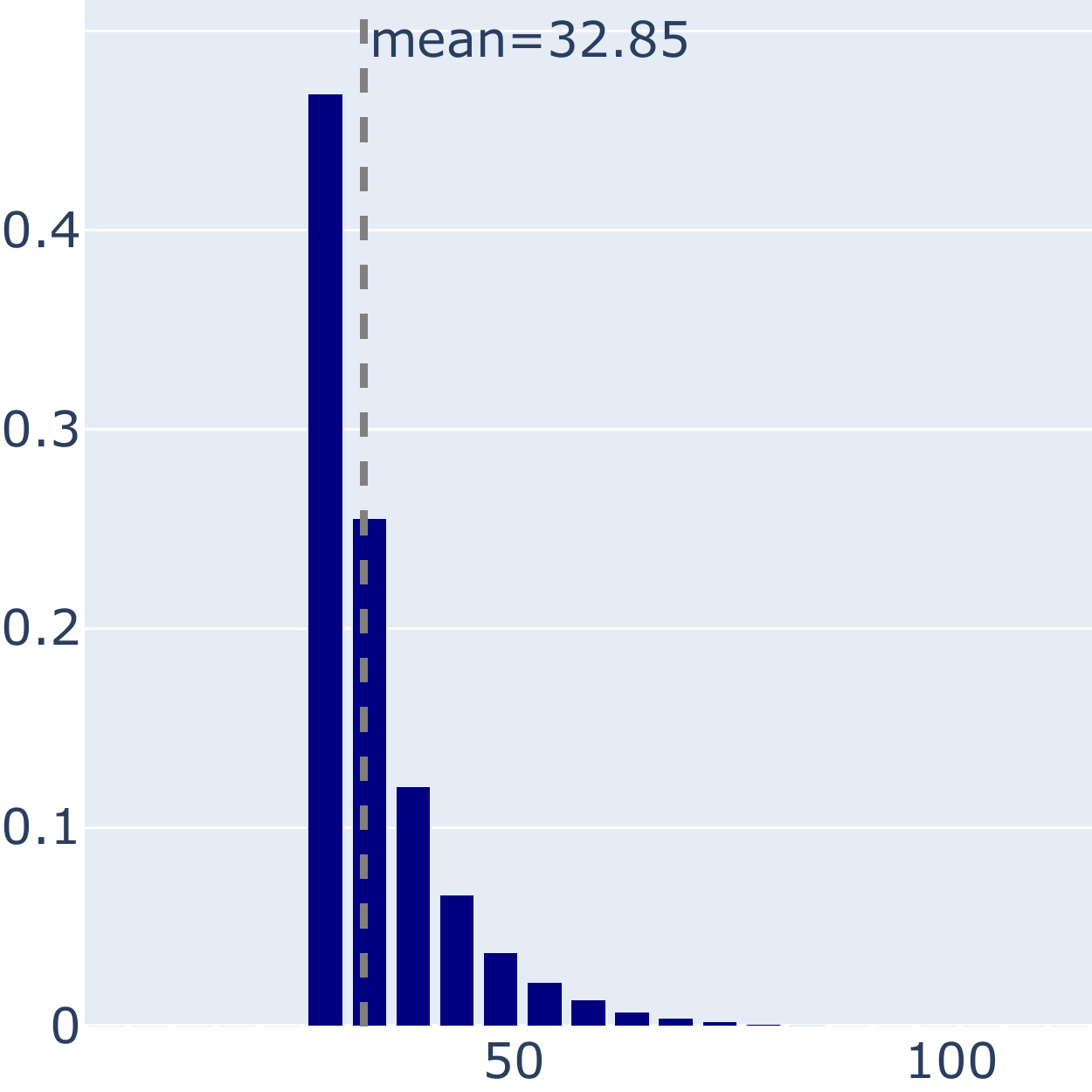}
        \subcaption[]{paragraph (tokens)}
    \end{subfigure}
    \begin{subfigure}{0.24\linewidth}
        \includegraphics[width=\linewidth]{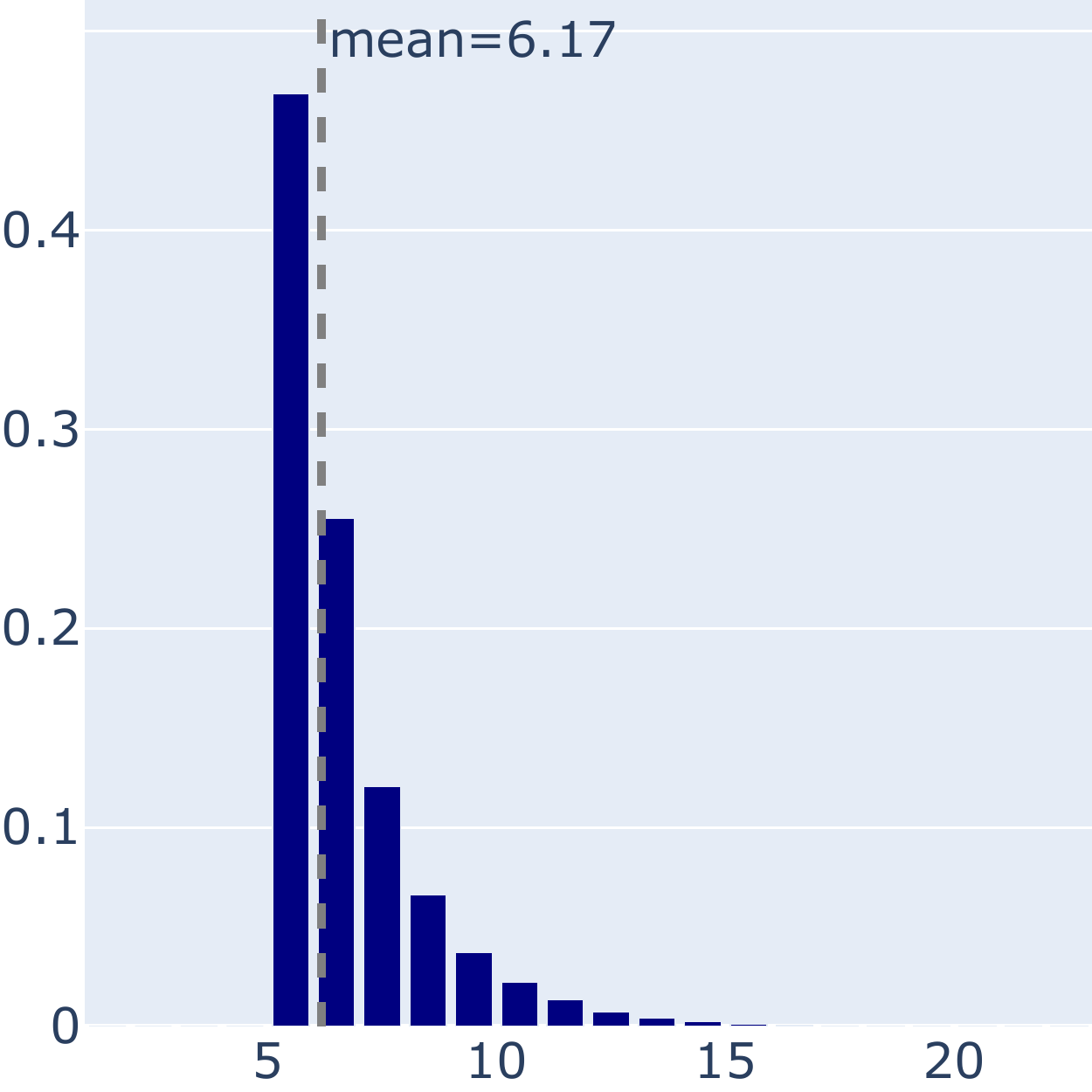}
        \subcaption[]{paragraph (steps)}
    \end{subfigure}
    \begin{subfigure}{0.24\linewidth}
        \includegraphics[width=\linewidth]{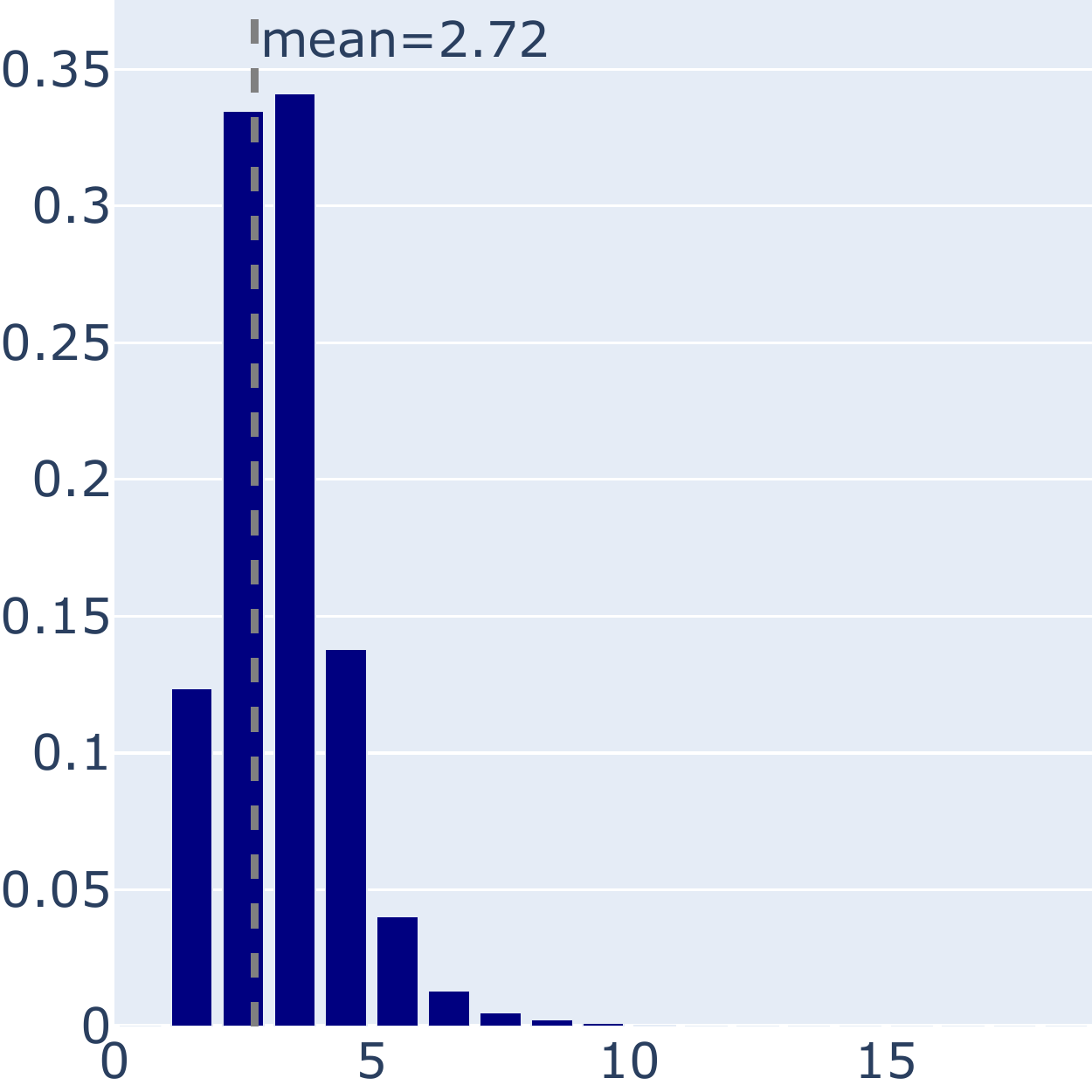}
        \subcaption[]{first $z$ (steps)}
    \end{subfigure}

    \begin{subfigure}{0.24\linewidth}
        \includegraphics[width=\linewidth]{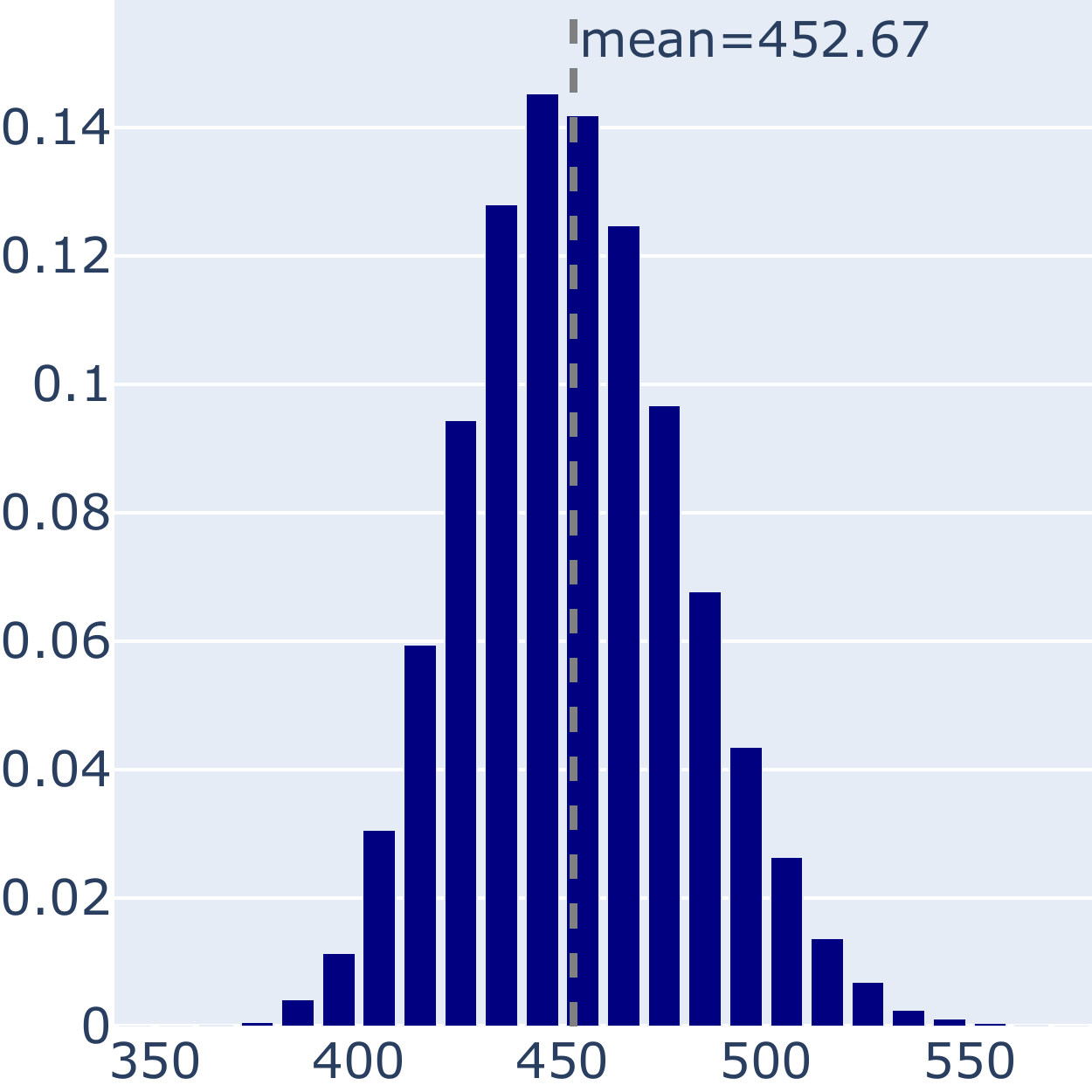} 
        \subcaption[]{document (tokens)}
    \end{subfigure}
    \begin{subfigure}{0.24\linewidth}
        \includegraphics[width=\linewidth]{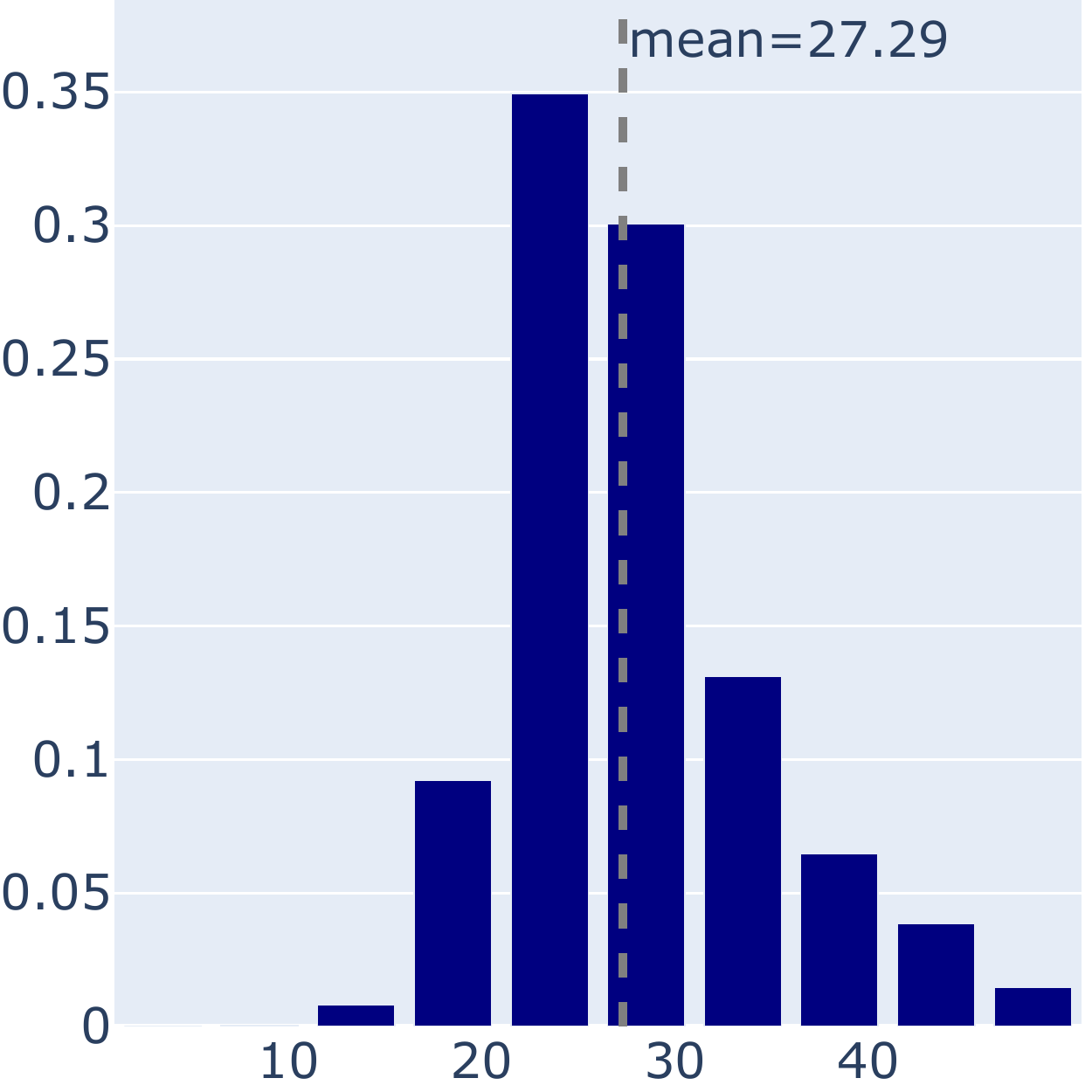}
        \subcaption[]{paragraph (tokens)}
    \end{subfigure}
    \begin{subfigure}{0.24\linewidth}
        \includegraphics[width=\linewidth]{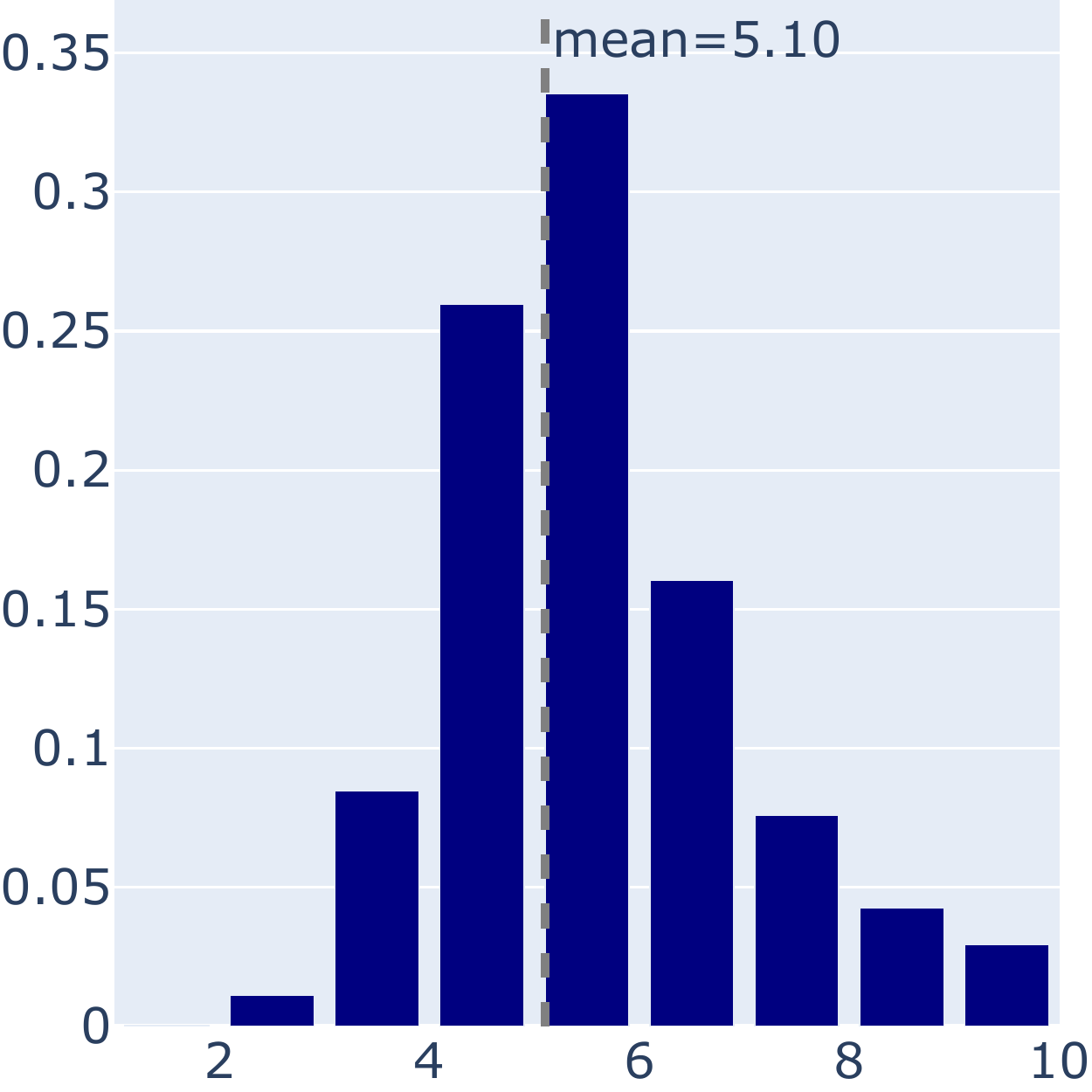}
        \subcaption[]{paragraph (steps)}
    \end{subfigure}
    \begin{subfigure}{0.24\linewidth}
        \includegraphics[width=\linewidth]{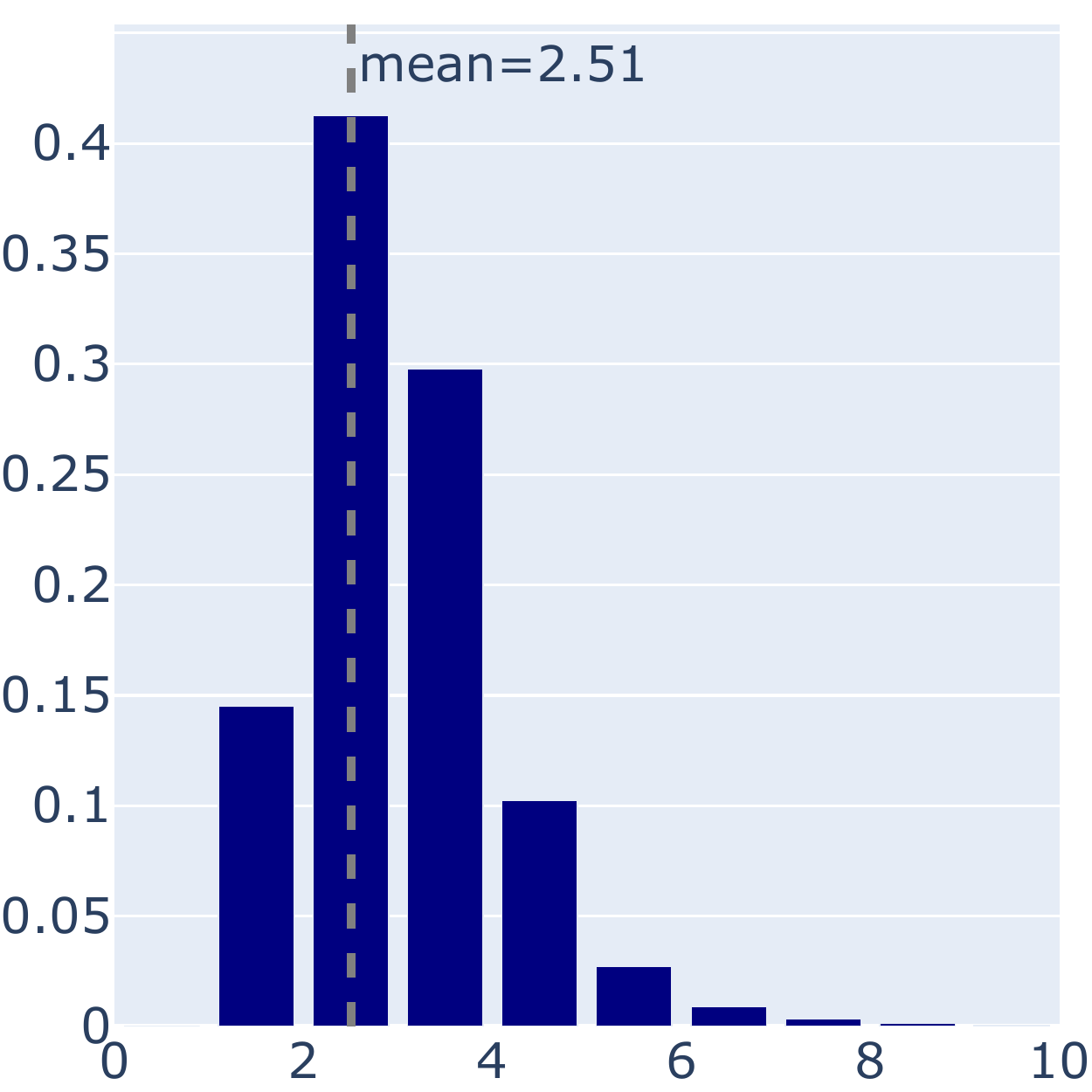}
        \subcaption[]{first $z$ (steps)}
    \end{subfigure}
    \caption{The distribution of lengths and the first step in each paragraph where $z$ is the consequence in the Calcutec dataset. The first/second row are the statistics before/after some steps are randomly dropped.}
    \label{fig:enter-label}
\end{figure*}

\subsection{Extra Setups}
\label{sec:extra-setups}

\paragraph{Unseen Inference Process.}
Based on Assumption~\ref{assumption:step-by-step} and the formalism of NLP tasks in \S\ref{sec:dist-task}, input-label pairs of a downstream task corresponds to prefix-reference pairs in a paragraph. 
To examine whether the trained model can generalize well when the induction process for the label is different from the induction process for the pronoun in the training data, we generate a training dataset where all the pronouns are induced from the premise with a left-branching proof tree with a depth equal to 2 (Figure~\ref{fig:tree-paragraph}), while the test data contains samples whose labels are induced from the input with balanced trees (Figure~\ref{fig:tree-downstream}).

\paragraph{Different Input Lengths.}
For each downstream tasks, we experiment with examples with different lengths. When the inference process is branching, having input length equal to $4$ makes the proving tree deeper.

\paragraph{No perturbations.} 
As described in \S\ref{sec:perturbations}, we apply some random perturbations on the proving process. We also experiment with the setup where we do not apply any perturbations.

\paragraph{With/Without Rewriting the First Step.} As described in \S\ref{sec:calcutec-train-gen}, we rewrite the first step of the proof. We also experiment with the setup where we do not rewrite the first step.

\paragraph{Model Size.}
We also experiment with different models sizes. We experiment with GPT-2 models that have 3, 4 and 5 layers.

\subsubsection{Results and Discussion}

\paragraph{Unseen Inference Process.}
Figure~\ref{fig:icl-balanced-y12-4} and Figure~\ref{fig:icl-balanced-y34-4} show that the ICL performance on the branching examples is similar to the performance on the branching examples.
It suggests that the model can generalize to examples that requires an unseen reasoning process.
Interestingly, Table~\ref{tab:cot-all} show that using chain-of-thoughts mitigates this gap.

\paragraph{Different Input Lengths.}  
Figure~\ref{fig:icl-branching-y12-4} and Figure~\ref{fig:icl-branching-y34-4} show that the model can still do ICL for the examples with length equal to $4$.
However, compared with the performance on examples with length equal to $3$ (Figure~\ref{fig:icl-branching-y12-3} and Figure~\ref{fig:icl-branching-y34-3}), the performance is worse.
This may be because solving these length-4 examples requires more reasoning steps.

\paragraph{With/Without Rewriting the First Step.}
Figure~\ref{fig:icl-no-fuse} shows that models trained with proofs that are rewritten has similar performance as models trained with the proofs that were rewritten (Figure~\ref{fig:icl-full}).
This suggests that rewriting the first step in the proof is not necessary for the model to acquire the ICL ability.

\paragraph{Model Size.}
Figure~\ref{fig:icl-depth} show that deeper models have better ICL performance. It aligns with the real-world observation that scaling helps model performance. 

\begin{figure}
    \centering
    \begin{subfigure}[b]{0.45\textwidth}
        \centering
        \begin{prooftree}
        \hypo{x_1}
        \hypo{x_2}
        \infer2{x_4}
        \hypo{x_3}
        \infer2{r_1}
        \hypo{x_1}
        \hypo{x_3}
        \infer2{x_6}
        \infer2{x_7}
        \hypo{\vdots}
        \infer2{\vdots}
        \end{prooftree}
        \caption{The proof tree a paragraph in the training dataset corresponds .}
        \label{fig:tree-paragraph}
    \end{subfigure}
    \begin{subfigure}[b]{0.45\textwidth}
        \centering
        \vspace{0.5cm}
        \begin{prooftree}
        \hypo{x_1}
        \hypo{x_2}
        \infer2{x_5}
        \hypo{x_3}
        \hypo{x_4}
        \infer2{x_6}
        \infer2{r_1}
        \end{prooftree}
        \caption{A balanced tree for a downstream task sample.}   
        \label{fig:tree-downstream}
    \end{subfigure}
    \caption{Proof trees examples.}
    \label{fig:my_label}
\end{figure}

\subsection{Hyper-parameters}
\label{sec:hyper-param}

We train our model using batch size 256, warm up ratio 5\%, and we truncate the sequence length to 512 tokens and the default parameters for the optimizer. We use the implementation of GPT-2 by Hugging Face transformers v4.27.2.
All models can be trained with 4 RTX 2080ti within 8 hours.

\begin{figure*}
    \centering
    \begin{subfigure}{0.32\textwidth}
        \centering
        \includegraphics[width=\textwidth]{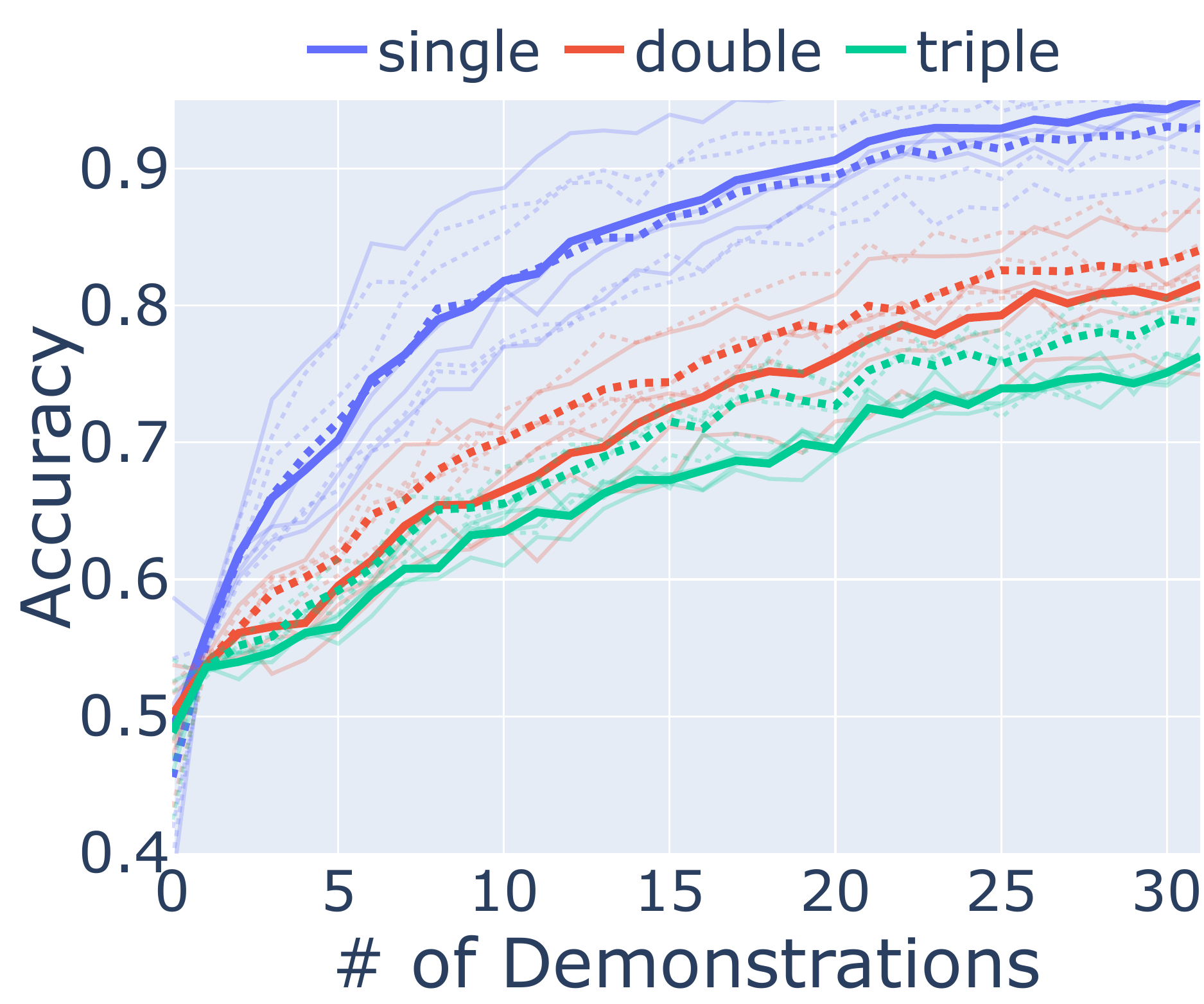}
        \caption{balanced, length = $4$, $y_1, y_2$}
        \label{fig:icl-balanced-y12-4}
    \end{subfigure}
    \begin{subfigure}{0.32\textwidth}
        \centering
        \includegraphics[width=\textwidth]{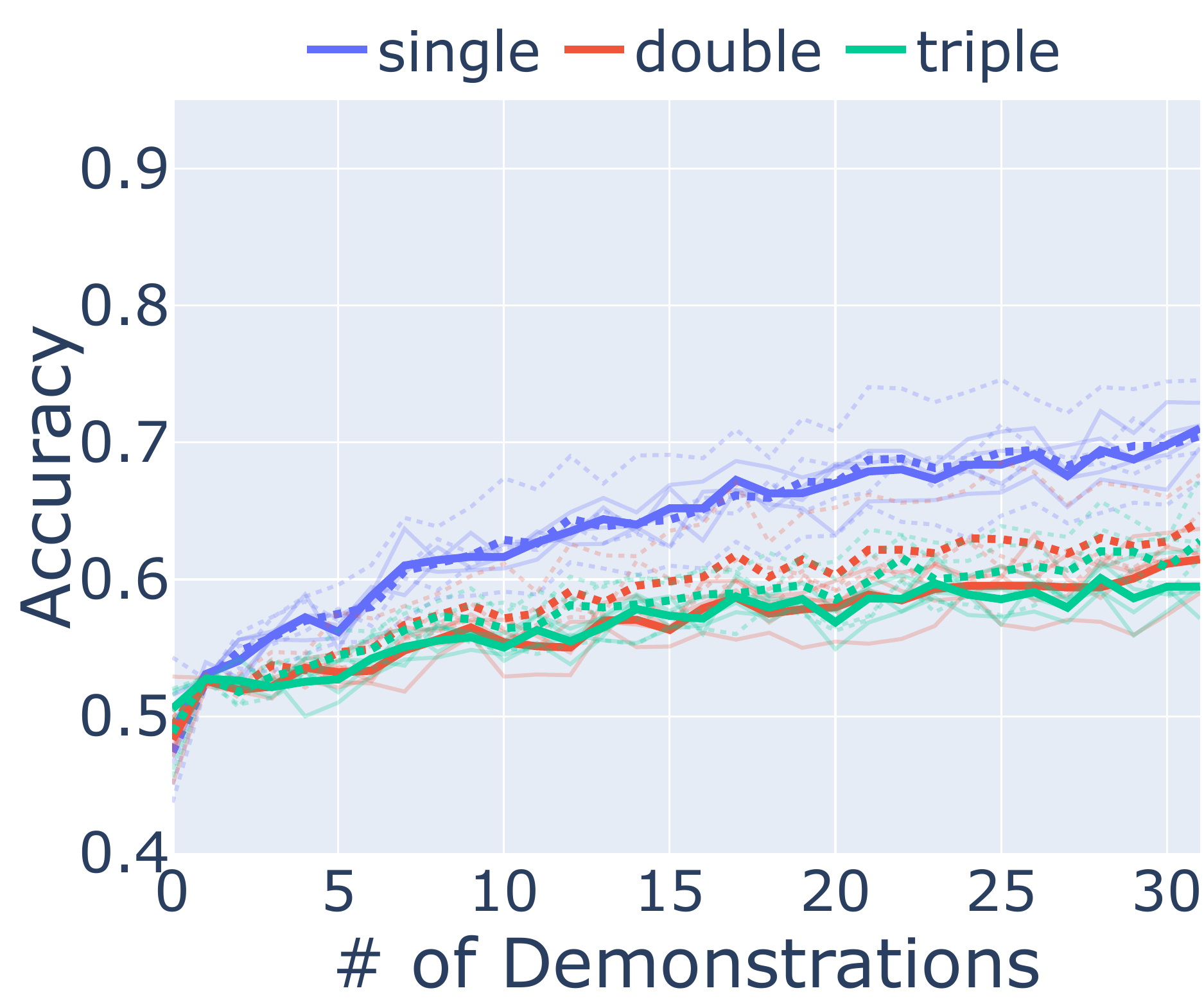}
        \caption{branching, length = $4$, $y_1, y_2$}
        \label{fig:icl-branching-y12-4}
    \end{subfigure}
    \begin{subfigure}{0.32\textwidth}
        \centering
        \includegraphics[width=\textwidth]{figures/branching-y12-3.pdf}
        \caption{branching, length = $3$, $y_1, y_2$}
        \label{fig:icl-branching-y12-3}
    \end{subfigure}

    \begin{subfigure}{0.32\textwidth}
        \centering
        \includegraphics[width=\textwidth]{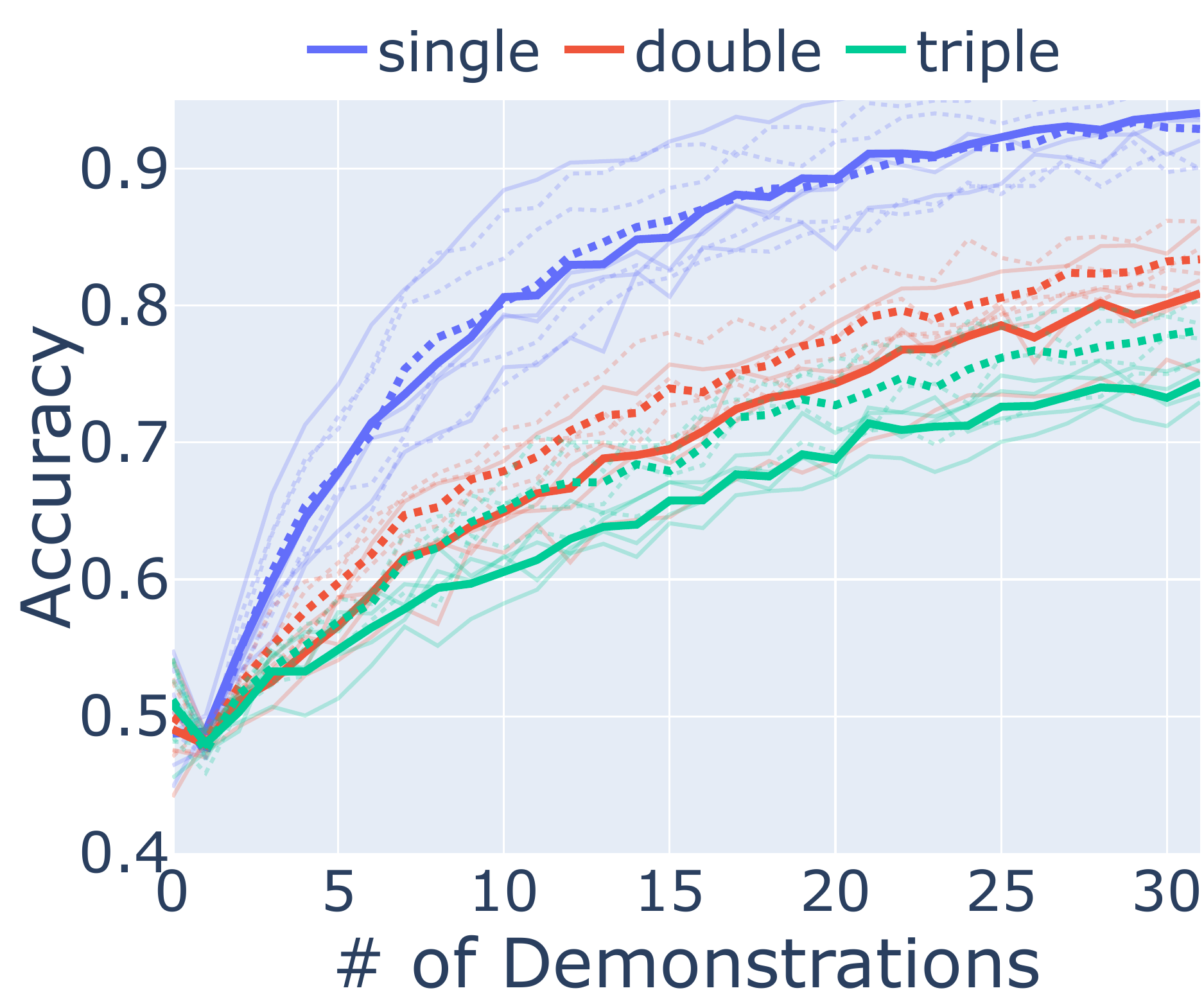}
        \caption{balanced, length = $3$, $y_3, y_4$}
        \label{fig:icl-balanced-y34-4}
    \end{subfigure}
    \begin{subfigure}{0.32\textwidth}
        \centering
        \includegraphics[width=\textwidth]{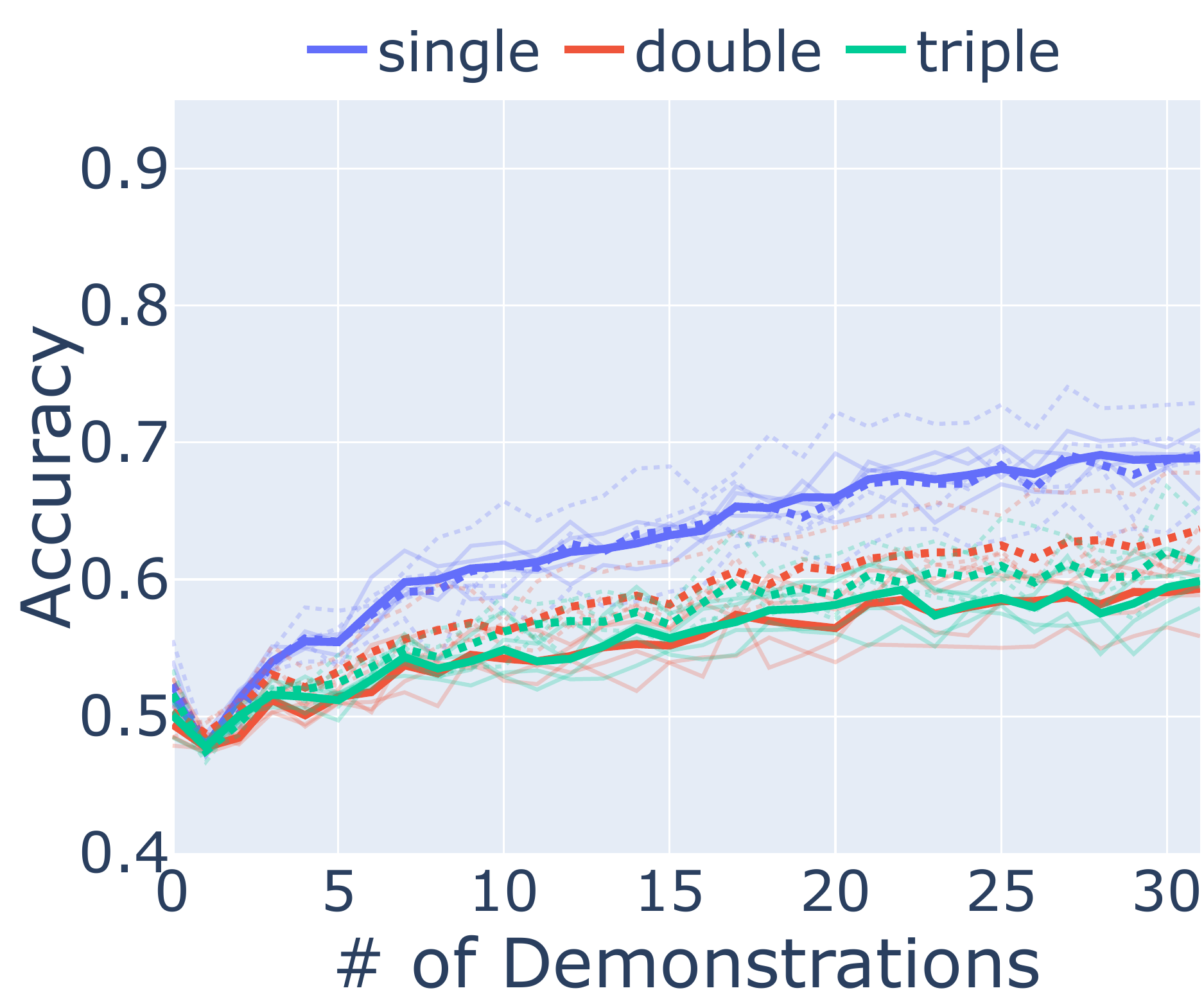}
        \caption{branching, length = $4$, $y_3, y_4$}
        \label{fig:icl-branching-y34-4}
    \end{subfigure}
    \begin{subfigure}{0.32\textwidth}
        \centering
        \includegraphics[width=\textwidth]{figures/branching-y34-3.pdf}
        \caption{branching, length = $3$, $y_3, y_4$}
        \label{fig:icl-branching-y34-3}
    \end{subfigure}
    
    \caption{
    In-context learning accuracy with Calcutec when using different verbalizers ($y_1, y_2$ or $y_3, y_4$) and input lengths (3 or 4).
    The dotted lines represent the performance of \textit{unseen combinations} described in \S{\ref{sec:inspecting-dist-shifts}}, while the different colors represent the number of formulas each class ($v_+$ or $v_-$) is associated to.
    The main lines represent the average accuracy of 5 tasks.
    We plot the performance of each task in lighter colors. 
    }
    \label{fig:icl-full}
\end{figure*}

\begin{figure*}[]
    \centering
    \begin{subfigure}{0.32\textwidth}
        \centering
        \includegraphics[width=\textwidth]{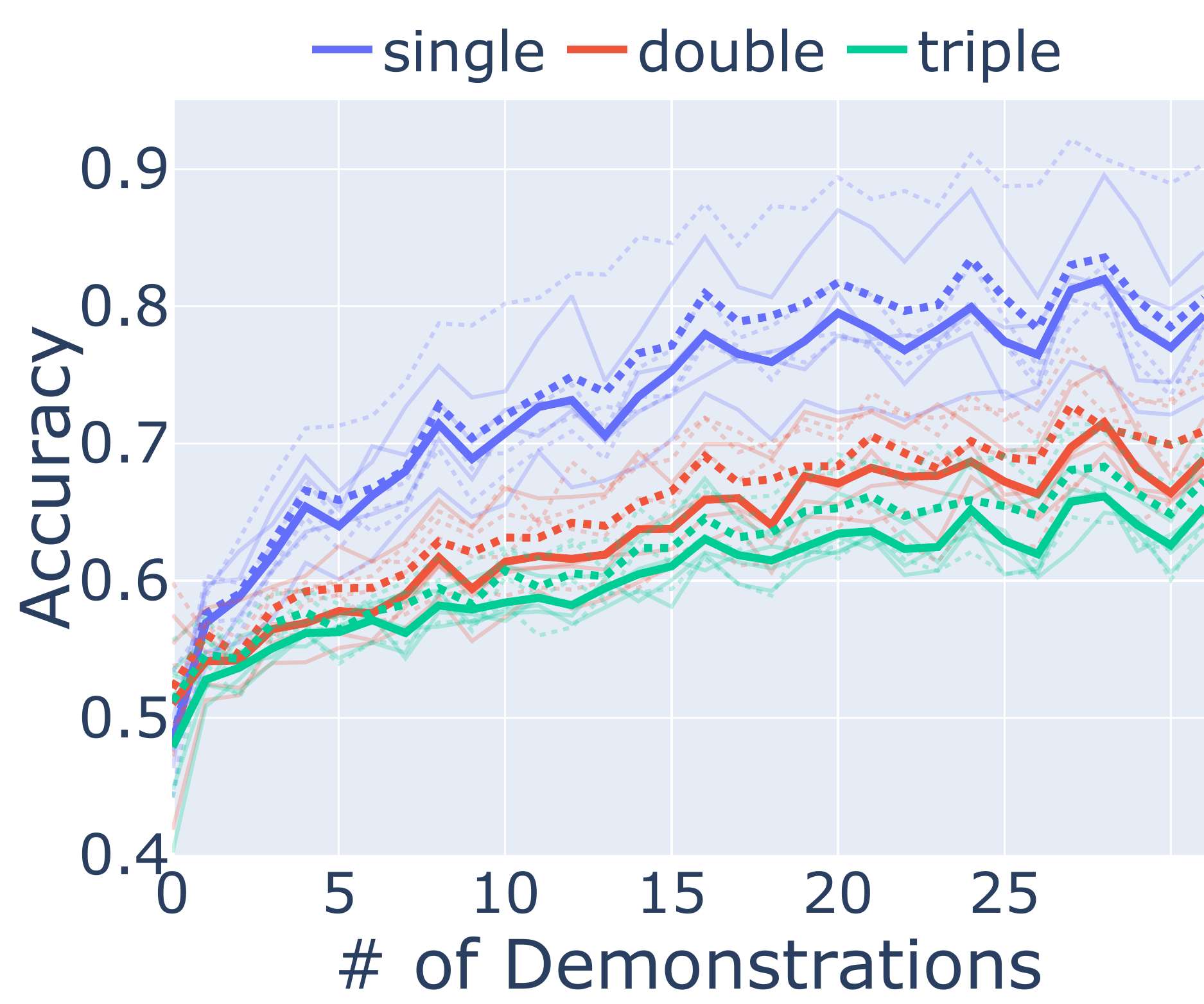}
        \caption{balanced, length = $4$, $r_1, r_2$}
    \end{subfigure}
    \begin{subfigure}{0.32\textwidth}
        \centering
        \includegraphics[width=\textwidth]{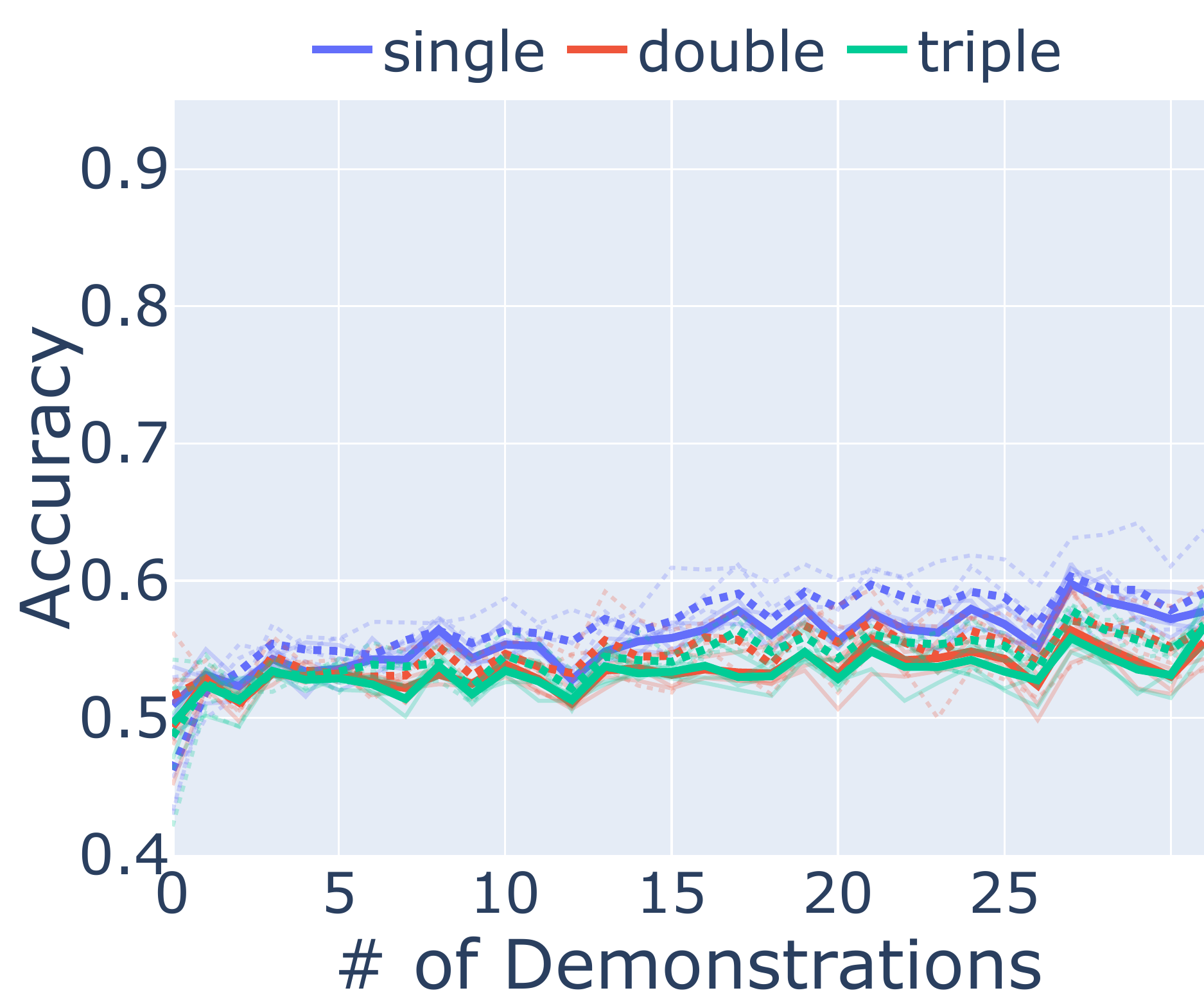}
        \caption{branching, length = $4$, $r_1, r_2$}
    \end{subfigure}
    \begin{subfigure}{0.32\textwidth}
        \centering
        \includegraphics[width=\textwidth]{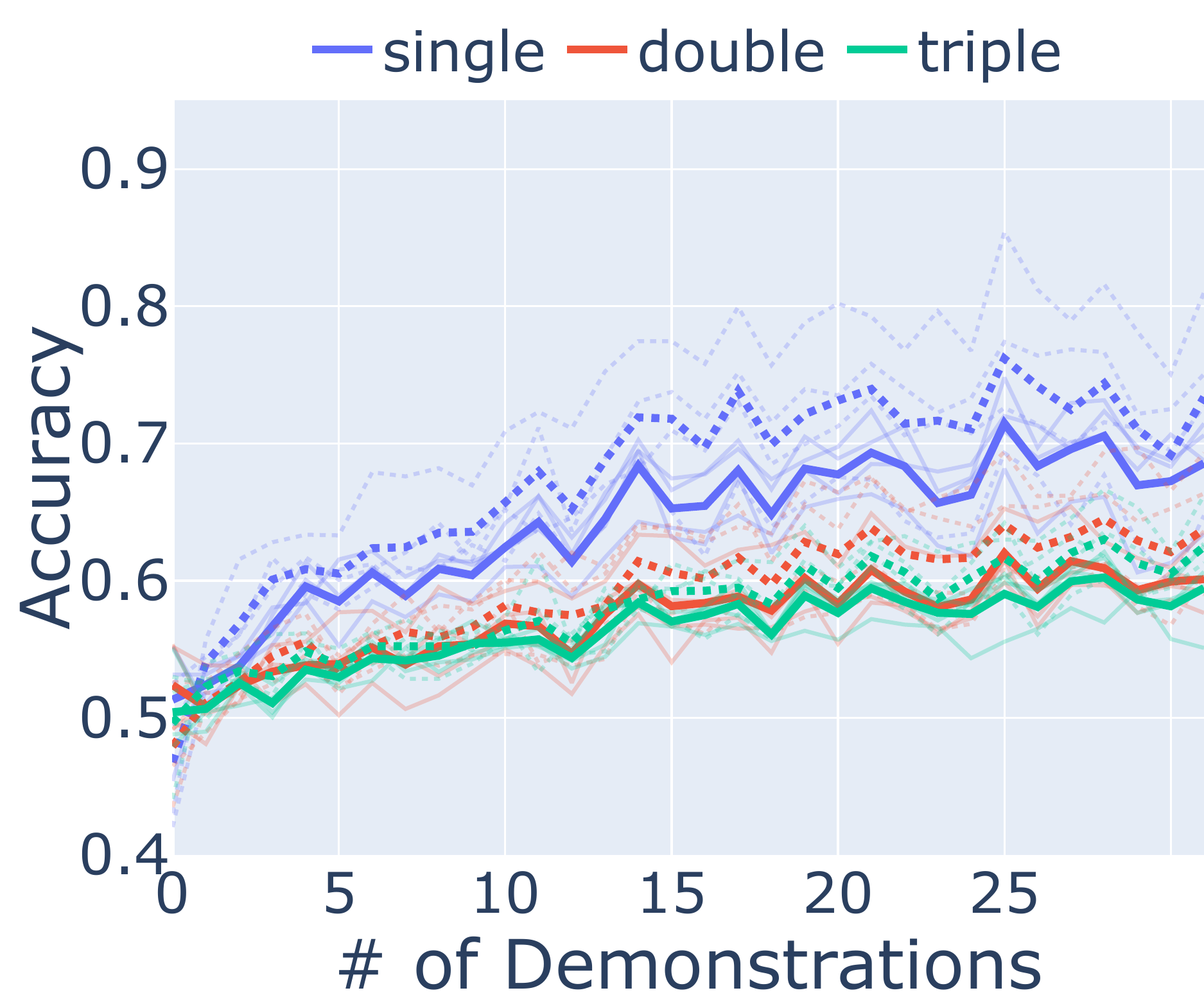}
        \caption{branching, length = $3$, $r_1, r_2$}
    \end{subfigure}

    \begin{subfigure}{0.32\textwidth}
        \centering
        \includegraphics[width=\textwidth]{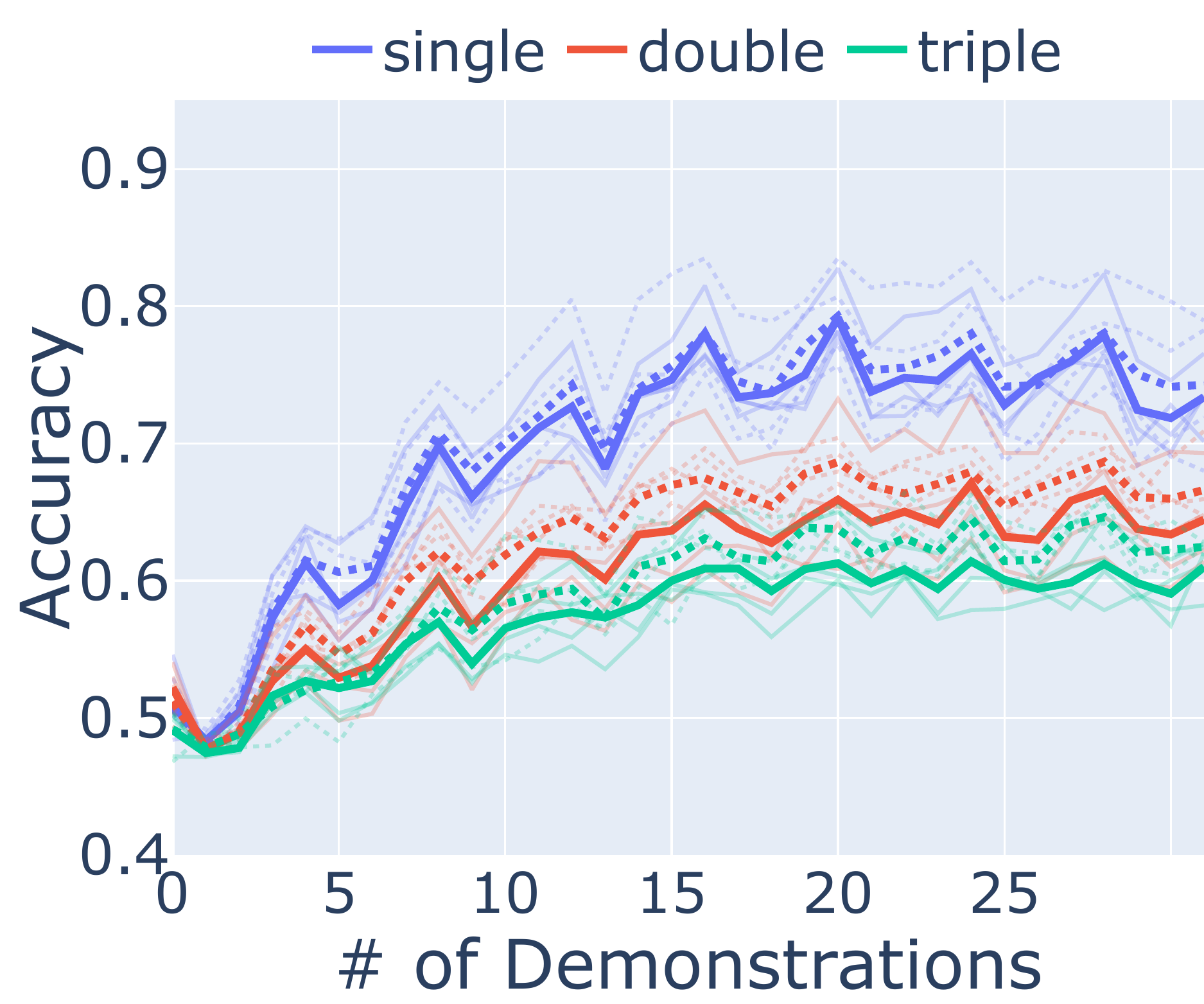}
        \caption{balanced, length = $4$, $r_3, r_4$}
    \end{subfigure}
    \begin{subfigure}{0.32\textwidth}
        \centering
        \includegraphics[width=\textwidth]{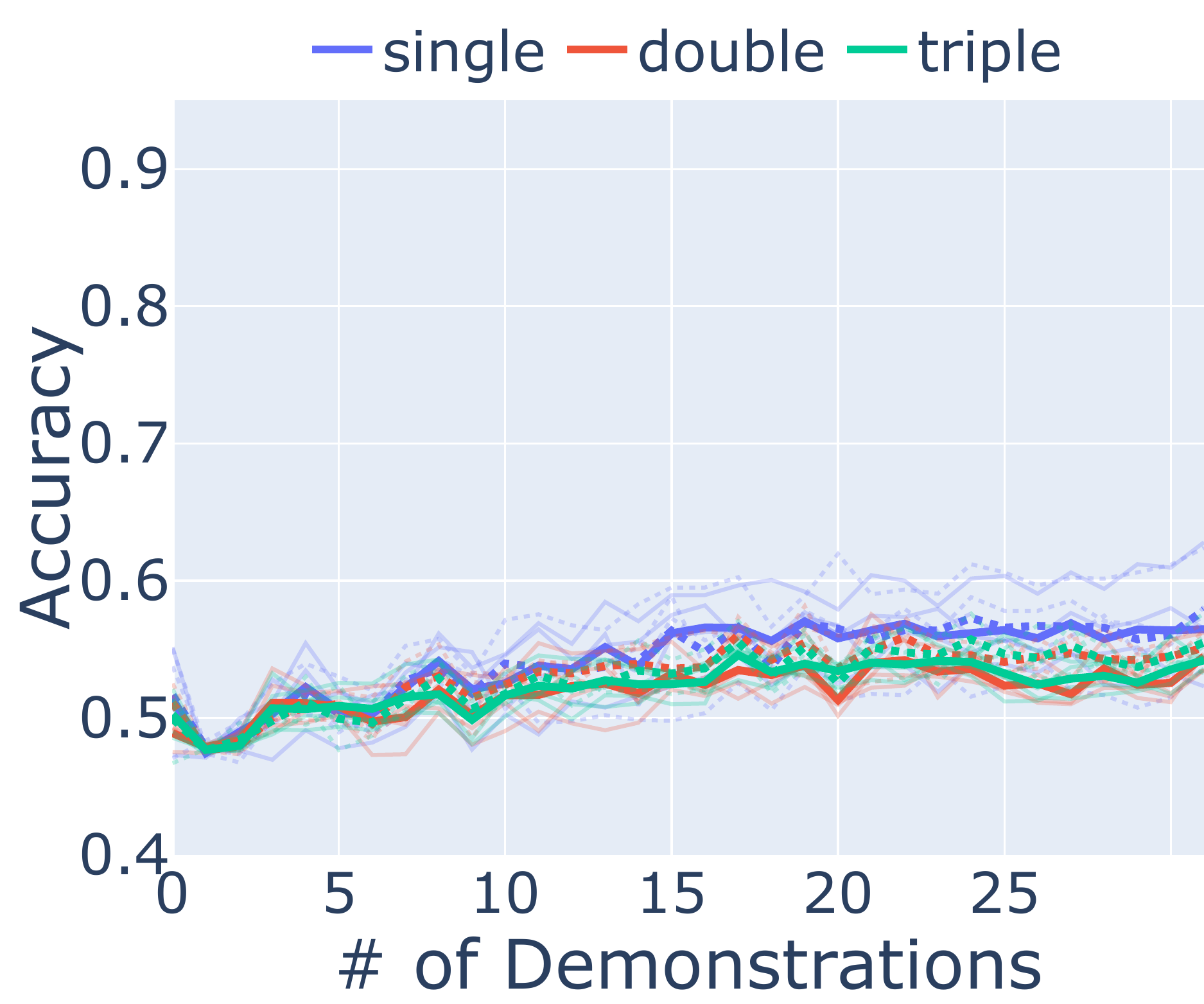}
        \caption{branching, length = $4$, $r_3, r_4$}
    \end{subfigure}
    \begin{subfigure}{0.32\textwidth}
        \centering
        \includegraphics[width=\textwidth]{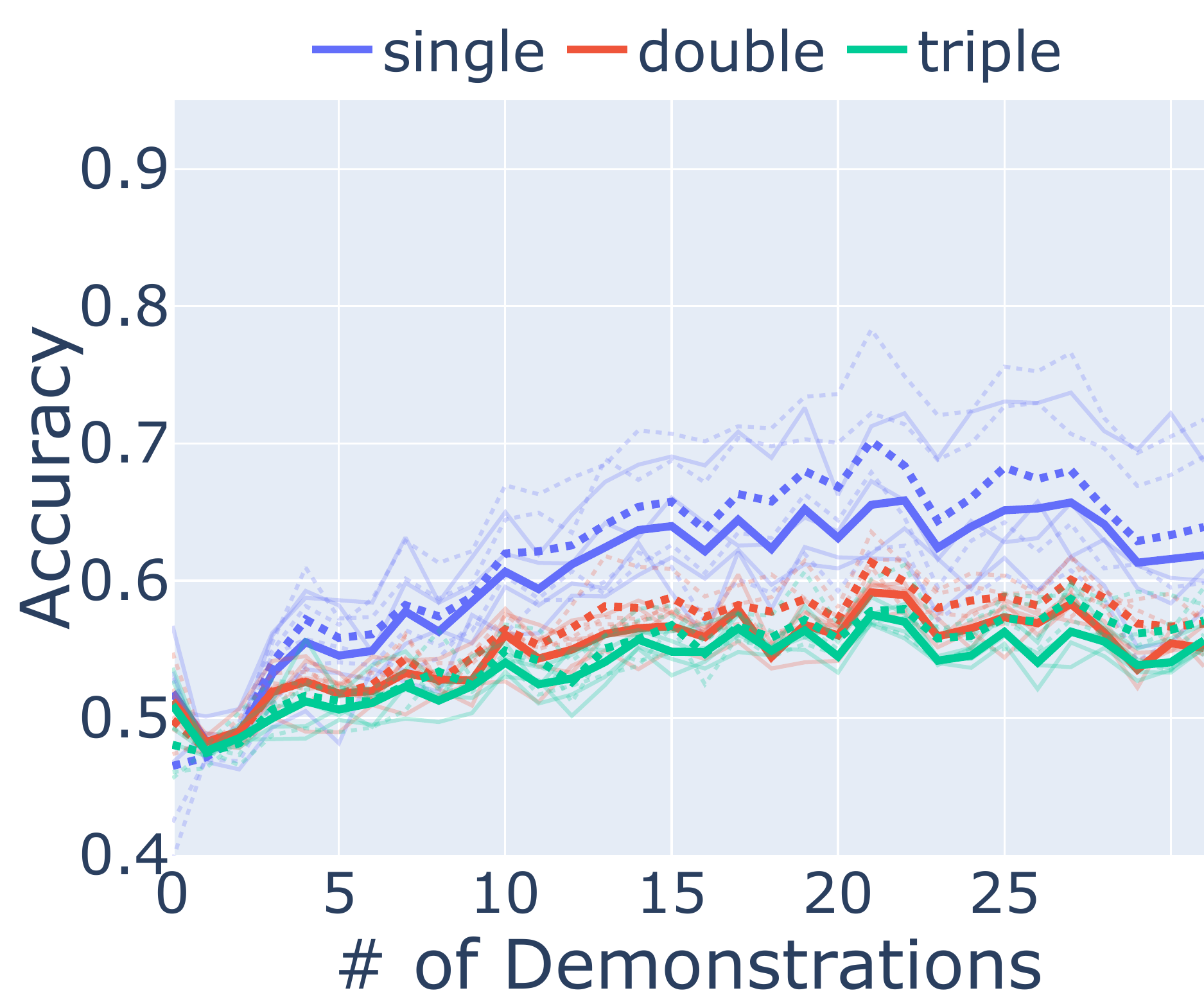}
        \caption{branching, length = $3$, $r_3, r_4$}
    \end{subfigure}
    
    \caption{
    In-context learning accuracy with Calcutec when no steps are dropped ($p_{skip} = 0$).
    }
    \label{fig:icl-no-drop}
\end{figure*}

\begin{figure*}
    \centering
    \begin{subfigure}{0.32\textwidth}
        \centering
        \includegraphics[width=\textwidth]{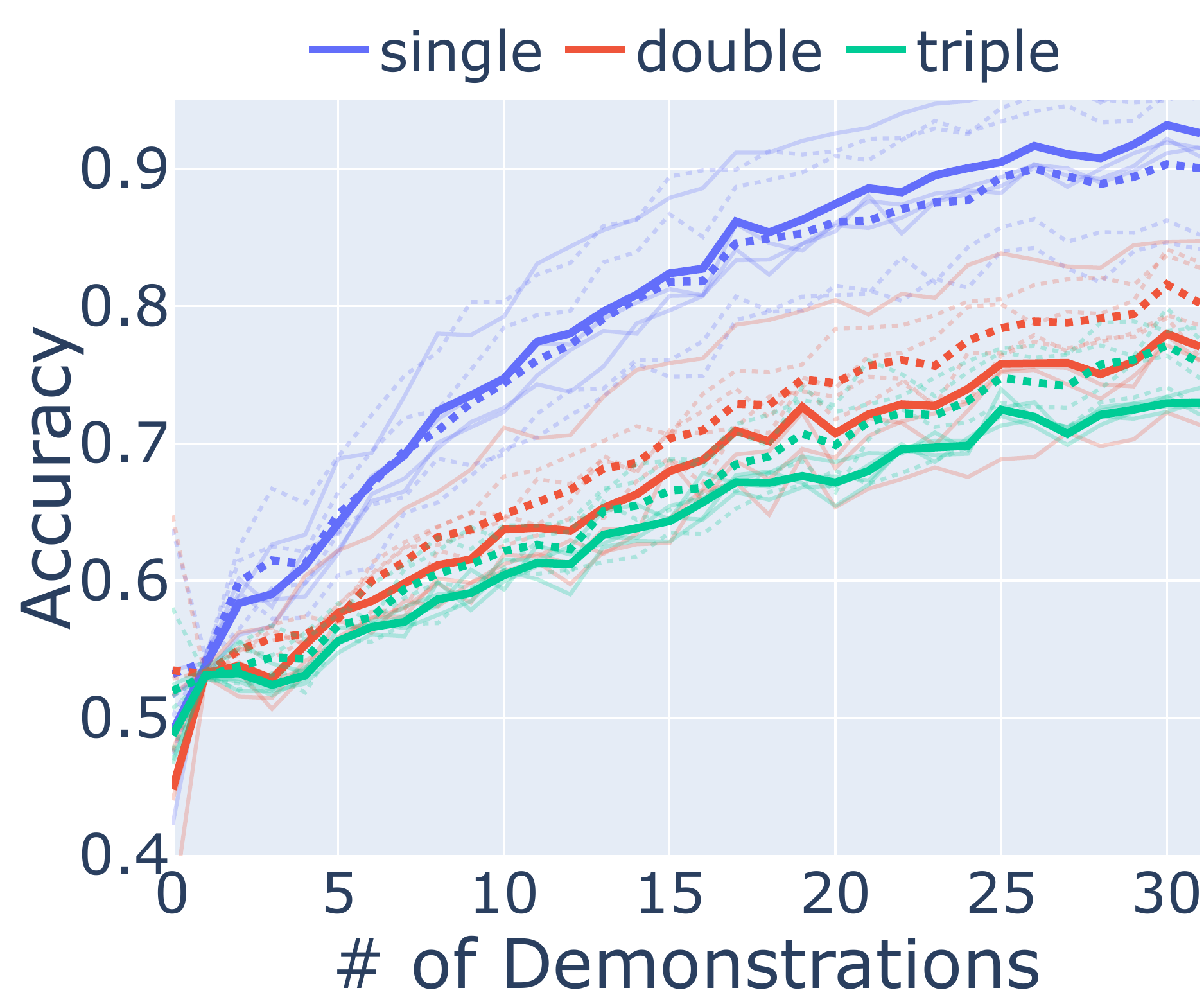}
        \caption{balanced, length = $4$, $r_1, r_2$}
    \end{subfigure}
    \begin{subfigure}{0.32\textwidth}
        \centering
        \includegraphics[width=\textwidth]{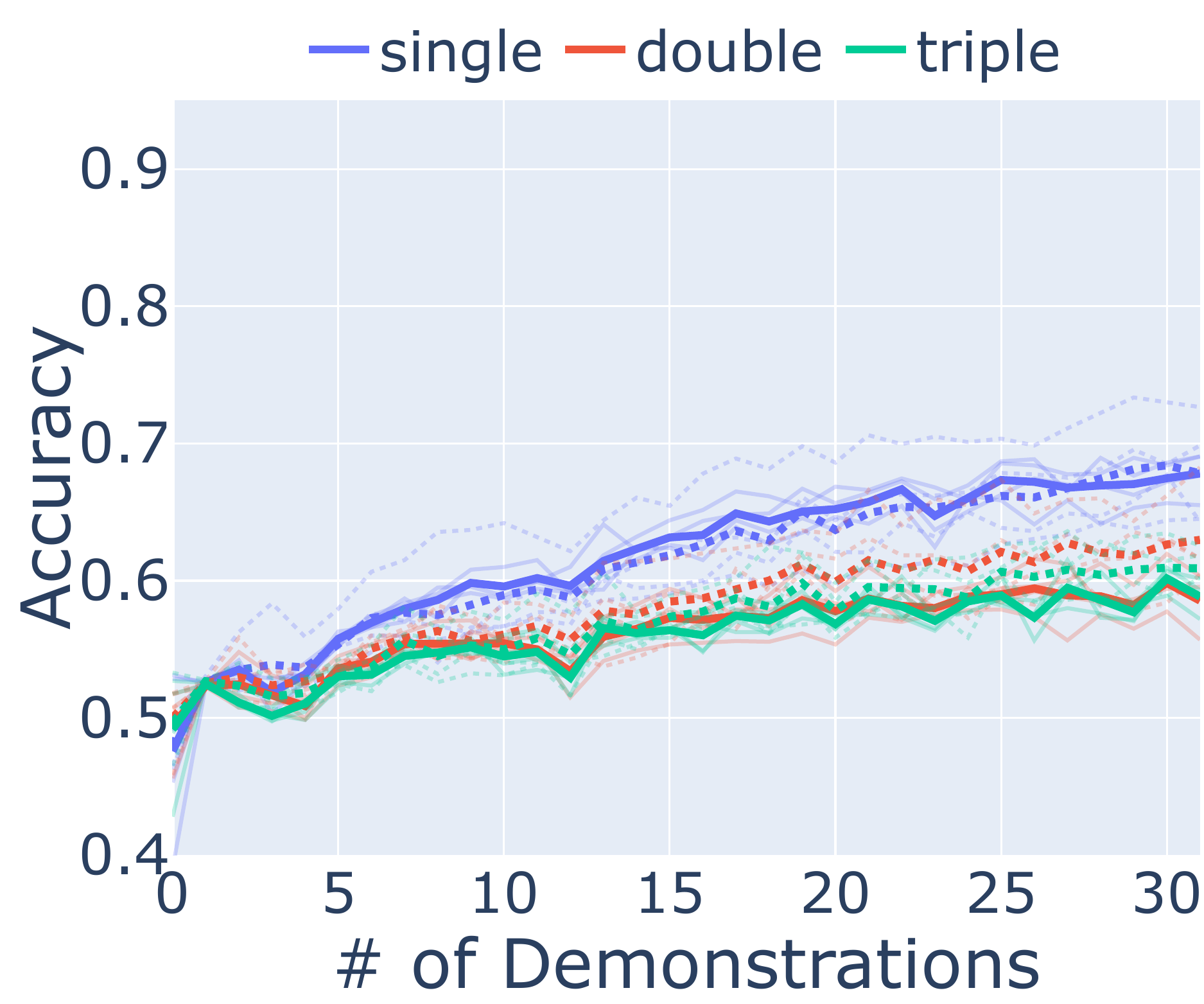}
        \caption{branching, length = $4$, $r_1, r_2$}
    \end{subfigure}
    \begin{subfigure}{0.32\textwidth}
        \centering
        \includegraphics[width=\textwidth]{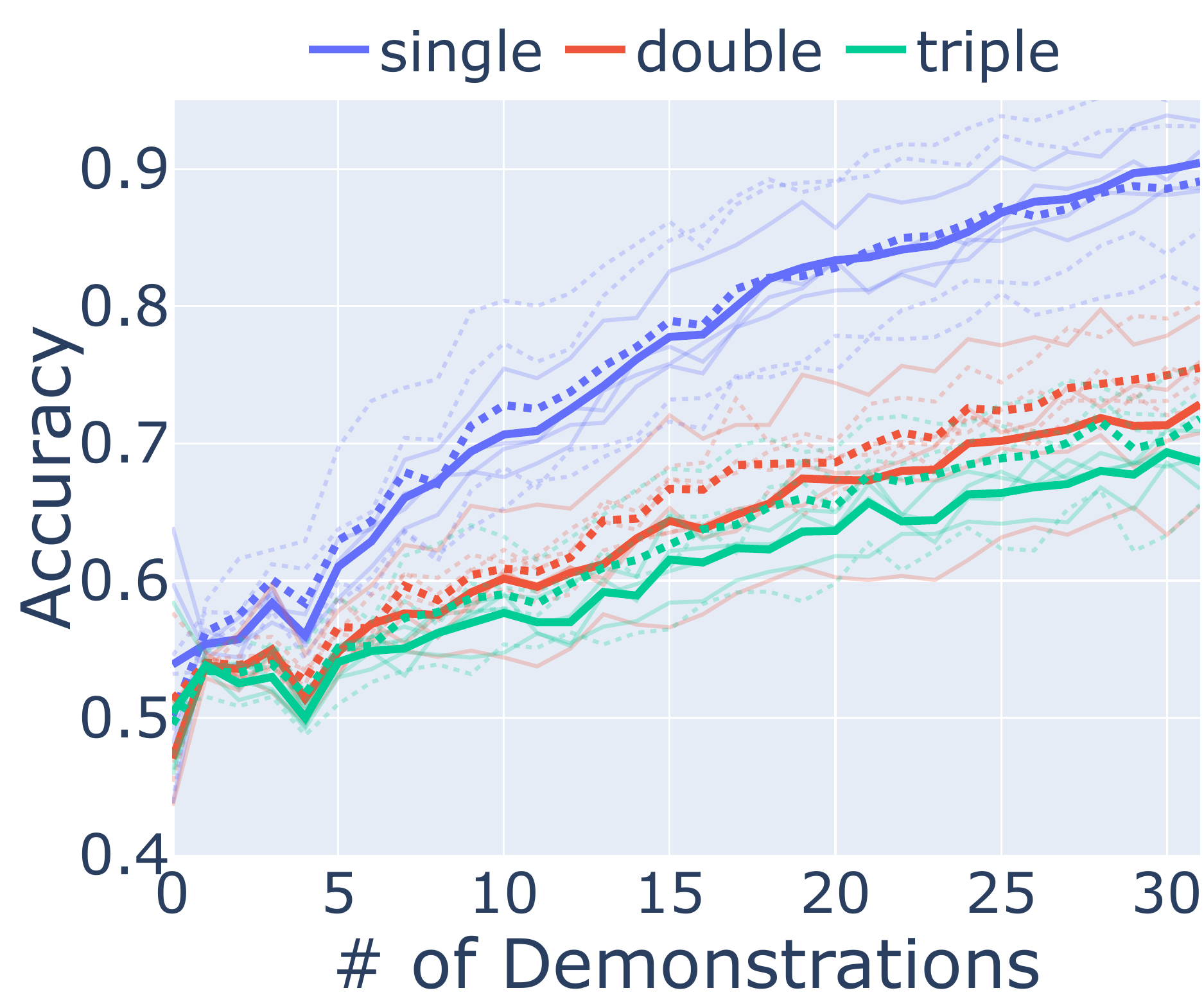}
        \caption{branching, length = $3$, $r_1, r_2$}
    \end{subfigure}

    \begin{subfigure}{0.32\textwidth}
        \centering
        \includegraphics[width=\textwidth]{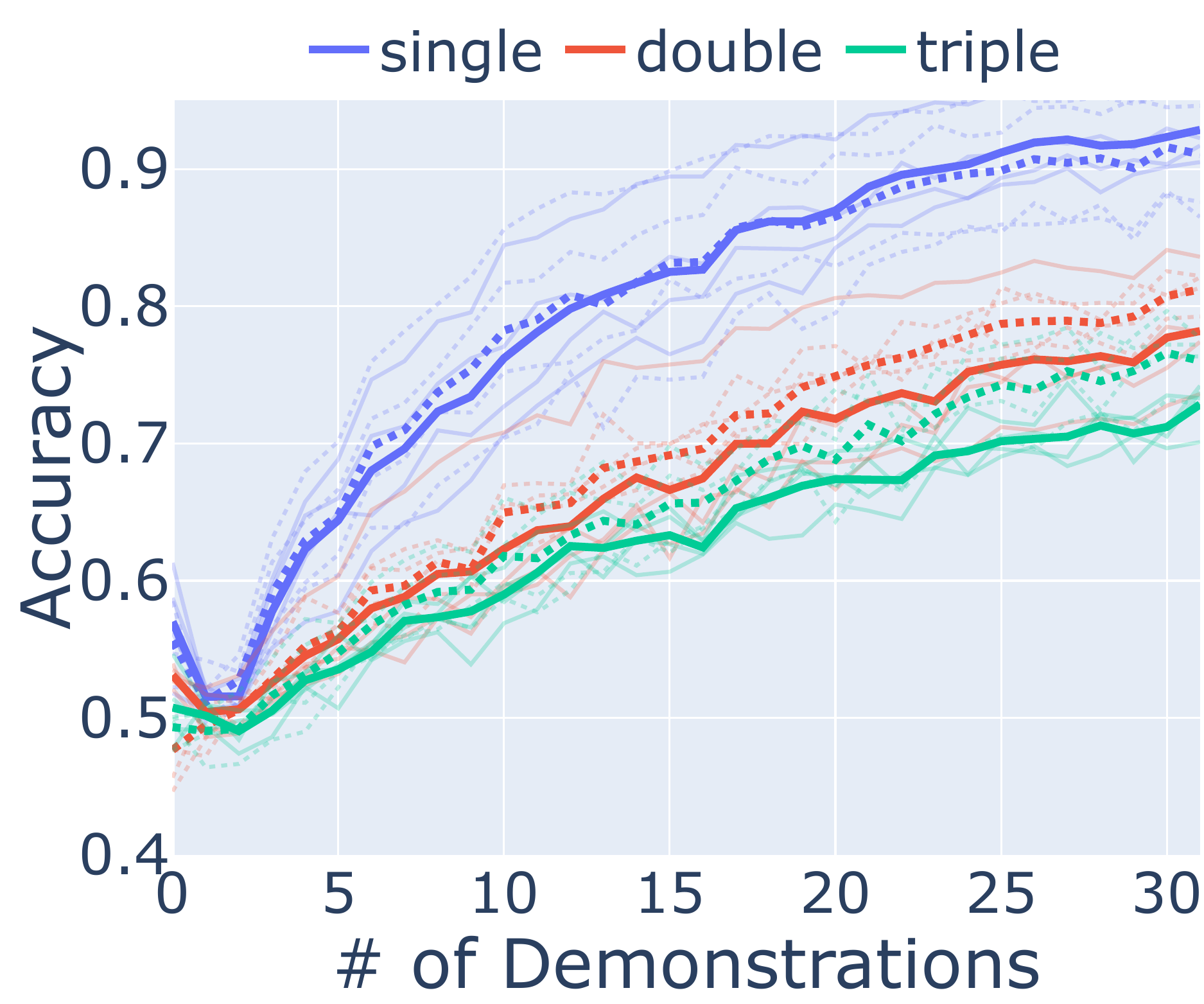}
        \caption{balanced, length = $4$, $r_3, r_4$}
    \end{subfigure}
    \begin{subfigure}{0.32\textwidth}
        \centering
        \includegraphics[width=\textwidth]{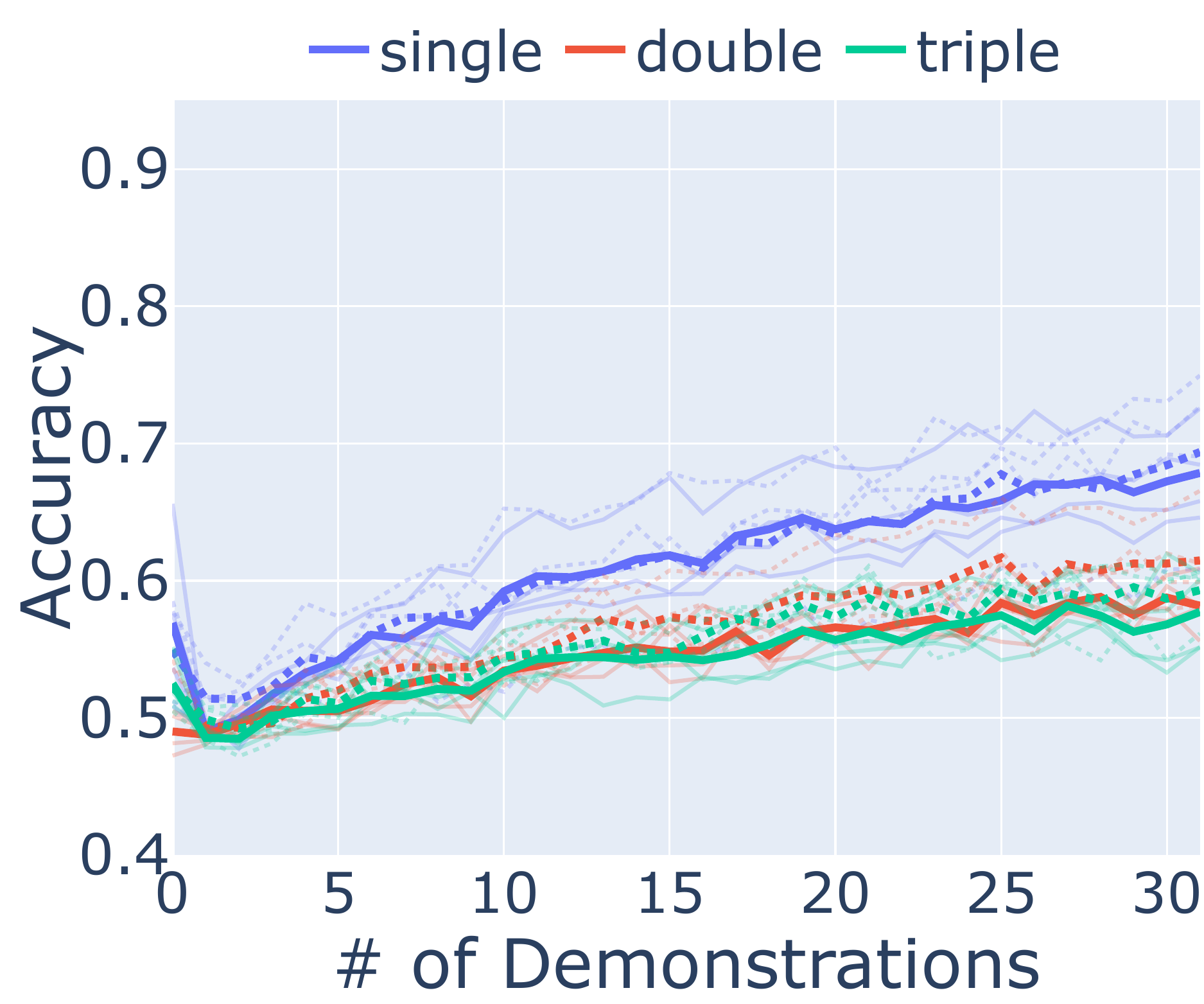}
        \caption{branching, length = $4$, $r_3, r_4$}
    \end{subfigure}
    \begin{subfigure}{0.32\textwidth}
        \centering
        \includegraphics[width=\textwidth]{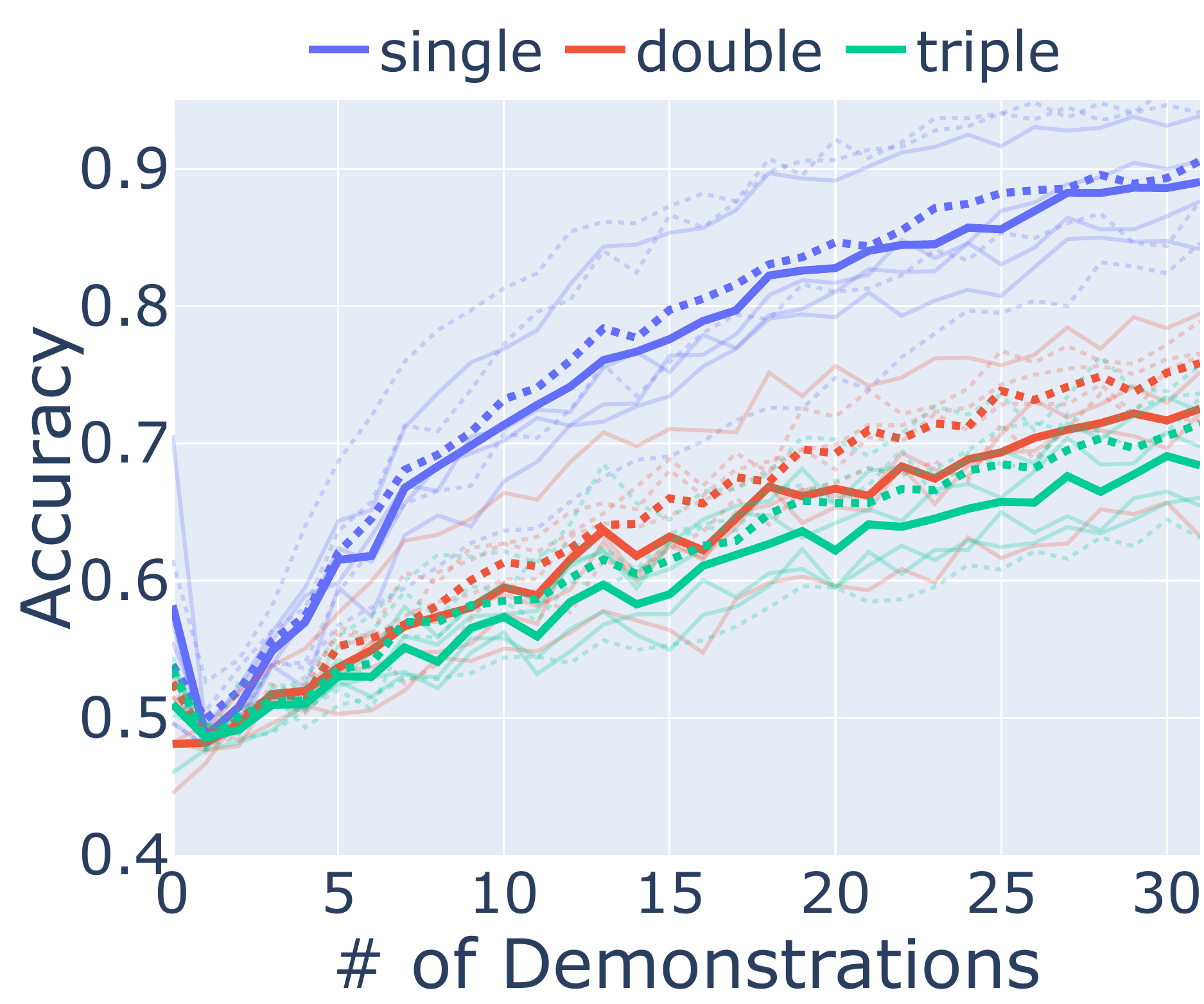}
        \caption{branching, length = $3$, $r_3, r_4$}
    \end{subfigure}
    
    \caption{
    In-context learning accuracy with Calcutec without rewriting the first step to include contain the premise of the proof.
    }
    \label{fig:icl-no-fuse}
\end{figure*}

\begin{figure*}
    \centering
    \begin{subfigure}{0.32\textwidth}
        \centering
        \includegraphics[width=\textwidth]{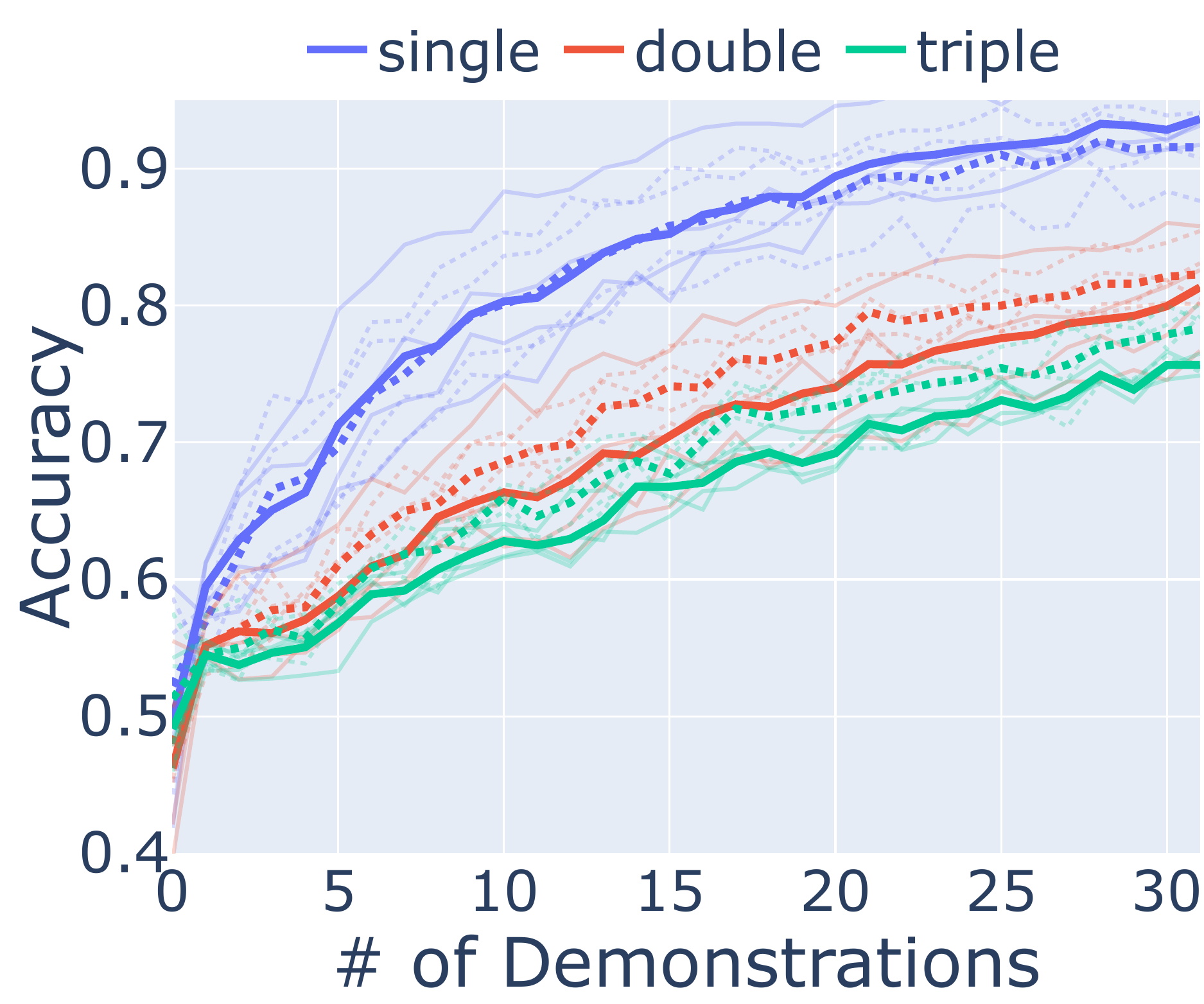}
        \caption{balanced, $r_1, r_2$, 25000 examples}
    \end{subfigure}
    \begin{subfigure}{0.32\textwidth}
        \centering
        \includegraphics[width=\textwidth]{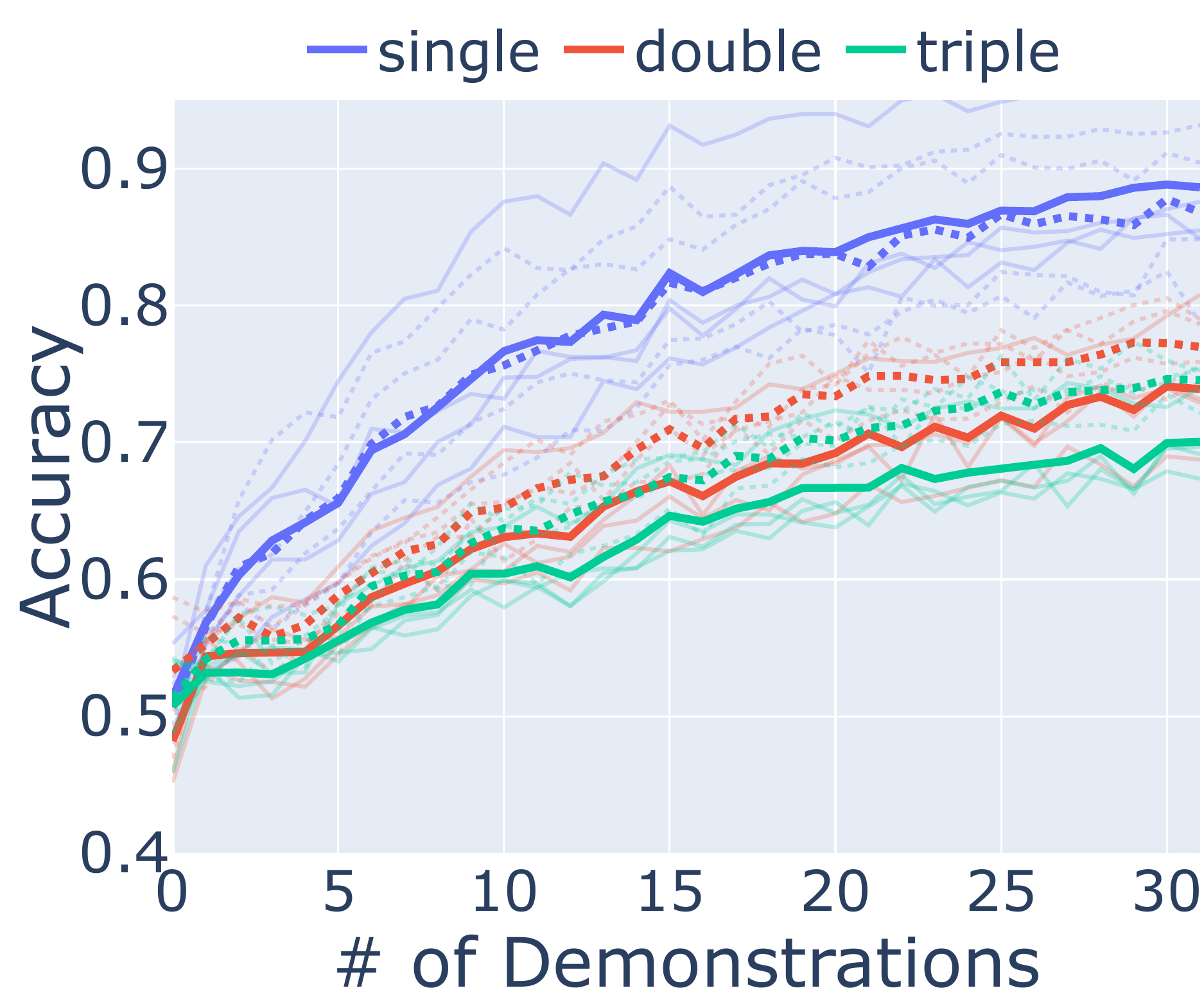}
        \caption{balanced, $r_1, r_2$, 10000 examples}
    \end{subfigure}
    \begin{subfigure}{0.32\textwidth}
        \centering
        \includegraphics[width=\textwidth]{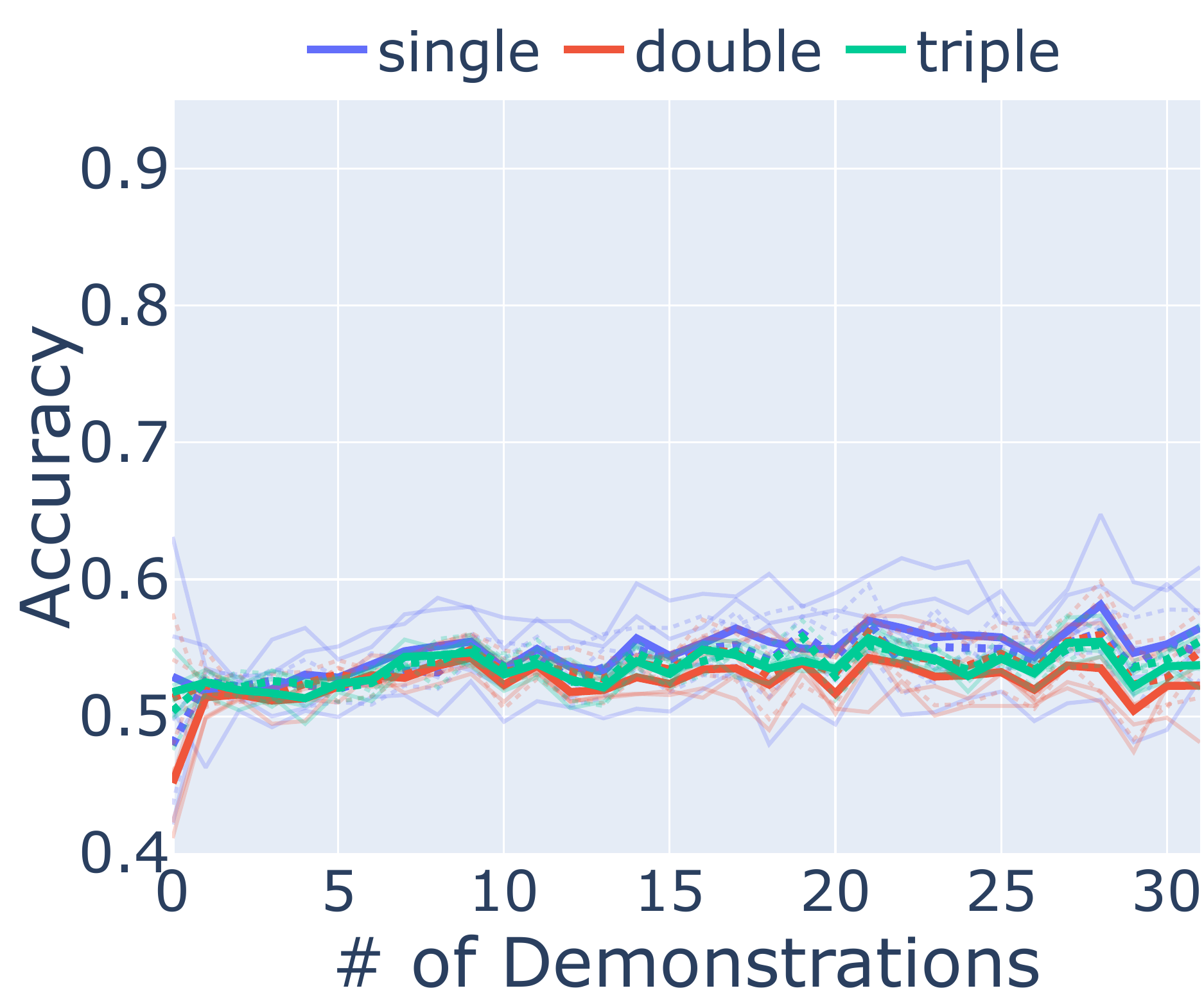}
        \caption{balanced, $r_1, r_2$, 5000 examples}
    \end{subfigure}

    \begin{subfigure}{0.32\textwidth}
        \centering
        \includegraphics[width=\textwidth]{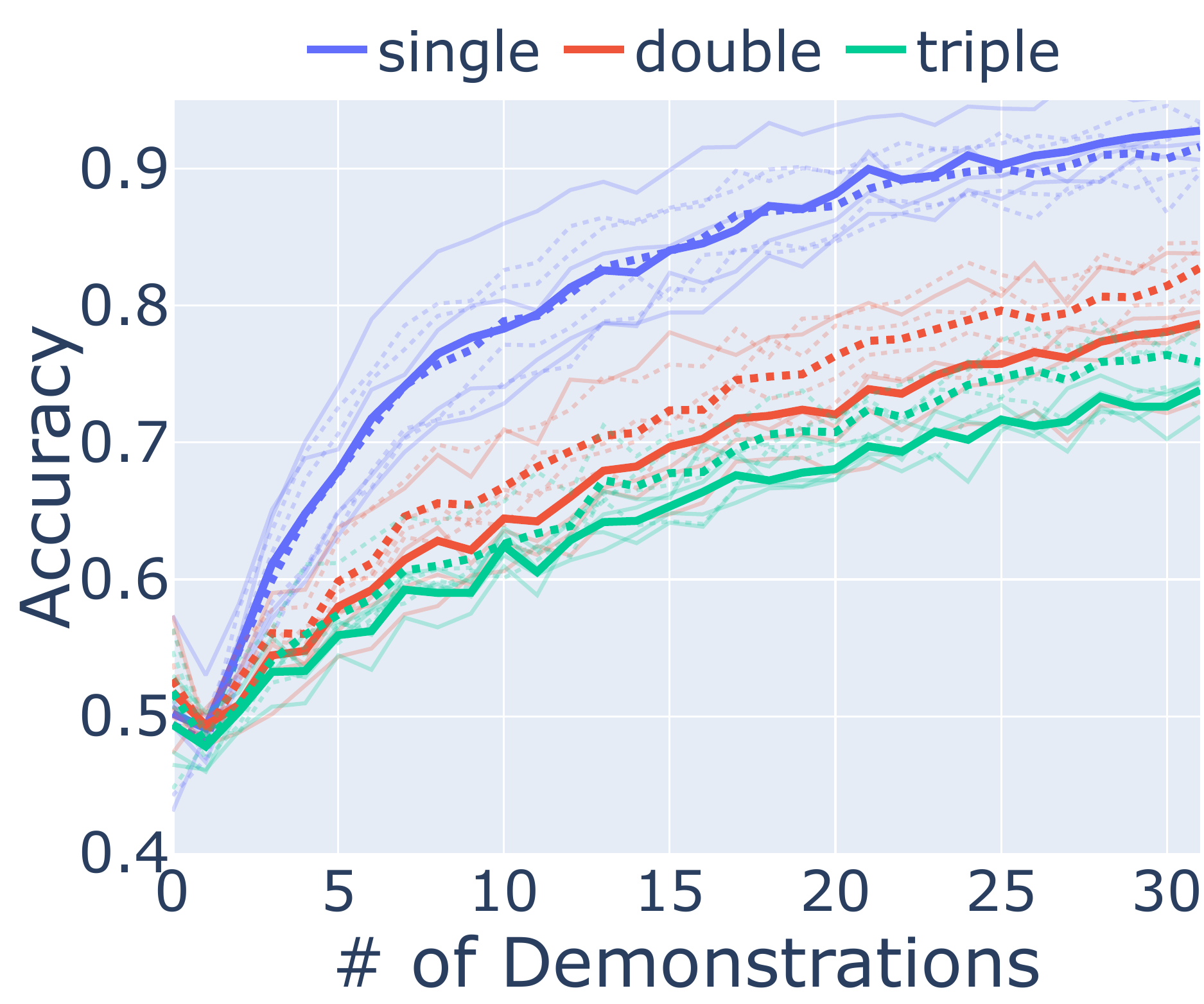}
        \caption{balanced, $r_3, r_4$, 25000 examples}
    \end{subfigure}
    \begin{subfigure}{0.32\textwidth}
        \centering
        \includegraphics[width=\textwidth]{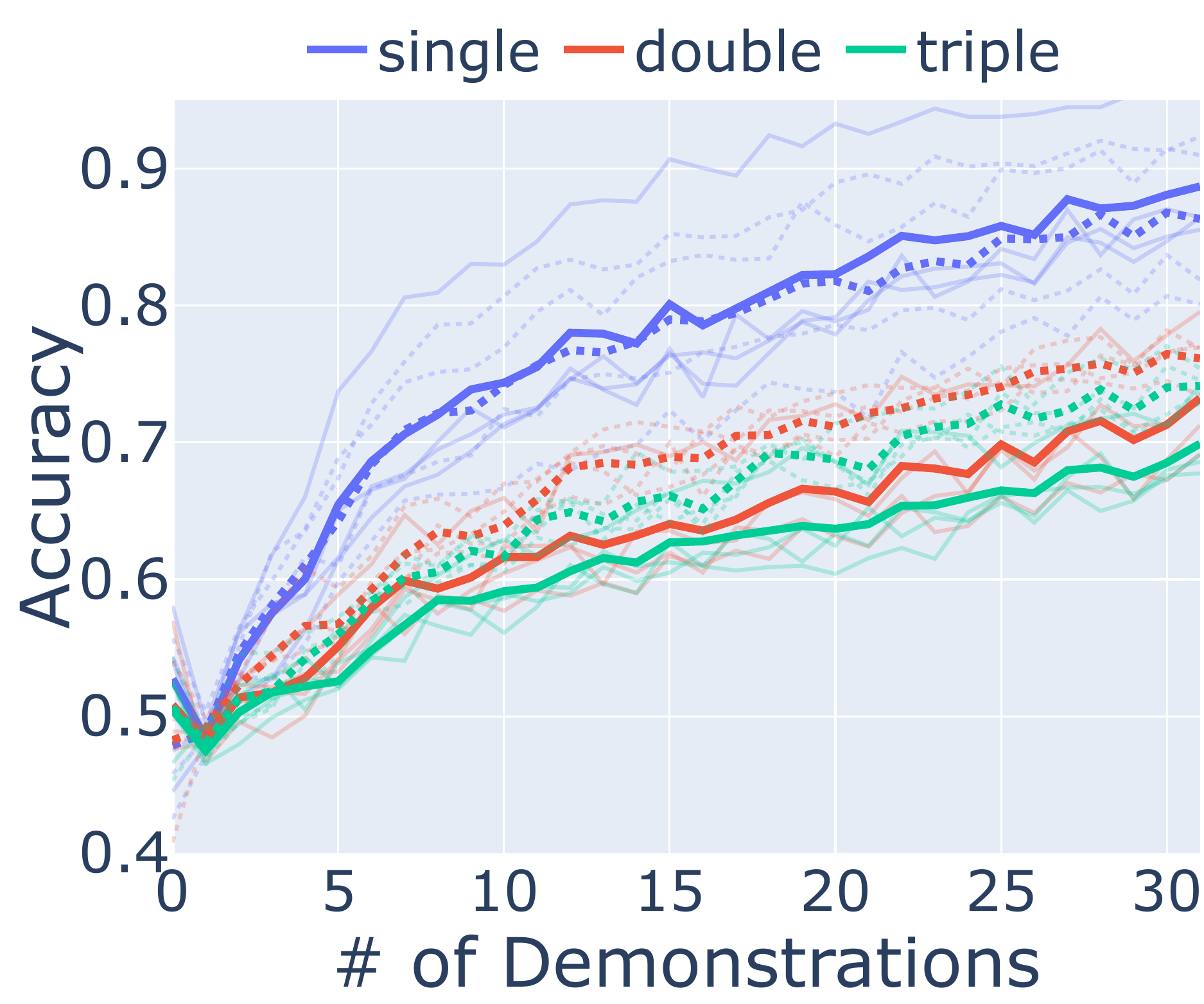}
        \caption{balanced, $r_3, r_4$, 10000 examples}
    \end{subfigure}
    \begin{subfigure}{0.32\textwidth}
        \centering
        \includegraphics[width=\textwidth]{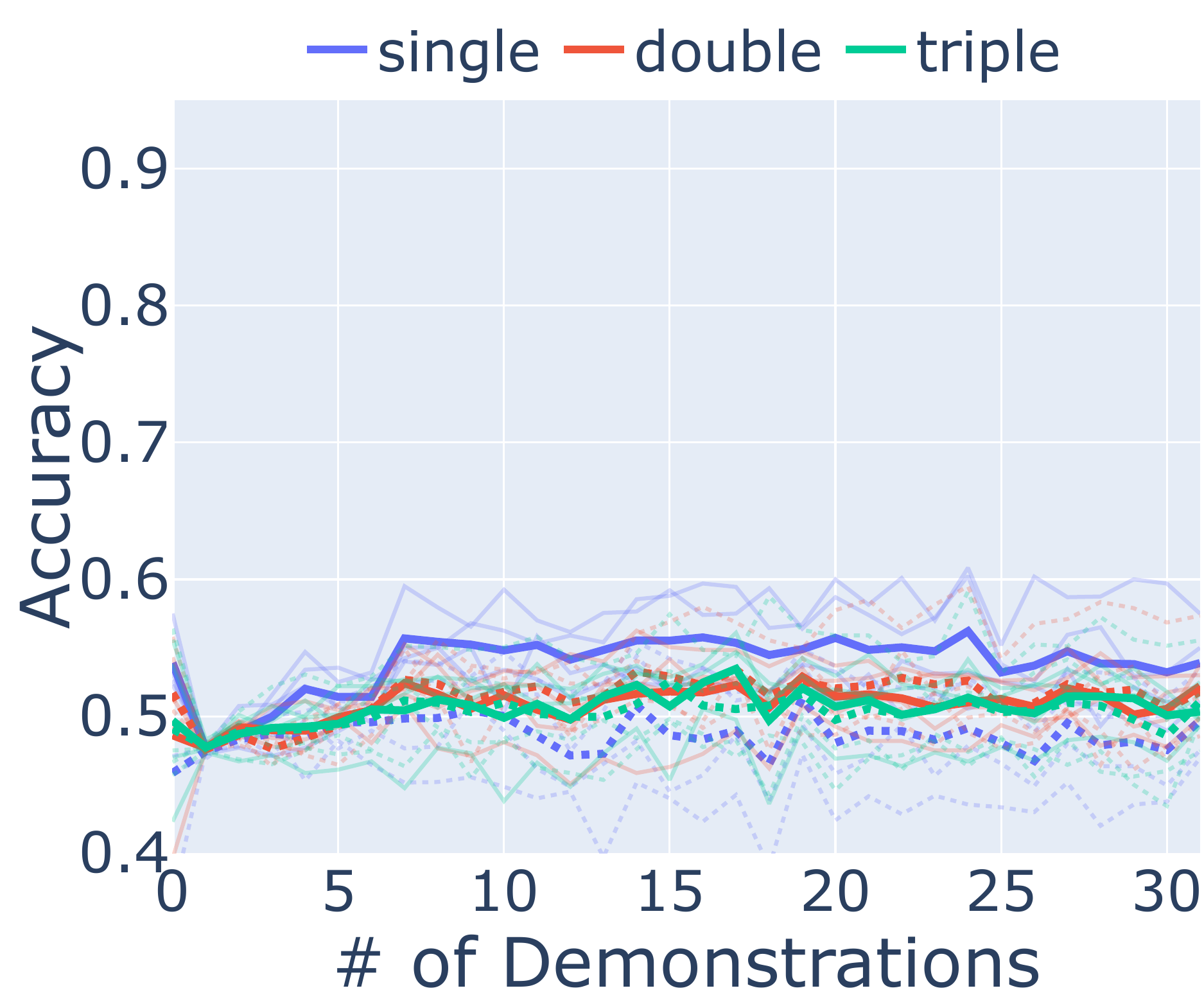}
        \caption{balanced, $r_3, r_4$, 5000 examples}
    \end{subfigure}

    \begin{subfigure}{0.32\textwidth}
        \centering
        \includegraphics[width=\textwidth]{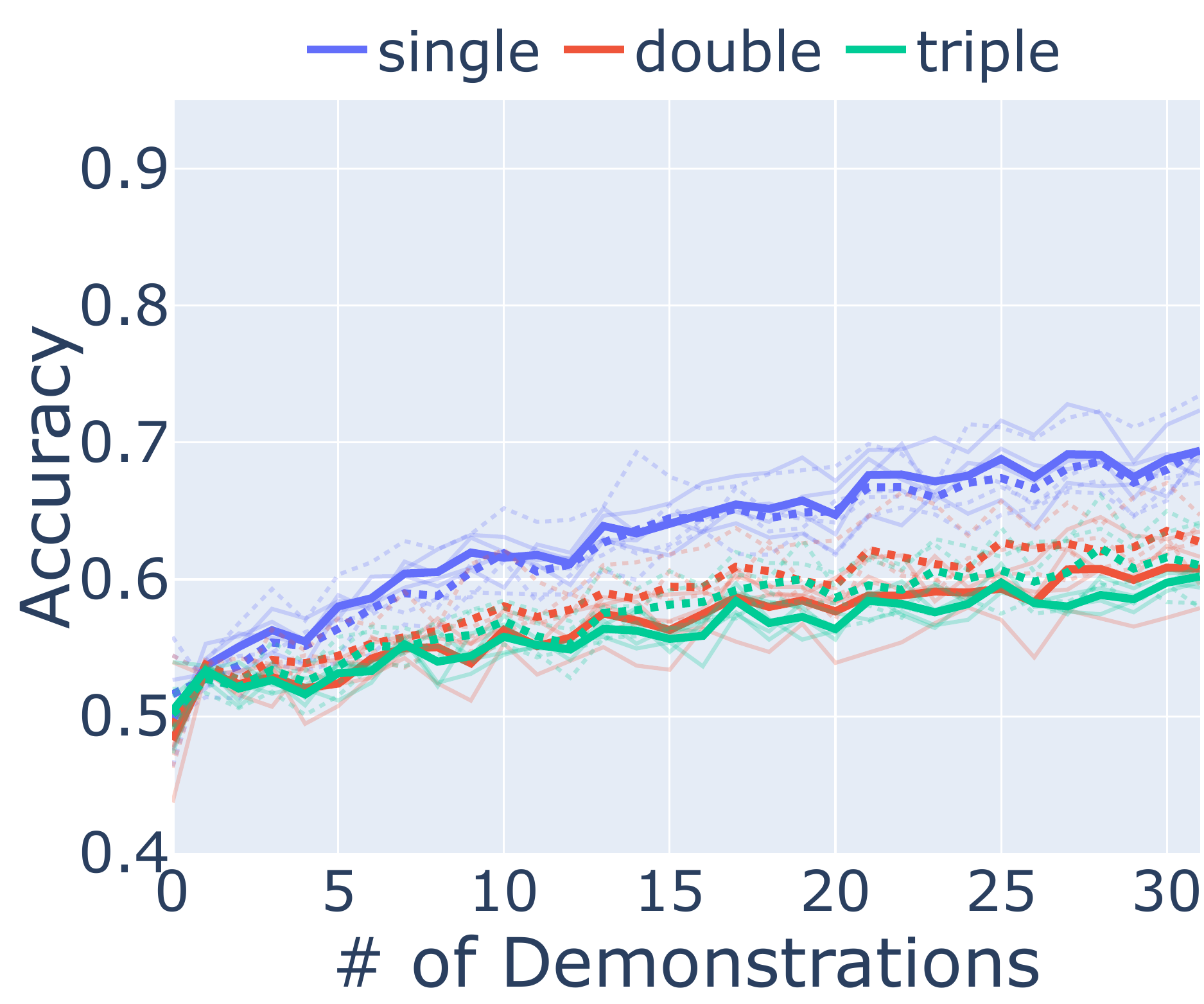}
        \caption{branching, $r_1, r_2$, 25000 examples}
    \end{subfigure}
    \begin{subfigure}{0.32\textwidth}
        \centering
        \includegraphics[width=\textwidth]{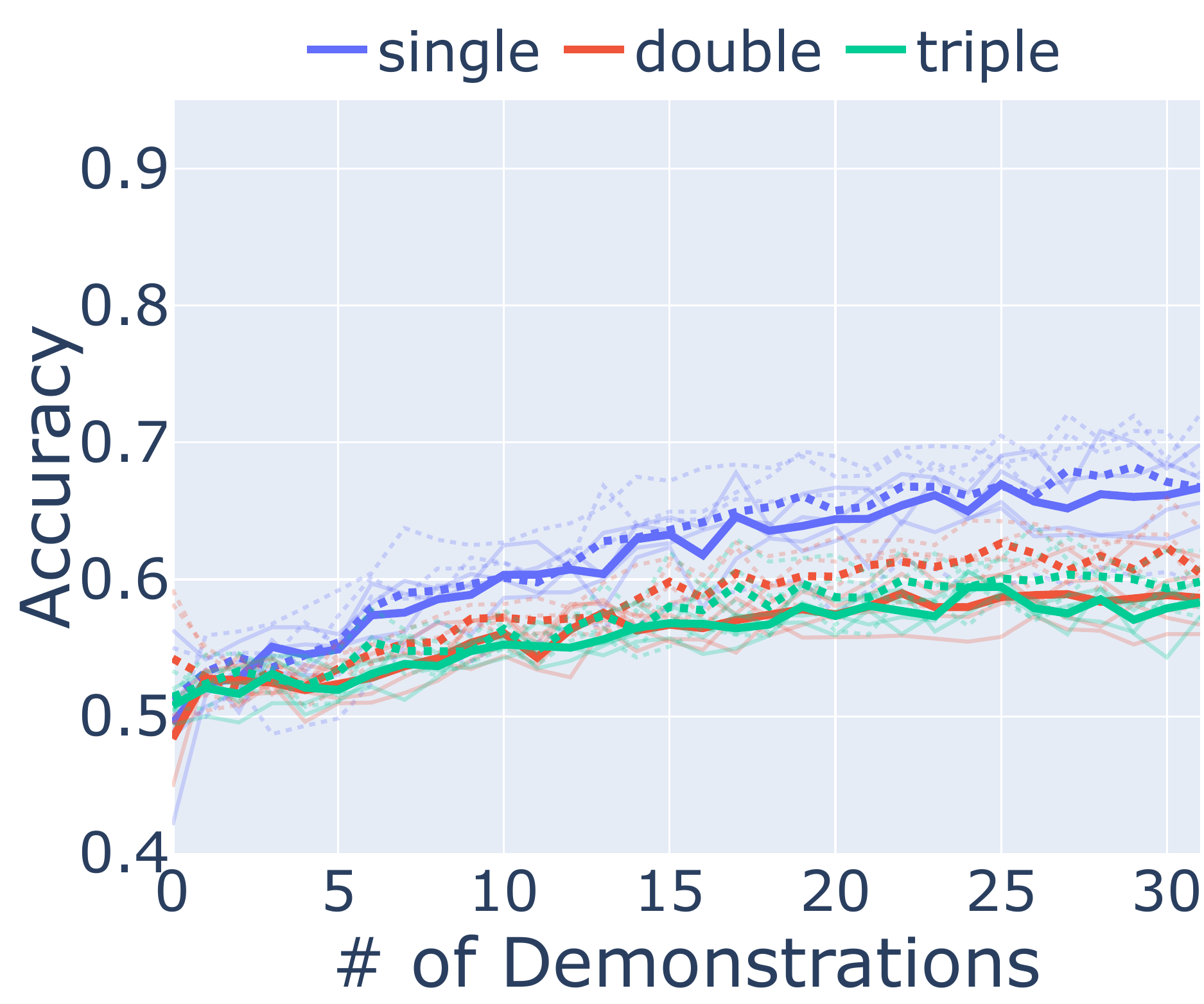}
        \caption{branching, $r_1, r_2$, 10000 examples}
    \end{subfigure}
    \begin{subfigure}{0.32\textwidth}
        \centering
        \includegraphics[width=\textwidth]{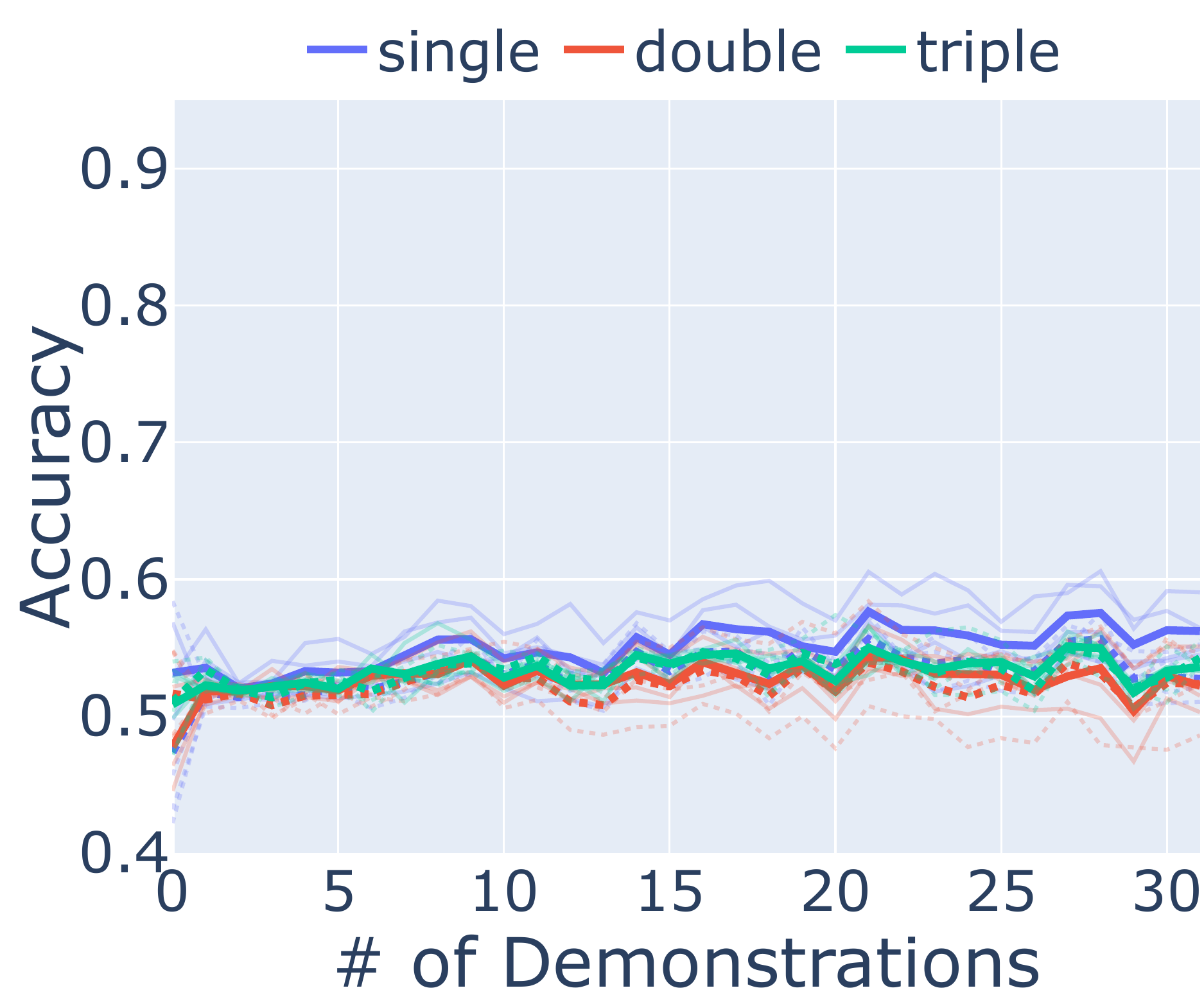}
        \caption{branching, $r_1, r_2$, 5000 examples}
    \end{subfigure}

    \begin{subfigure}{0.32\textwidth}
        \centering
        \includegraphics[width=\textwidth]{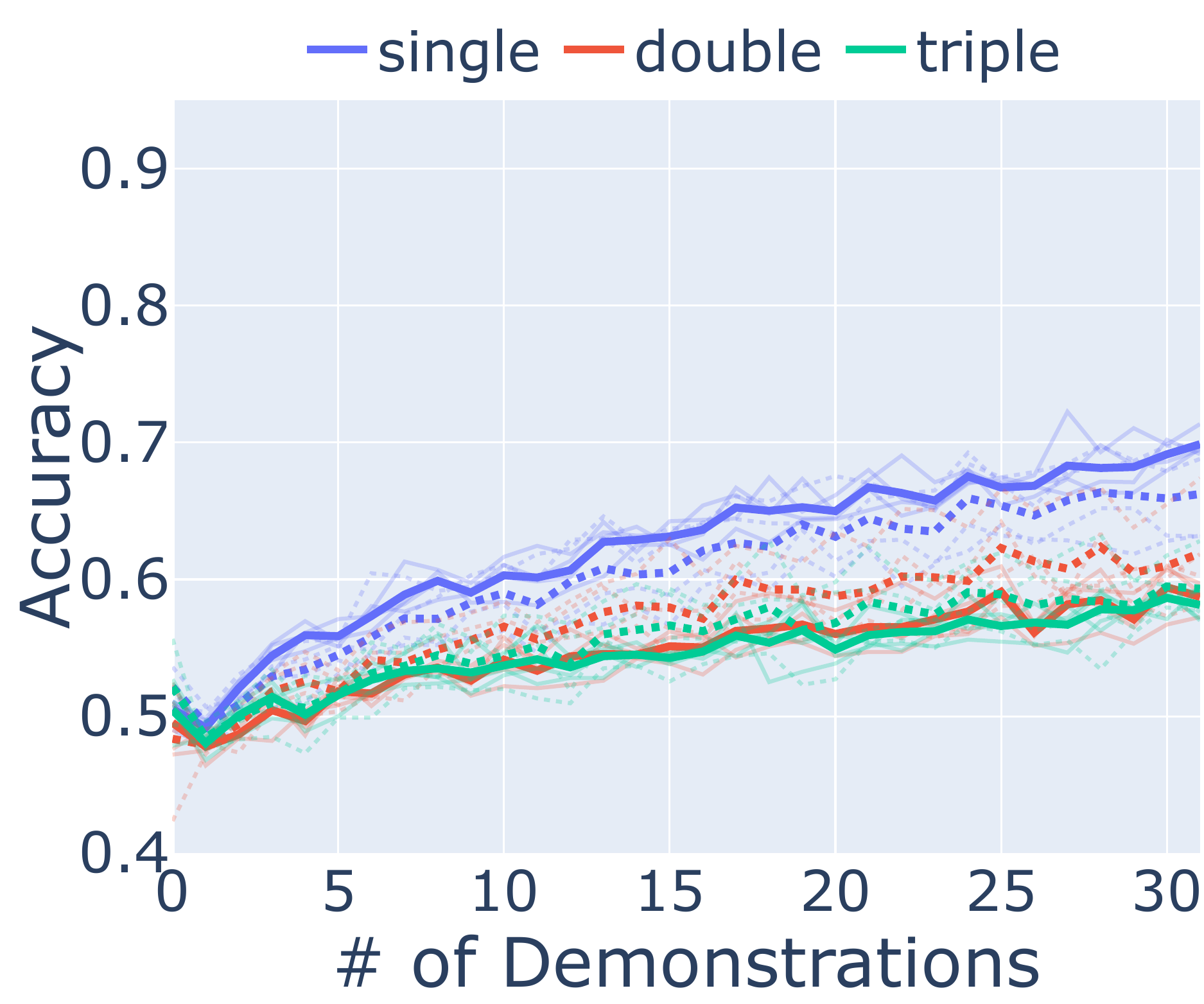}
        \caption{branching, $r_3, r_4$, 25000 examples}
    \end{subfigure}
    \begin{subfigure}{0.32\textwidth}
        \centering
        \includegraphics[width=\textwidth]{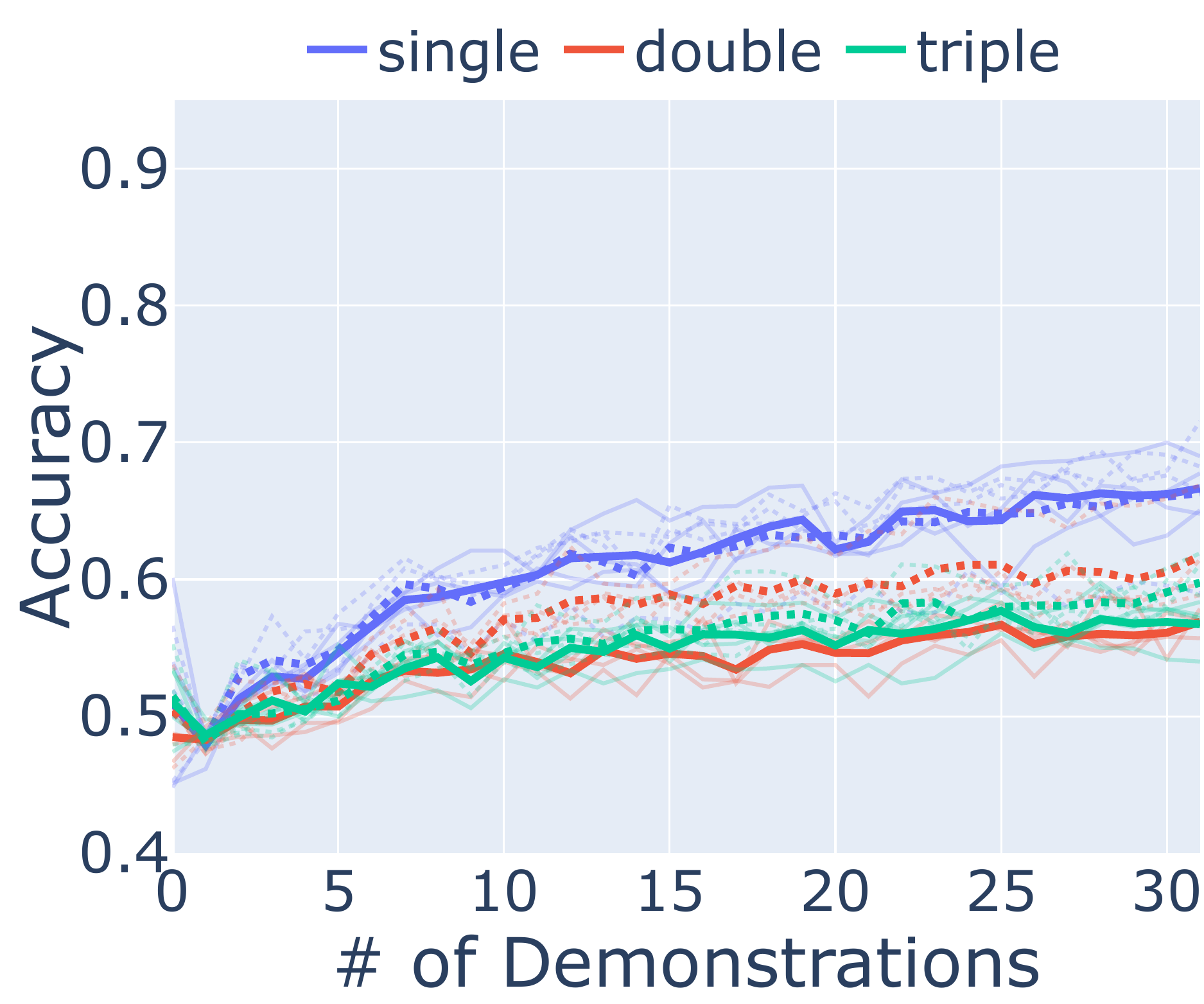}
        \caption{branching, $r_3, r_4$, 10000 examples}
    \end{subfigure}
    \begin{subfigure}{0.32\textwidth}
        \centering
        \includegraphics[width=\textwidth]{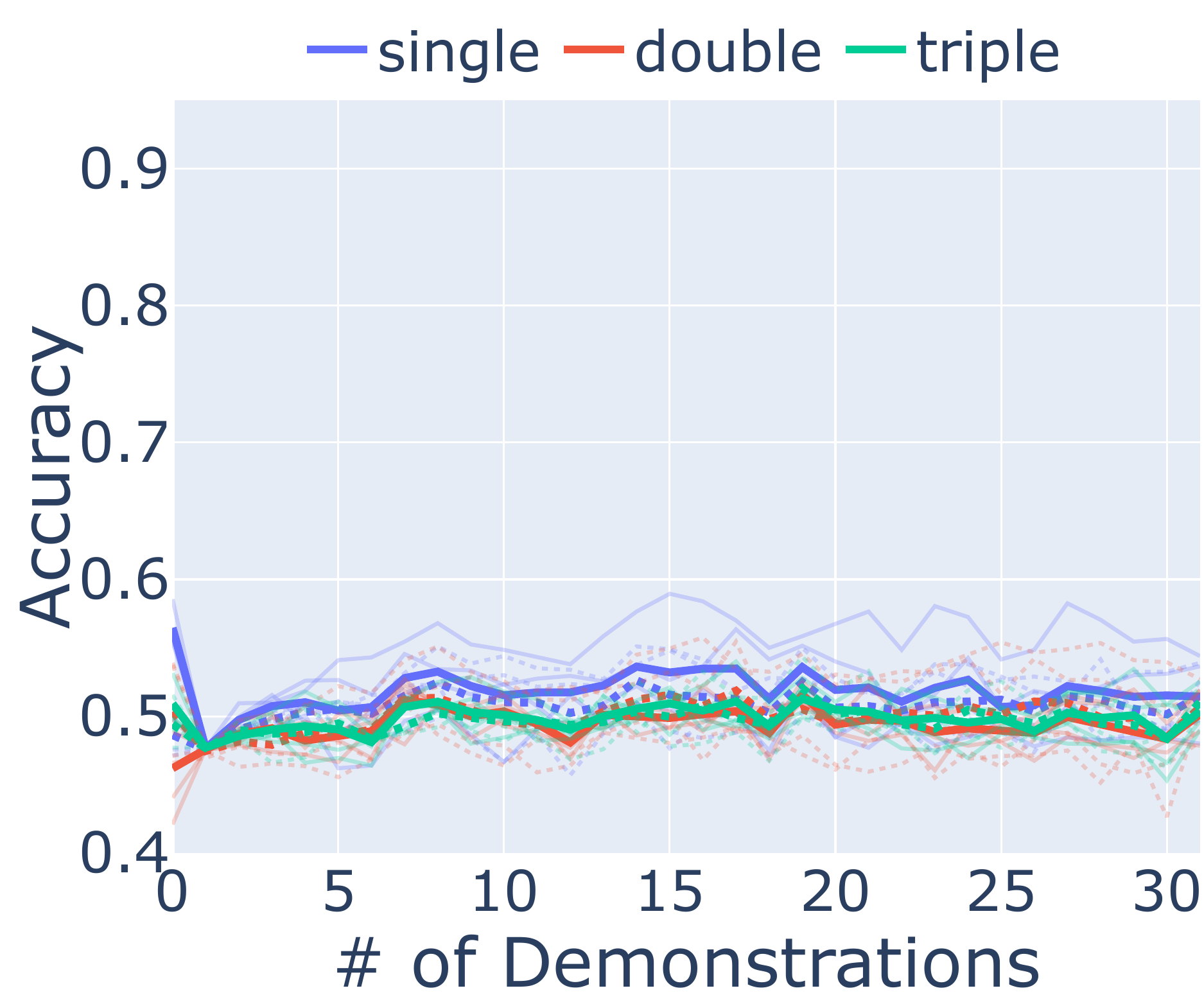}
        \caption{branching, $r_3, r_4$, 5000 examples}
    \end{subfigure}
    \caption{
    In-context learning accuracy with different sizes of 
    }
    \label{fig:icl-no-fuse}
\end{figure*}

\begin{figure*}
    \centering
    \begin{subfigure}{0.32\textwidth}
        \centering
        \includegraphics[width=\textwidth]{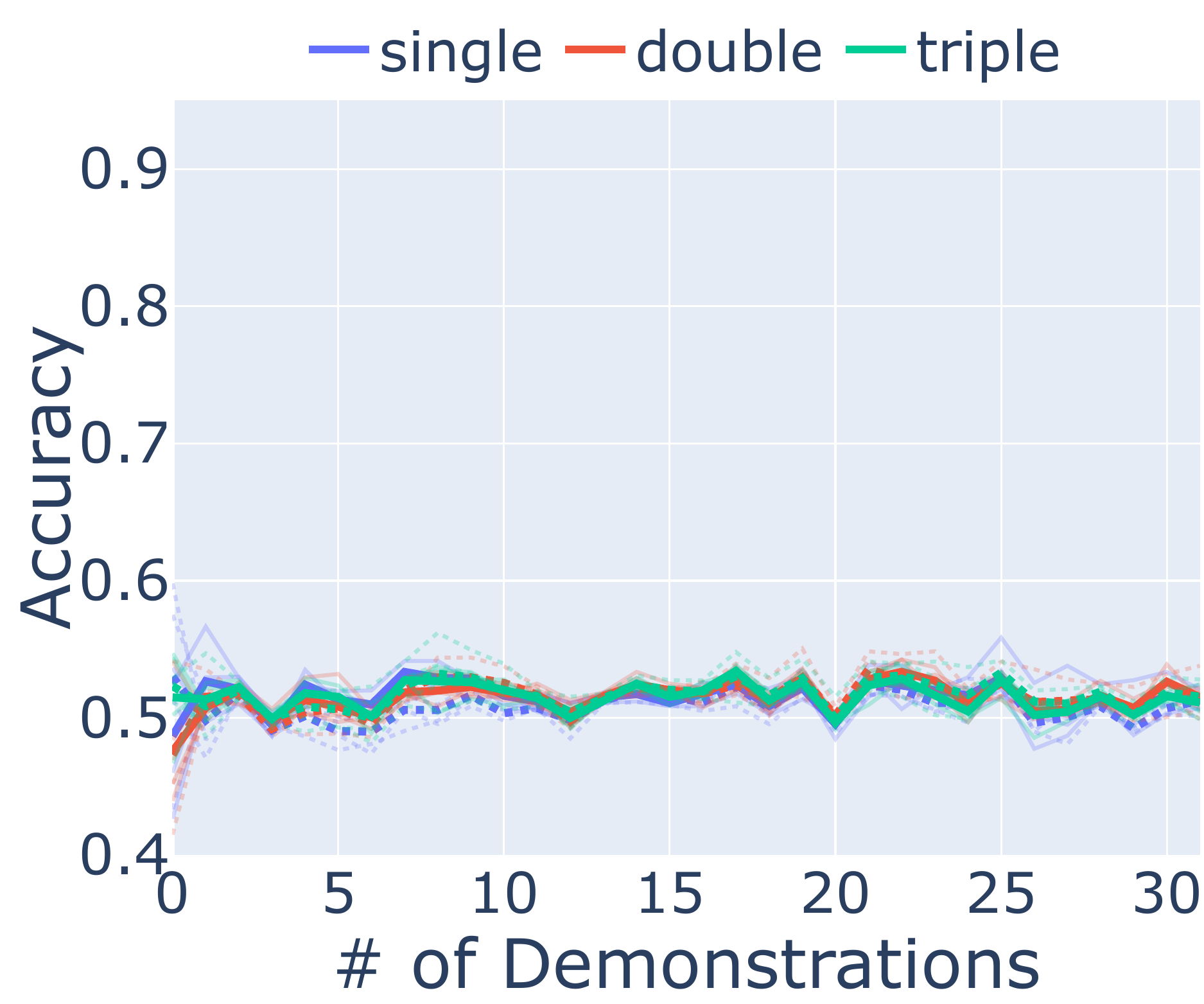}
        \caption{balanced, length = $4$, 3 layers}
    \end{subfigure}
    \begin{subfigure}{0.32\textwidth}
        \centering
        \includegraphics[width=\textwidth]{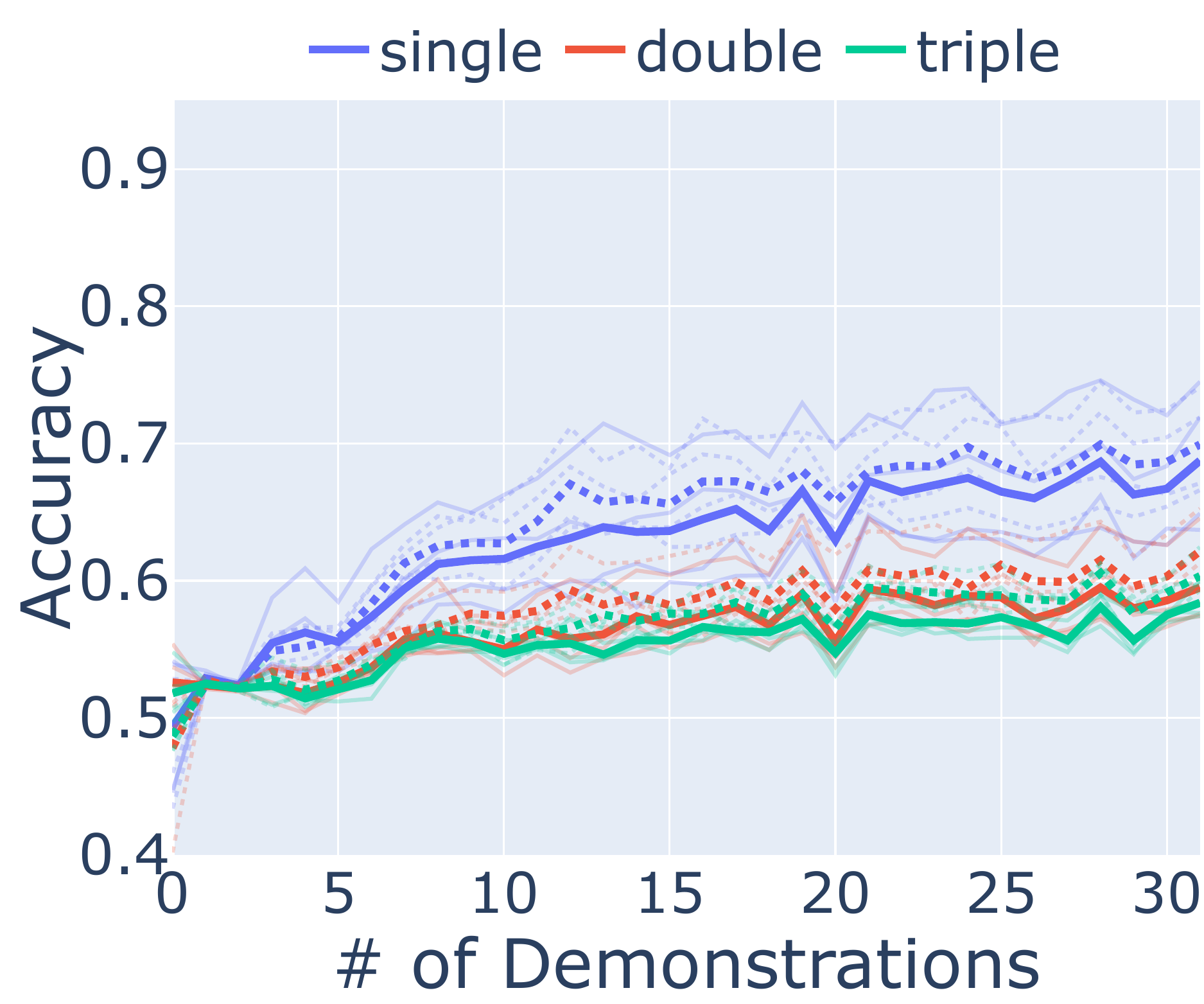}
        \caption{balanced, length = $4$, 4 layers}
    \end{subfigure}
    \begin{subfigure}{0.32\textwidth}
        \centering
        \includegraphics[width=\textwidth]{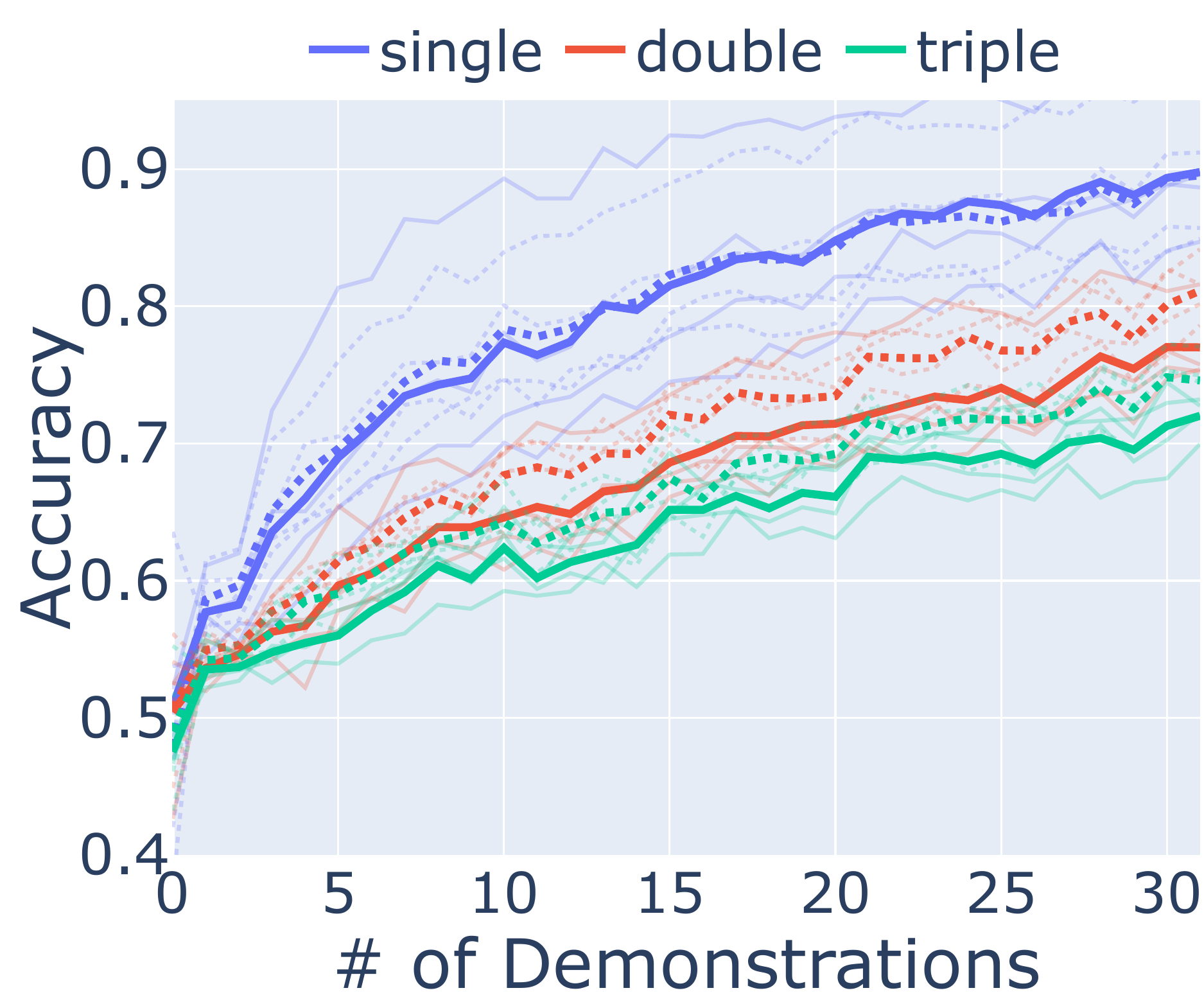}
        \caption{balanced, length = $4$, 5 layers}
    \end{subfigure}

    \begin{subfigure}{0.32\textwidth}
        \centering
        \includegraphics[width=\textwidth]{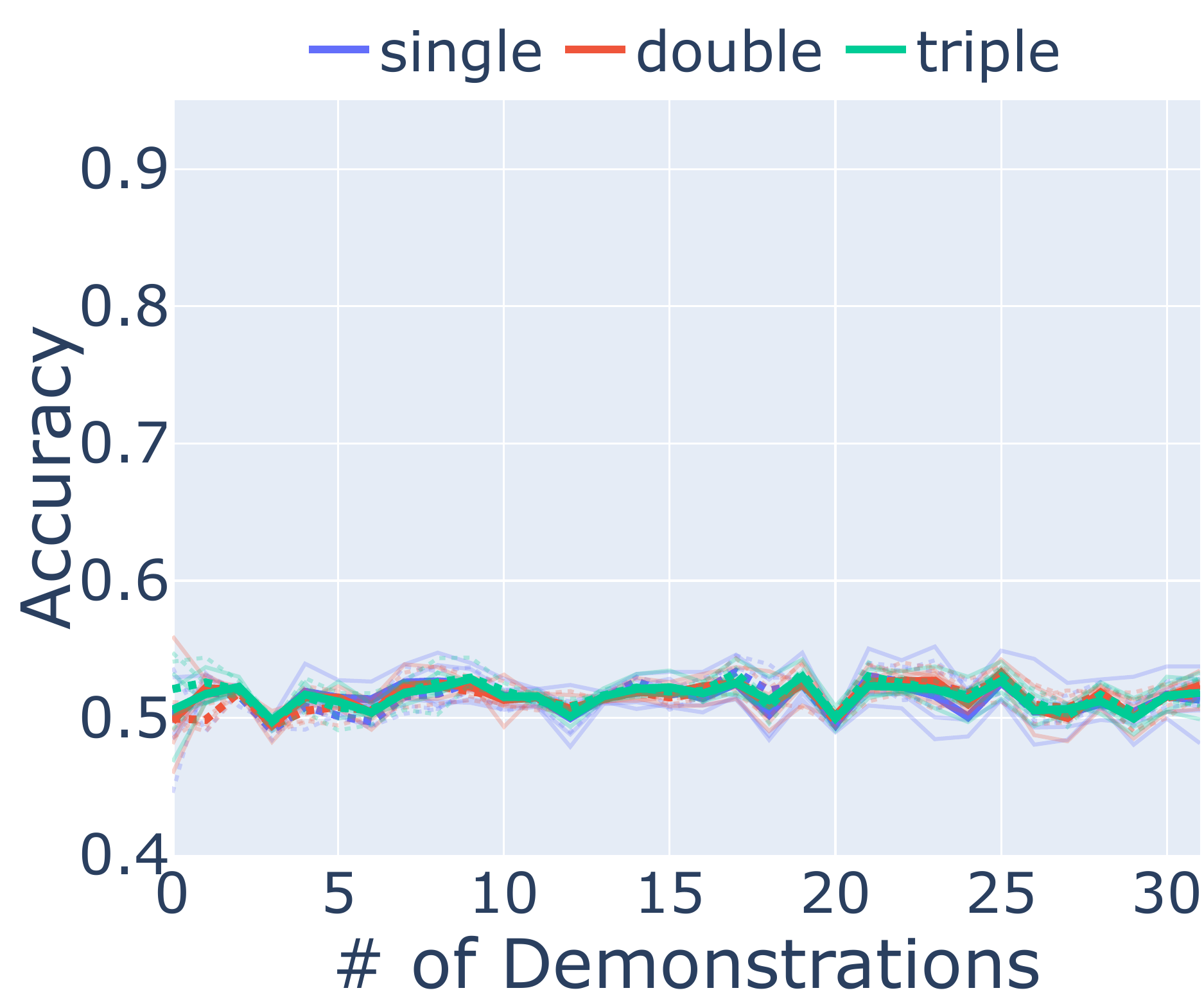}
        \caption{branching, length = $4$, 3 layers}
    \end{subfigure}
    \begin{subfigure}{0.32\textwidth}
        \centering
        \includegraphics[width=\textwidth]{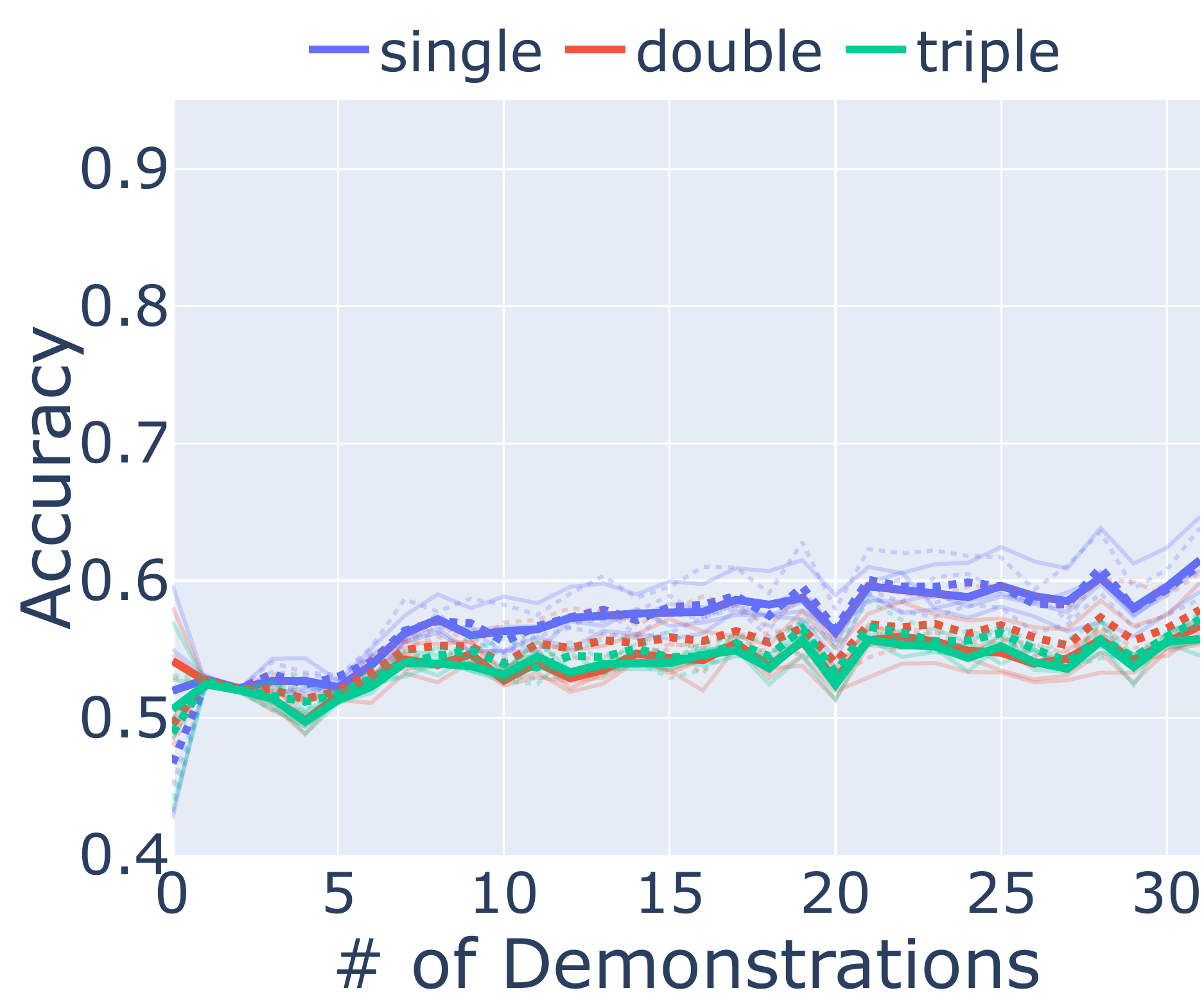}
        \caption{branching, length = $4$, 4 layers}
    \end{subfigure}
    \begin{subfigure}{0.32\textwidth}
        \centering
        \includegraphics[width=\textwidth]{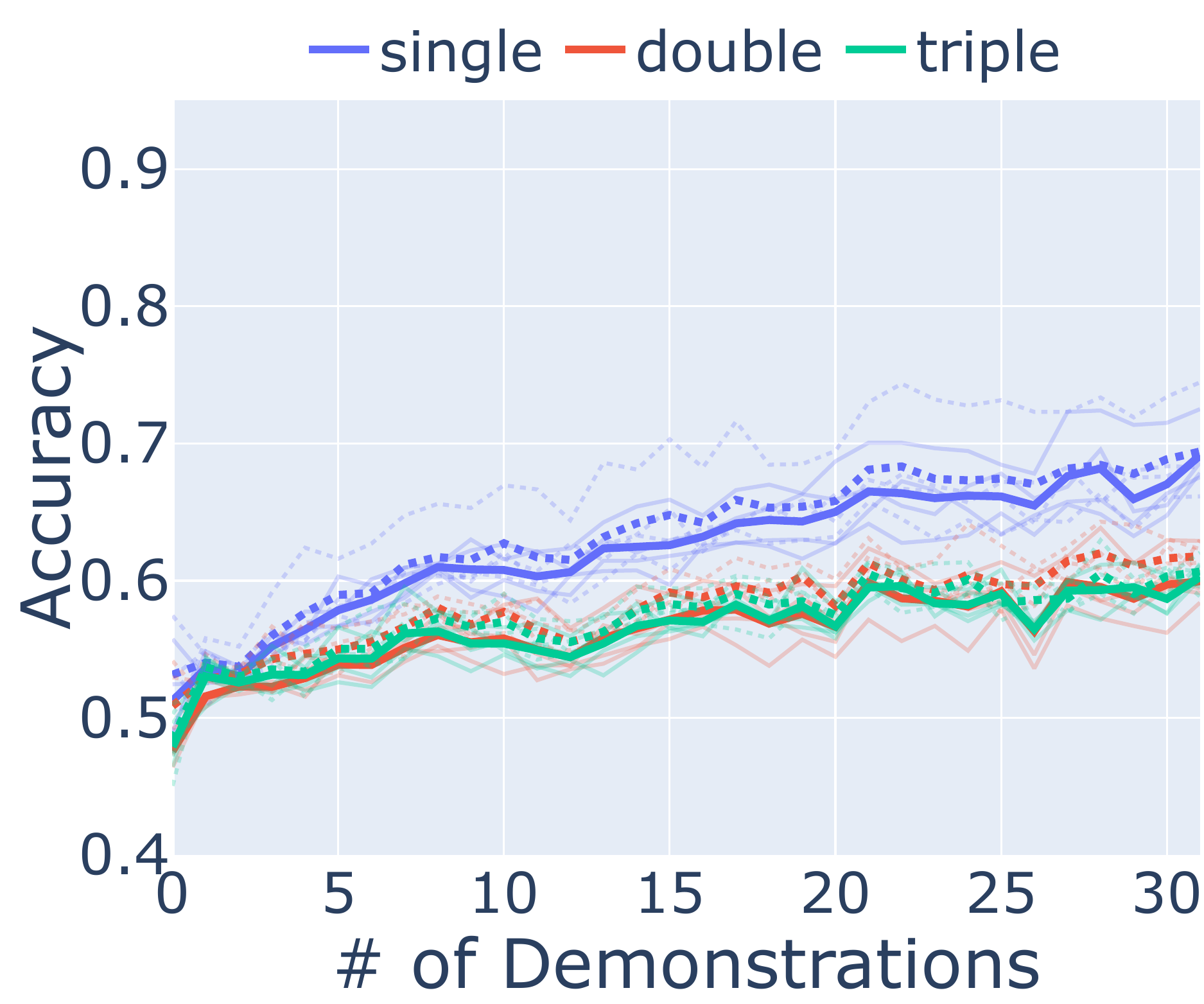}
        \caption{branching, length = $4$, 5 layers}
    \end{subfigure}
    
    \caption{The in-context learning performance when using models with different model depths.}
    \label{fig:icl-depth}
\end{figure*}

\begin{table*}[]
    \centering
\begin{tabular}{c | c  c | c c | c c | c c}
  \toprule
  & \multicolumn{4}{c|}{Branching} & \multicolumn{4}{c}{Balanced} \\
  & \multicolumn{2}{c|}{$r_1, r_2$} & \multicolumn{2}{c|}{$r_3, r_4$} & \multicolumn{2}{c|}{$r_1, r_2$} & \multicolumn{2}{c}{$r_3, r_4$} \\
  Task    & ICL  & CoT  & ICL  & CoT & ICL  & CoT  & ICL  & CoT  \\
  \midrule
  Single & 57.1 & 91.7 & 55.6 & 92.0 & 68.5 & 89.8 & 64.9 & 90.3 \\
  Double & 53.5 & 76.3 & 51.1 & 77.1 & 58.5 & 76.1 & 56.2 & 75.8 \\
  Triple & 53.0 & 73.0 & 51.7 & 73.4 & 57.0 & 68.2 & 54.2 & 67.0 \\ 
  \bottomrule
\end{tabular}
    \caption{The 4-shot accuracy of in-context learning (ICL) versus chain-of-thoughts (CoT).}
    \label{tab:cot-all}

\end{table*}

\begin{table*}[]
\centering
\begin{tabular}{c|ccc|ccc|ccc|ccc}
\toprule
\multicolumn{1}{l}{} & \multicolumn{6}{|c|}{branching}                                 & \multicolumn{6}{c}{balance}                                   \\
\multicolumn{1}{l}{} & \multicolumn{3}{|c|}{$r_1,r_2$} & \multicolumn{3}{c|}{$r_3,r_4$} & \multicolumn{3}{c|}{$r_1,r_2$} & \multicolumn{3}{c}{$r_3,r_4$} \\
\#-shot              & 2        & 4        & 6       & 2        & 4        & 6       & 2        & 4        & 6       & 2        & 4        & 6       \\
\midrule
single               & 49.1     & 89.5     & 84.0    & 59.5     & 92.0     & 86.9    & 58.5     & 86.2     & 85.5    & 50.3     & 90.3     & 89.9    \\
double               & 47.8     & 71.4     & 75.6    & 53.1     & 77.1     & 86.1    & 49.1     & 70.4     & 69.0    & 50.5     & 75.8     & 79.4    \\
triple               & 46.7     & 65.7     & 70.7    & 50.6     & 73.4     & 79.4    & 46.0     & 60.2     & 61.4    & 49.8     & 67.0     & 70.4   \\
\bottomrule
\end{tabular}
\caption{The CoT performance with 2, 4, or 6 examples.}
\end{table*}

\begin{algorithm*}
    \begin{algorithmic}
    \State Sample $N = \{ n_1, n_2, n_3, n_4, n_5 \}$ from either $\{0, 1, \cdots, 9\}$ or $\{6, 7, \cdots, 15\}$.
    \State Sample four operators $O = \{ o_1, o_2, o_3, o_4 \}$ from $\{+, \times \}$.
    \State Initialize an empty list $S$ for the storing the steps.
    
    \While{$\|N\| > 1$}
        \State Randomly sample an index $i$ of an operator from $O$. If $\times \in O$, then make sure that the sampled index corresponds to $\times$. 
        \State Create a step $s \leftarrow [n_i, n_{i+1}, n_i o_i n_{i+1} \mod 16]$ and append $s$ to $S$.
        \State Remove $n_i$ and $n_{i+1}$ from $N$.
        \State Remove $o_i$ from $O$.
        \State Insert $n_i o_i n_{i+1} \mod 16$ into $N$ at position $i$.
    \EndWhile

    \State return $S$.
    \end{algorithmic}
    \caption{Pseudo code for the generation process of a training example for the arithmetic task.}
    \label{alg:summation-train}
\end{algorithm*}

\begin{algorithm*}
    \begin{algorithmic}
    \State Given a list of steps $S = \langle s_1, s_2, \cdots, s_4 \rangle$. (Each step is represented with a list, where the last number in the list is the right-hand side of an equation.)
    \State Initialize n list $S' = \langle s_1 \rangle$ for the storing the randomly merged steps.
    \For{$i \in \{ 2, 3, 4, 5 \}$}
        \State Uniformly sample a number $r$ from $\{0, 1\}$.
        \If{$r = 0$}
            \State Append $s_i$ to $S'$.
        \Else
            \If{$S'[-1][-1] \in s_i[:-1]$}
                \State Remove $S'[-1][-1]$ from $s_i$.
                \State $S'[-1] \leftarrow \text{concatenate}(S'[-1][:-1], s_i)$
            \EndIf
        \EndIf
    \EndFor
    \State return $S'$.
    \end{algorithmic}
    \caption{Pseudo code for the generation process of a training example for the arithmetic task.}
    \label{alg:summation-merge}
\end{algorithm*}

\section{Details of the Arithmetic Task}
\label{sec:summation-details}

\subsection{Training Data}

When generating a training example, we first sample five numbers.
These five numbers are either from $\{0, \cdots ,9 \}$ or $\{6, \cdots, 15\}$.
We then repeat the following process until there is only one number remaining in the list:
We randomly pick two neighbor numbers from the list of five numbers and sum them and insert the result back into the list.
This process results in a sequence of four steps, each of which is an equation that adds two numbers and produces one number.
We outline the algorithm in Algorithm~\ref{alg:summation-train}.

After we generate the four steps, we randomly merge steps into one step.
We iterate through the steps.
If the left-hand side current step contains the result of a previous step, we merge these two step into a single step at probability 0.5.
For example, if the two steps are ``1 + 2 = 3 . 3 + 4 = 7.'' 
We merge these two steps as ``1 + 2 + 4 = 7.'' 
We outline the algorithm in Algorithm~\ref{alg:summation-merge}.

\subsection{Hyper-parameters}

We generate 80000 examples for training, 10000 for validation and 10000 for evaluation in the \textit{seen} setup.
For the \textit{unseen} setup, we generate $6^5 = 7776$ examples.
We train five Transformer models of the GPT-2-small architecture with five random seeds for 15 epochs.
We use batch size 64 and the default training hyper parameters in HuggingFace \texttt{transformers-4.46.3}, i.e.,
with learning rate \texttt{5e-5}, warm-up ratio \texttt{0}, AdamW optimizer.

\section{Dataset License}
\begin{itemize}
    \item SST-2: MIT
    \item MR: unavailable
    \item CR: unavailable
    \item Subj: unavailable
\end{itemize}

\end{document}